\documentclass[11pt]{amsart}

\usepackage{eulervm}
\usepackage{eufrak}

\usepackage[usenames,dvipsnames,svgnames,table]{xcolor}
\usepackage{amssymb,amsfonts, amsrefs}
\usepackage{amsmath,amsthm}
\usepackage{pgf, tikz}
\usetikzlibrary{arrows.meta, positioning, shapes.geometric, calc, decorations.pathreplacing}
\usepackage{enumitem}
\usepackage{graphicx}
\usepackage{booktabs}
\usepackage{dsfont}
\usepackage[pdfdisplaydoctitle,colorlinks,urlcolor=blue,linkcolor=blue,citecolor=blue]{hyperref}
\usepackage{algorithm}
\usepackage{algorithmic}

\newcommand{\e}{\epsilon}
\renewcommand{\d}{\delta}
\renewcommand{\a}{\alpha}
\renewcommand{\b}{\beta}

\newcommand\1{{\mathds{1}}}
\newcommand\PP{{\mathbb{P}}}
\newcommand\E{{\mathbb{E}}}
\newcommand\N{{\mathbb{N}}}
\newcommand\D{{\mathcal{D}}}

\newcommand\A{{\mathcal{A}}}

\newcommand\C{{\mathcal{C}}}

\newcommand\X{{\mathcal{X}}}

\DeclareMathOperator{\Var}{Var}

\DeclareMathOperator{\MSE}{MSE}
\DeclareMathOperator{\Bias}{Bias}

\newtheorem{theorem}{Theorem}[section]
\newtheorem{proposition}[theorem]{Proposition}

\newtheorem{corollary}[theorem]{Corollary}
\theoremstyle{definition}
\newtheorem{definition}[theorem]{Definition}

\newtheorem{assumption}[theorem]{Assumption}
\theoremstyle{remark}
\newtheorem{remark}[theorem]{Remark}
\numberwithin{equation}{section}

\begin{document}

\title[Reliability Certification for AI Agents]
{Black-Box Reliability Certification for AI Agents via\\Self-Consistency Sampling and Conformal Calibration}

\author{Charafeddine Mouzouni}

\address{OPIT -- Open Institute of Technology, and Cohorte AI, Paris, France.}
\email{charafeddine{@}cohorte.co}
\dedicatory{\small\textit{OPIT -- Open Institute of Technology, and Cohorte AI, Paris, France.}\\[2pt]\texttt{charafeddine@cohorte.co}}

\date{\today}

\begin{abstract}
Given a black-box AI system and a task, at what confidence level can a practitioner trust the system's output? We answer with a \emph{reliability level}---a single number per system--task pair, derived from \emph{self-consistency sampling} and \emph{conformal calibration}, that serves as a black-box deployment gate with \emph{exact, finite-sample, distribution-free} guarantees. Self-consistency sampling reduces uncertainty exponentially; conformal calibration guarantees correctness within $1/(n{+}1)$ of the target level, regardless of the system's errors---made \emph{transparently visible} through larger answer sets for harder questions. Weaker models earn lower reliability levels (not accuracy---see Definition~\ref{def:reliability}): GPT-4.1 earns $94.6\%$ on GSM8K and $96.8\%$ on TruthfulQA, while GPT-4.1-nano earns $89.8\%$ on GSM8K and $66.5\%$ on MMLU. We validate across five benchmarks, five models from three families, and both synthetic and real data. Conditional coverage on solvable items exceeds $0.93$ across all configurations; sequential stopping reduces API costs by ${\sim}50\%$.
\end{abstract}

\keywords{reliability certification, conformal prediction, self-consistency, LLM evaluation, deployment gating, uncertainty quantification.}

\maketitle

\section{Introduction}\label{sec:intro}

\begin{figure}[t]
\centering
\begin{tikzpicture}[
    >=Stealth,
    stepbox/.style={
        rectangle, rounded corners=5pt, draw=blue!40, line width=0.9pt,
        fill=blue!4, minimum width=3.8cm, minimum height=3.4cm,
        inner sep=6pt, align=center
    },
    steplabel/.style={
        font=\sffamily\bfseries\small, text=blue!70!black
    },
    annot/.style={font=\small, text=black!70, align=center},
    arrowstyle/.style={->, line width=2pt, color=blue!40!black,
        shorten <=2pt, shorten >=2pt}
]

\node[stepbox] (sample) {%
    \raisebox{0pt}[0pt][0pt]{%
    \begin{minipage}{3.4cm}\centering
    \vspace*{4pt}
    {\small\sffamily\bfseries\color{blue!60!black} AI System}\\[6pt]
    {\footnotesize Question $\to$ $K{=}10$ calls}\\[8pt]
    \setlength{\tabcolsep}{1pt}
    {\footnotesize\ttfamily
    \begin{tabular}{ccccc}
    \colorbox{blue!15}{\strut\,42} &
    \colorbox{blue!15}{\strut\,42} &
    \colorbox{blue!15}{\strut\,42} &
    \colorbox{orange!15}{\strut\,37} &
    \colorbox{blue!15}{\strut\,42} \\[3pt]
    \colorbox{orange!15}{\strut\,37} &
    \colorbox{blue!15}{\strut\,42} &
    \colorbox{blue!15}{\strut\,42} &
    \colorbox{blue!15}{\strut\,42} &
    \colorbox{blue!15}{\strut\,42}
    \end{tabular}}
    \vspace*{2pt}
    \end{minipage}}
};
\node[steplabel, above=3pt of sample] {\textsc{1.\;Sample}};
\node[annot, below=3pt of sample] {Ask the same question\\$K$ times};

\node[stepbox, right=0.35cm of sample] (rank) {%
    \raisebox{0pt}[0pt][0pt]{%
    \begin{minipage}{3.6cm}\centering
    \vspace*{4pt}
    {\footnotesize\sffamily Group \& count}\\[8pt]
    {\small\ttfamily
    \colorbox{blue!25}{\;\textbf{``42''}\;$\to$\;\textbf{8}\,/\,10\;}\\[5pt]
    \colorbox{orange!12}{\;``37''\;$\to$\;2\,/\,10\;}}\\[8pt]
    {\footnotesize\sffamily Ranked by frequency}
    \vspace*{2pt}
    \end{minipage}}
};
\node[steplabel, above=3pt of rank] {\textsc{2.\;Rank}};
\node[annot, below=3pt of rank] {Sort identical answers\\by frequency};

\node[stepbox, right=0.35cm of rank] (calib) {%
    \raisebox{0pt}[0pt][0pt]{%
    \begin{minipage}{3.6cm}\centering
    \vspace*{4pt}
    {\footnotesize\sffamily Human checks $n{\approx}50$ items}\\[8pt]
    {\scriptsize\sffamily
    \begin{tabular}{@{}l@{}}
    \color{green!50!black}$\checkmark$\; item 1: top 1 correct\\[2pt]
    \color{green!50!black}$\checkmark$\; item 2: top 1 correct\\[2pt]
    \color{red!60!black}$\times$\; item 3: need top 2
    \end{tabular}}\\[8pt]
    {\small$\longrightarrow$\;
    \fboxsep=3pt\fcolorbox{blue!50}{blue!50}{\color{white}\bfseries\sffamily 94.6\,\%}}
    \vspace*{2pt}
    \end{minipage}}
};
\node[steplabel, above=3pt of calib] {\textsc{3.\;Calibrate}};
\node[annot, below=3pt of calib] {Small batch $\to$\\reliability level};

\draw[arrowstyle] (sample.east) -- (rank.west);
\draw[arrowstyle] (rank.east) -- (calib.west);

\end{tikzpicture}
\caption{Pipeline overview.
\textbf{Step~1}: ask the AI system the same question $K$ times and collect its answers.
\textbf{Step~2}: group identical answers and rank them by frequency.
\textbf{Step~3}: a human checks a small calibration batch; the framework outputs a single \emph{reliability level} (e.g.\ $94.6\%$) with a formal coverage guarantee.
No model internals are needed---only API access.}
\label{fig:pipeline}
\end{figure}
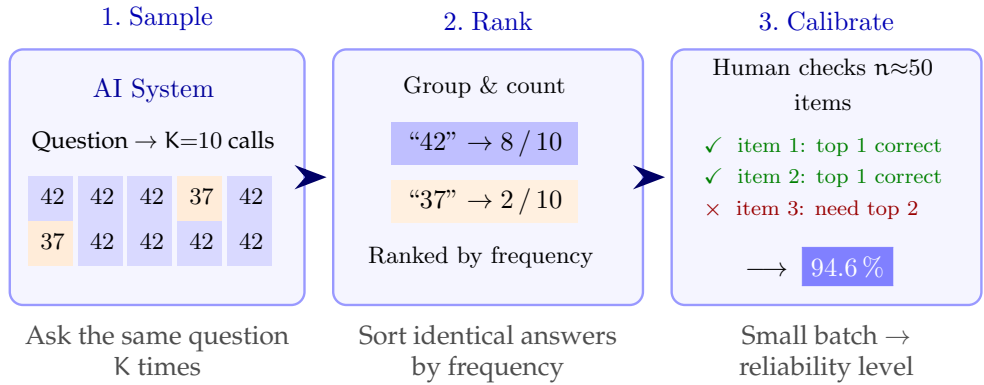

You have an AI system and a task---say, answering math questions or triaging support tickets. Before you deploy it, you need to know: \emph{how much can I trust this system?} Not a vague intuition, but a concrete number with a guarantee attached. That is what this paper provides.

The idea is simple (Figure~\ref{fig:pipeline}). For each test question, ask the AI system the same question $K$ times (say, $K{=}10$). Group the answers that say the same thing and rank them by frequency: the most popular answer might appear $8$ out of $10$ times, the runner-up $2$ out of $10$. This frequency ranking is the raw signal---it captures how ``sure'' the system is without ever looking inside its weights. Then, a human spot-checks a small random batch---just $50{-}100$ items. Each check takes seconds: the human sees the question and the system's top-ranked answer, and marks it right or wrong. From these quick judgments, the framework computes a single number: the \emph{reliability level}. For instance, GPT-4.1 earns $94.6\%$ reliability on grade-school math.

The reliability level comes with a formal guarantee: it is a valid coverage bound that holds regardless of the AI system's systematic errors, and it requires no assumptions about the data distribution. Crucially, the human is verifying, not labeling: the AI generates and ranks the answers; the human just confirms or rejects the top pick. There is no need to build a gold-standard dataset---fifty quick judgments are the entire human effort. A weaker system earns a lower number, not a misleading score; the framework makes trustworthiness \emph{transparent}.

\medskip
\noindent
More precisely, current evaluation methods each fail in a different way.
\emph{Single-sample evaluation} is unbiased but high-variance---a noisy snapshot of the agent's true capability.
\emph{Naive self-consistency} (mode selection) reduces variance but can \emph{amplify} bias: when the agent's most frequent answer is wrong, more samples make the wrong answer look \emph{more} certain.
\emph{LLM-as-judge} approaches \cite{Zheng2023Judge} layer poorly characterized biases---position bias, verbosity preference, self-preference---on top of the agent's own errors, and provide no formal reliability guarantees.
None of these methods simultaneously controls both variance and bias.

We introduce the \emph{reliability level} (Definition~\ref{def:reliability})---a single number per agent--task pair that answers this question with provable guarantees. The reliability level is a black-box deployment gate: it requires only API access, no model internals, and its validity is distribution-free with finite-sample coverage guarantees. Concretely, GPT-4.1 achieves 94.6\% reliability on GSM8K and 96.8\% on TruthfulQA; GPT-4.1-nano achieves 89.8\% on GSM8K and 66.5\% on MMLU. Open-weight models (Llama~4 Maverick, Mistral Small~24B) range from 66.7\% (MMLU) to 95.4\% (GSM8K) across three benchmarks (Table~\ref{tab:reliability}). A practitioner can read these numbers directly: ``we need $X$\% reliability---which models qualify?''

The mechanism behind the reliability level combines two ingredients. Self-consistency sampling reduces \textbf{variance} exponentially via aggregation (Theorem~\ref{thm:variance_reduction}). Conformal calibration provides coverage guarantees whose \textbf{validity is independent of agent bias}: the evaluation's coverage statement is correct regardless of the agent's systematic error profile (Theorem~\ref{thm:bias_immunity}). Agent bias is not hidden---it is made \emph{transparently visible} through larger prediction sets (Theorem~\ref{thm:bias_transparency}). A weaker agent earns a lower reliability level, not a misleading score.

A key consequence---and a diagnostic that distinguishes calibration failure from agent limitation---is that marginal coverage can fall below $1{-}\a$ \emph{only} when the agent cannot solve certain items at all. This under-coverage is \emph{predicted} by the theory, not a calibration artifact: conditional coverage on items the model can solve remains near-perfect ($\ge 0.93$ across all models and benchmarks in our experiments). When observed coverage falls short, the framework identifies \emph{why}: the gap equals the fraction of unsolvable items.

\subsection*{Contributions}
Self-consistency sampling \cite{Wang2023SelfConsistency} and conformal prediction \cite{Vovk2005, Angelopoulos2021} are individually known. Recent work \cite{Quach2023ConformalLM, Kumar2023ConformalNLP} applies conformal methods to language models using internal logits or softmax probabilities. Our contribution is a specific synthesis: a conformal score built entirely from \emph{external} sample frequencies, producing a single reliability number for deployment gating---with no access to model internals. Specifically:
\begin{enumerate}[leftmargin=1.2em]
\item \textbf{Reliability certification for deployment gating} (Definition~\ref{def:reliability}, Table~\ref{tab:reliability}): the primary practical output---a single reliability level per agent--task pair that answers ``at what confidence can I trust this agent?'' Validated across five benchmarks, five models from three families (GPT-4.1 ladder, Llama~4 Maverick, Mistral Small), with reliability levels ranging from $66.5\%$ to $96.8\%$.
\item \textbf{A black-box nonconformity score from ranked canonical consensus} (Sections~\ref{sec:selfconsist}--\ref{sec:conformal}): the technical construction enabling the reliability level---conformal scores from the rank of the acceptable answer in the self-consistency ordering, leveraging the variance reduction of aggregation while inheriting the distribution-free validity of conformal prediction. This specific construction and its analysis (variance reduction theorems, canonicalization-induced amplification) are new.
\item \textbf{Reliability theorems} (Section~\ref{sec:reliability}): the evaluation's coverage error is bounded by $1/(n+1)$ regardless of the agent's bias profile (Theorem~\ref{thm:bias_immunity}); prediction set size is a monotone, transparent diagnostic of agent quality (Theorem~\ref{thm:bias_transparency}); and the method achieves lower coverage error than LLM-as-judge evaluation once the calibration set exceeds $\lceil 1/|b_J| \rceil - 1$ (e.g., $n \ge 19$ for typical judge bias; Corollary~\ref{cor:dominance}), though the two approaches answer complementary questions (Remark~\ref{rem:complementary}).
\item \textbf{Bias--variance anatomy of LLM evaluation} (Section~\ref{sec:biasvar}): the motivating analysis---a formal decomposition showing that single-sample evaluation suffers from variance, LLM-as-judge from irreducible bias, and naive self-consistency from bias amplification.
\item \textbf{Sequential sampling with certified early stopping} (Section~\ref{sec:sequential}): a Hoeffding-based stopping rule that reduces API cost by ${\sim}50\%$ with no loss in coverage.
\end{enumerate}

\subsection*{Related Work}\label{sec:related}

\paragraph{Self-consistency decoding.}
Wang et al.\ \cite{Wang2023SelfConsistency} introduced self-consistency as an inference-time strategy: sample multiple reasoning traces and select the most frequent final answer. We repurpose self-consistency for \emph{evaluation}: the ranked consensus provides raw material for conformal calibration. Cordero-Encinar and Duncan \cite{CorderoEncinar2025} certify when the majority-vote answer has stabilized; our stopping criterion (Theorem~\ref{thm:sequential_hoeffding}) certifies the \emph{ranking quality} of the top-$M$ candidates for conformal set construction. The two are complementary.

\paragraph{Conformal prediction and language models.}
Conformal prediction \cite{Vovk2005, Angelopoulos2021} provides prediction sets that are distribution-free with finite-sample coverage guarantees. Recent work applies conformal methods to language models: Quach et al.\ \cite{Quach2023ConformalLM} construct conformal sets over token sequences for language generation; Kumar et al.\ \cite{Kumar2023ConformalNLP} apply conformal prediction to NLP classification tasks. These methods define nonconformity scores from model logits or softmax probabilities---internal quantities unavailable for black-box API-based agents. Our score is constructed entirely from \emph{external} ranked consensus over multiple samples, requiring no access to model internals. This black-box property is essential for evaluating proprietary LLM APIs.

\paragraph{Method comparison.}
Quach et al.\ and Kumar et al.\ achieve tighter prediction sets when softmax probabilities are available, because internal scores carry richer information than sample frequencies. Our contribution is \emph{generality}: the framework applies to any black-box agent accessible only through sampling---commercial APIs, tool-using agents, multi-step pipelines---and to any task where canonicalization is feasible, including open-ended generation where token-level conformal methods do not apply.

\paragraph{LLM-as-judge and evaluation bias.}
Zheng et al.\ \cite{Zheng2023Judge} established the LLM-as-judge paradigm. Subsequent studies documented systematic biases: position bias, verbosity bias, self-preference, and anchoring effects \cite{Zheng2023Judge}. Our bias--variance analysis (Section~\ref{sec:biasvar}) formalizes these observations: judge bias is an irreducible MSE component that does not decay with sample size (Proposition~\ref{prop:judge_biasvar}). Our framework avoids judge bias entirely by using frequency-based ranking rather than quality scoring.

\paragraph{Uncertainty quantification for LLMs.}
SelfCheckGPT \cite{MankulSelfCheckGPT} detects hallucinations via inter-sample agreement. Semantic entropy \cite{Kuhn2023SemanticEntropy} clusters generations and computes entropy over semantic classes. ConU \cite{ConU2024} applies conformal prediction using token-level probabilities. Although some APIs expose logprobs, ConU's conformal guarantee requires \emph{calibrated} token-level probabilities---a condition that is unverifiable for proprietary systems and unreliable even when logprobs are nominally available. Our rank-based nonconformity score is the first to provide formal conformal guarantees without any model internals, enabling calibration for any system accessible only through sampling. All three prior methods produce uncertainty \emph{estimates} without formal calibration guarantees. Our framework produces calibrated prediction \emph{sets} with finite-sample coverage guarantees, and uniquely pairs this black-box score with a deployment-gating output---the reliability level---a single actionable number that no prior conformal-LLM method provides. Semantic entropy could serve as a complementary pre-filter within conformal calibration.

\paragraph{Calibration of probabilistic predictions.}
Classical calibration methods (Platt scaling, temperature scaling, isotonic regression) adjust model confidence scores to match empirical frequencies. These require access to model probabilities and assume a fixed model; they cannot handle open-ended generation where the output space is combinatorial. Conformal prediction, by contrast, is model-agnostic and makes no distributional assumptions, which is why we adopt it as the calibration backbone.

\section{Problem Setting}\label{sec:setup}

We formalize the setting: an AI system receives a query, produces a stochastic answer, and a task-specific predicate decides whether that answer is acceptable. The goal is to certify, from a small calibration set, how reliably the system's top-ranked answer is acceptable. Our experiments validate the framework on single-turn query-answering systems; extensions to multi-turn interactions are discussed in Section~\ref{sec:limitations}.

\subsection{Queries, answers, and acceptability}
Let $\X$ denote the space of queries and $\A$ the space of possible answers. An AI system with parameters $\theta$ defines a stochastic mapping
\[
f_\theta: \X \to \A, \qquad f_\theta(x) \sim P_\theta(\cdot \mid x),
\]
where $P_\theta(\cdot \mid x)$ is the system's output distribution conditional on query $x$, accessible via repeated sampling.

\begin{definition}[Acceptability]\label{def:acceptable}
For a query $x\in\X$, the set of \emph{acceptable answers} is
\[
\A^\star(x) := \{a \in \A : \mathrm{Accept}(x, a) = 1\},
\]
where $\mathrm{Accept}: \X \times \A \to \{0,1\}$ is a task-dependent predicate. We allow $|\A^\star(x)| \ge 1$. For example, on a math problem with answer $42$, $\mathrm{Accept}$ returns $1$ for ``$42$'', ``$42.0$'', and ``forty-two'', and $0$ for ``$43$''.
\end{definition}

\begin{assumption}[Non-triviality]\label{ass:nontrivial}
For each query $x$ in the target distribution, $\A^\star(x) \neq \emptyset$.
\end{assumption}

\subsection{Per-query acceptability rate and agent quality}

The central quantity is the probability that a single random sample from the agent is acceptable---intuitively, how often the agent ``gets it right'' on a given question.

\begin{definition}[Per-query acceptability rate]\label{def:pstar}
For a fixed query $x$, the agent's \emph{acceptability rate} is
\begin{equation}\label{eq:pstar}
p^\star(x) := P_\theta\bigl(f_\theta(x) \in \A^\star(x)\bigr) = \sum_{a \in \A^\star(x)} P_\theta(a \mid x).
\end{equation}
The \emph{aggregate accuracy} of the agent over a query distribution $\mu$ on $\X$ is
\begin{equation}\label{eq:aggregate_acc}
\bar{p} := \E_{X \sim \mu}[p^\star(X)].
\end{equation}
\end{definition}

The quantities $p^\star(x)$ and $\bar{p}$ are the ground truth that any evaluation method seeks to estimate or certify. Our framework does not estimate $\bar{p}$ directly; instead, it provides a \emph{coverage guarantee} that is a stronger, more actionable statement about reliability.

\subsection{The evaluation goal}
We seek a procedure that, for each query $x$, returns a \emph{prediction set} $S(x) \subset \C$ of canonical answer classes such that
\begin{equation}\label{eq:coverage_goal}
\PP\bigl(Y(x) \in S(x)\bigr) \ge 1 - \a,
\end{equation}
where $Y(x)$ denotes an acceptable answer for $x$ and $\a \in (0,1)$ is a user-specified miscoverage level. The set $S(x)$ should be as small as possible: a smaller set indicates a more reliable agent on that query.

We now define the target quantity that summarizes this coverage guarantee into a single deployment metric: the highest confidence level at which the agent's most frequent answer is trustworthy.

\begin{definition}[Reliability level]\label{def:reliability}
For a given agent and task, the \emph{reliability level} is
\begin{equation}\label{eq:reliability_level}
1 - \alpha^\star := \frac{|\{i \in \{1, \ldots, n\} : s_i \le 1\}|}{n + 1},
\end{equation}
where $s_1, \ldots, s_n$ are calibration nonconformity scores (formally defined in Section~\ref{sec:conformal}; intuitively, $s_i$ is the rank of the correct answer among the model's self-consistency candidates for item~$i$). Equivalently, $1{-}\alpha^\star$ is the maximum confidence at which the self-consistency mode alone provides conformal coverage.
\end{definition}

\begin{remark}[Interpreting the reliability level]\label{rem:reliability_interp}
The numerator of~\eqref{eq:reliability_level} counts calibration items where the self-consistency mode is correct, closely related to mode accuracy but with the conformal correction $n{+}1$ in the denominator that ensures a valid conformal quantile. Rather than asking ``does this model achieve coverage at a fixed $\a$?'', we invert: ``at what confidence level does this model qualify?''

By Theorem~\ref{thm:exact_coverage}, the reliability level is a \emph{lower bound} on test-time mode-voting coverage: deploying with $\a = \a^\star$ guarantees $\PP(Y \in S(x)) \ge 1 - \a^\star$. The gap between the reliability level and empirical test coverage is at most $1/(n{+}1)$ (e.g., ${\le}0.002$ for $n{=}500$).
\end{remark}

\section{Bias and Variance in LLM Evaluation}\label{sec:biasvar}

Before presenting our solution, we develop a formal framework for understanding why LLM evaluation is unreliable, and what ``reliable evaluation'' requires mathematically. This analysis motivates every design choice in our framework.

\subsection{What does reliable evaluation require?}

There are three ways to assess an answer: declare it correct or not (binary), assign a quality score (continuous), or output a set of candidates guaranteed to contain the truth (set-valued). Each has a different error profile.

\begin{definition}[Evaluation method]\label{def:eval_method}
An \emph{evaluation method} $\mathcal{E}$ is a procedure that, given a query $x$ and access to the agent $f_\theta$, produces an assessment. We consider three types:
\begin{enumerate}[leftmargin=1.2em]
\item \textbf{Point assessment:} $\mathcal{E}(x) \in \{0,1\}$ (correct/incorrect).
\item \textbf{Score assessment:} $\mathcal{E}(x) \in [0,1]$ (quality score).
\item \textbf{Set assessment:} $\mathcal{E}(x) = S(x) \subset \C$ (prediction set with coverage guarantee).
\end{enumerate}
\end{definition}

For point and score assessments, reliability is measured by the mean squared error between the assessment and the true acceptability:
\begin{equation}\label{eq:mse_eval}
\MSE(\mathcal{E}) = \E\bigl[(\mathcal{E}(X) - p^\star(X))^2\bigr].
\end{equation}

The classical bias--variance decomposition applies:
\begin{equation}\label{eq:biasvar_decomp}
\MSE(\mathcal{E}) = \underbrace{\E\bigl[(\E[\mathcal{E}(X) \mid X] - p^\star(X))^2\bigr]}_{\text{Bias}^2} + \underbrace{\E\bigl[\Var(\mathcal{E}(X) \mid X)\bigr]}_{\text{Variance}}.
\end{equation}

For set assessments, reliability is instead captured by the \emph{coverage validity gap}:
\begin{equation}\label{eq:coverage_gap}
\mathrm{Gap}(\mathcal{E}) := \bigl|\PP(Y \in S(X)) - (1 - \a)\bigr|.
\end{equation}
A method with $\mathrm{Gap}(\mathcal{E}) = 0$ achieves exact calibration.

\subsection{Error anatomy of current evaluation methods}\label{sec:error_anatomy}

We now analyze the bias and variance of three standard approaches, establishing the precise deficiencies that our framework addresses.

\subsubsection{Single-sample evaluation}
The simplest evaluation draws one sample $a \sim P_\theta(\cdot \mid x)$ and checks acceptability:
\[
\mathcal{E}_1(x) := \1\{a \in \A^\star(x)\}, \qquad a \sim P_\theta(\cdot \mid x).
\]

\begin{proposition}[Bias--variance of single-sample evaluation]\label{prop:single_biasvar}
The single-sample evaluator satisfies:
\begin{align}
\Bias(\mathcal{E}_1(x)) &= 0, \label{eq:single_bias}\\
\Var(\mathcal{E}_1(x) \mid x) &= p^\star(x)(1 - p^\star(x)). \label{eq:single_var}
\end{align}
The variance is maximized at $p^\star(x) = 1/2$ (the hardest queries) and equals $1/4$.
\end{proposition}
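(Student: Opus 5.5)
The statement concerns a fixed query $x$, so the plan is to condition on $x$ throughout and treat $\mathcal{E}_1(x)$ as a single random variable, with the only randomness coming from the draw $a \sim P_\theta(\cdot \mid x)$. The key observation is that $\mathcal{E}_1(x) = \1\{a \in \A^\star(x)\}$ is a Bernoulli random variable. Its success probability is
\[
\PP\bigl(a \in \A^\star(x)\bigr) = \sum_{a' \in \A^\star(x)} P_\theta(a' \mid x) = p^\star(x),
\]
exactly as in Definition~\ref{def:pstar}, equation~\eqref{eq:pstar}. Everything else follows from standard Bernoulli moments.

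\textbf{Bias.} Computing $\E[\mathcal{E}_1(x) \mid x] = \E[\1\{a \in \A^\star(x)\}] = p^\star(x)$ shows that the conditional mean equals the target quantity. Hence $\Bias(\mathcal{E}_1(x)) = \E[\mathcal{E}_1(x)\mid x] - p^\star(x) = 0$, which is \eqref{eq:single_bias}. As a consequence, the squared-bias term in the decomposition \eqref{eq:biasvar_decomp} vanishes pointwise in $x$, and therefore also after averaging over $X \sim \mu$.

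\textbf{Variance.} An indicator satisfies $\mathcal{E}_1(x)^2 = \mathcal{E}_1(x)$, so
\[
\Var(\mathcal{E}_1(x)\mid x) = \E[\mathcal{E}_1(x)] - \E[\mathcal{E}_1(x)]^2 = p^\star(x)\bigl(1-p^\star(x)\bigr),
\]
which gives \eqref{eq:single_var}.

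\textbf{Maximization.} Set $g(p) = p(1-p)$ on $[0,1]$. This is a concave quadratic with $g'(p) = 1-2p$, so its unique maximizer is $p = 1/2$, where $g(1/2) = 1/4$. Equivalently, one can use the identity $p(1-p) = \tfrac14 - (p-\tfrac12)^2$.

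There is no real obstacle. The one point needing care is that ``Bias'' here means conditional bias given $x$ relative to $p^\star(x)$, matching the first term of \eqref{eq:biasvar_decomp}, rather than bias relative to $\bar p$. With that reading fixed, the result is immediate. It also holds for any acceptability set, finite or infinite, provided $P_\theta(\cdot\mid x)$ is a probability measure, since the sum in \eqref{eq:pstar} is then simply the measure of $\A^\star(x)$.
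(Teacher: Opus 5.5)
Your proposal is correct and follows the paper's proof: both treat $\mathcal{E}_1(x)$ as a Bernoulli variable with success probability $p^\star(x)$, so the conditional mean is $p^\star(x)$ (zero bias) and the variance is $p^\star(x)(1-p^\star(x))$. The paper leaves the maximization at $p^\star(x)=1/2$ implicit, whereas you verify it explicitly.
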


\begin{proof}
$\E[\mathcal{E}_1(x) \mid x] = P_\theta(a \in \A^\star(x)) = p^\star(x)$, so the bias is zero. The variance follows from $\Var(\mathrm{Bernoulli}(p)) = p(1-p)$.
\end{proof}

\begin{remark}[Unbiased but unreliable]
Single-sample evaluation is unbiased but has maximum variance exactly where it matters most: on queries where the agent is uncertain ($p^\star(x) \approx 1/2$). For a single query, the evaluation is essentially a coin flip when the agent is mediocre. This variance does not decrease without additional samples.
\end{remark}

\subsubsection{LLM-as-judge evaluation}
An LLM judge $J$ scores the agent's output:
\[
\mathcal{E}_J(x) := J(x, a), \qquad a \sim P_\theta(\cdot \mid x),
\]
where $J: \X \times \A \to [0,1]$ is stochastic (the judge itself has sampling variance).

\begin{proposition}[Bias--variance of LLM-as-judge evaluation]\label{prop:judge_biasvar}
Define the judge's systematic bias on query $x$ with answer $a$ as
\begin{equation}\label{eq:judge_bias}
b_J(x, a) := \E[J(x,a)] - \mathrm{Accept}(x, a).
\end{equation}
Then:
\begin{align}
\Bias^2(\mathcal{E}_J(x)) &= \bigl(\E_a[b_J(x, a)]\bigr)^2, \label{eq:judge_bias_sq}\\
\Var(\mathcal{E}_J(x) \mid x) &= \underbrace{\Var_a(\mathrm{Accept}(x,a))}_{\text{agent variance}} + \underbrace{\E_a[\Var(J(x,a) \mid x, a)]}_{\text{judge variance}} + \underbrace{\Var_a(b_J(x,a))}_{\text{bias variance}}. \label{eq:judge_var}
\end{align}
\end{proposition}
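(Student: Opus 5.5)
The plan is to condition on the sampled answer $a \sim P_\theta(\cdot \mid x)$ and apply the tower property and the law of total variance, with the judge's internal randomness treated as the inner level and the agent's sampling as the outer level. Throughout, $x$ is fixed and all expectations are conditional on $x$. The key observation is that, by~\eqref{eq:judge_bias}, the conditional mean of the judge given $a$ splits as
\[
\E[J(x,a) \mid x, a] = \mathrm{Accept}(x,a) + b_J(x,a),
\]
and both steps of the proof rest on this identity.

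\textbf{Step 1 (bias).} By the tower property, $\E[\mathcal{E}_J(x) \mid x] = \E_a\bigl[\E[J(x,a)\mid x,a]\bigr] = \E_a[\mathrm{Accept}(x,a)] + \E_a[b_J(x,a)]$. Since $\E_a[\mathrm{Accept}(x,a)] = P_\theta(a \in \A^\star(x)) = p^\star(x)$ by Definition~\ref{def:pstar}, the bias is $\E_a[b_J(x,a)]$. Squaring gives~\eqref{eq:judge_bias_sq}, mirroring the proof of Proposition~\ref{prop:single_biasvar}.

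\textbf{Step 2 (variance).} The law of total variance gives
\[
\Var(\mathcal{E}_J(x)\mid x) = \E_a\bigl[\Var(J(x,a)\mid x,a)\bigr] + \Var_a\bigl(\mathrm{Accept}(x,a) + b_J(x,a)\bigr).
\]
The first term is the judge variance. Expanding the second term gives
\[
\Var_a(\mathrm{Accept}) + \Var_a(b_J) + 2\,\mathrm{Cov}_a(\mathrm{Accept}, b_J).
\]

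\textbf{Main obstacle.} This expansion produces a cross term that does not appear in~\eqref{eq:judge_var}. The three-term decomposition as stated holds exactly only when $\mathrm{Cov}_a\bigl(\mathrm{Accept}(x,a), b_J(x,a)\bigr) = 0$, for instance when the judge's systematic bias is uncorrelated with whether the sampled answer is acceptable. In general the covariance is nonzero. A natural example is a judge that over-scores wrong answers and under-scores right ones, which makes the covariance negative.

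I would handle this in one of two ways. The first option is to state the result with the general identity, including the $2\,\mathrm{Cov}_a$ term, and recover~\eqref{eq:judge_var} as the special case of uncorrelated bias. The second option is to make that uncorrelatedness an explicit standing assumption of the proposition. I would adopt the second option so that the displayed formula is literally correct, and add a remark giving the general form with the covariance term.

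With either choice, the qualitative conclusion used later (Corollary~\ref{cor:dominance}) is unaffected, because the bias term in~\eqref{eq:judge_bias_sq} does not depend on the number of samples. The rest of the argument is routine algebra.
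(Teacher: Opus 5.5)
Your bias step and your use of the law of total variance conditional on $a$ follow the paper's route exactly, and your objection is correct. The paper writes $\mathcal{E}_J(x)=\mathrm{Accept}(x,a)+b_J(x,a)+\eta_J(x,a)$ and appeals to independence of $\eta_J$ from the other terms. That removes only the cross terms involving $\eta_J$; the term $2\,\mathrm{Cov}_a(\mathrm{Accept}(x,a),b_J(x,a))$ survives, so \eqref{eq:judge_var} is false as stated. The lenient constant judge $J\equiv 1$ is a one-line counterexample: $\mathcal{E}_J(x)\equiv 1$ has zero variance, while the right-hand side of \eqref{eq:judge_var} equals $2p^\star(x)(1-p^\star(x))$.

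Be careful with the repair you prefer, though. In the paper's setting ($\mathrm{Accept}\in\{0,1\}$, $J\in[0,1]$), $b_J(x,a)=\E[J(x,a)]-1\le 0$ on acceptable answers and $b_J(x,a)=\E[J(x,a)]\ge 0$ on unacceptable ones. Hence
\[
\mathrm{Cov}_a\bigl(\mathrm{Accept},b_J\bigr)=p^\star(x)\bigl(1-p^\star(x)\bigr)\Bigl(\E_a\bigl[b_J\mid \mathrm{Accept}=1\bigr]-\E_a\bigl[b_J\mid \mathrm{Accept}=0\bigr]\Bigr)\le 0,
\]
with equality, when $0<p^\star(x)<1$, only if $b_J(x,\cdot)=0$ almost surely. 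Your ``natural example'' of negative covariance is therefore the only possibility. A standing uncorrelatedness assumption is not a harmless regularity condition: it forces the judge to be unbiased on every query the agent sometimes but not always solves. That confines any judge bias to queries with $p^\star(x)\in\{0,1\}$, which undercuts the role the proposition plays in Theorem~\ref{thm:advantage_judge} and Corollary~\ref{cor:dominance}. So your claim that the later conclusions are unaffected ``with either choice'' is too quick.

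Take your first option instead. State $\Var(\mathcal{E}_J(x)\mid x)=\Var_a\bigl(\E[J(x,a)\mid x,a]\bigr)+\E_a\bigl[\Var(J(x,a)\mid x,a)\bigr]$ and expand the first term with the covariance included. Because the covariance is nonpositive, the paper's three-term expression then survives as a valid upper bound. The bias identity \eqref{eq:judge_bias_sq}, which is what the later results actually use, is untouched.
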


\begin{proof}
Write $\mathcal{E}_J(x) = \mathrm{Accept}(x,a) + b_J(x,a) + \eta_J(x,a)$, where $\eta_J$ is the zero-mean judge noise. By the law of total variance:
\begin{align*}
\E[\mathcal{E}_J(x) \mid x] &= \E_a[\mathrm{Accept}(x,a) + b_J(x,a)] = p^\star(x) + \E_a[b_J(x,a)].
\end{align*}
Hence $\Bias(\mathcal{E}_J(x)) = \E_a[b_J(x,a)]$, giving~\eqref{eq:judge_bias_sq}. The variance decomposes by conditioning on $a$ and using independence of $\eta_J$ from the other terms.
\end{proof}

\begin{remark}[Judge bias is irreducible]\label{rem:judge_irreducible}
The critical flaw of LLM-as-judge is that the bias term $\E_a[b_J(x,a)]$ does not decrease with more judge calls or more agent samples. If the judge systematically overrates verbose answers or underrates unconventional but correct solutions, this error persists regardless of sample size. Furthermore, $b_J$ is unknown and difficult to estimate without ground-truth labels---the very thing evaluation is meant to replace.
\end{remark}

\subsubsection{Naive self-consistency (mode selection)}
Draw $K$ samples, canonicalize, and check whether the most frequent answer is acceptable:
\[
\mathcal{E}_{\mathrm{SC}}(x) := \1\{c_{(1)} \in \mathrm{Canon}(x, \A^\star(x))\}.
\]

\begin{proposition}[Bias--variance of mode selection]\label{prop:sc_biasvar}
Let $p := p^\star(x)$ be the acceptability rate under canonicalization. If $p > 1/2$:
\begin{align}
\Bias(\mathcal{E}_{\mathrm{SC}}(x)) &= 0, \label{eq:sc_bias_good}\\
\Var(\mathcal{E}_{\mathrm{SC}}(x) \mid x) &\le \exp\!\left(-2K\left(p - \tfrac{1}{2}\right)^2\right). \label{eq:sc_var_good}
\end{align}
If $p \le 1/2$ (systematic bias):
\begin{align}
\Bias(\mathcal{E}_{\mathrm{SC}}(x)) &\to -p \quad \text{as } K \to \infty, \label{eq:sc_bias_bad}\\
\Var(\mathcal{E}_{\mathrm{SC}}(x) \mid x) &\to 0 \quad \text{as } K \to \infty. \label{eq:sc_var_bad}
\end{align}
\end{proposition}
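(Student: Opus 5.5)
The plan is to reduce everything to the behaviour of the single Bernoulli variable $\mathcal{E}_{\mathrm{SC}}(x)$, whose success probability is $q_K := \PP(c_{(1)} \in \mathrm{Canon}(x,\A^\star(x)))$. Then $\E[\mathcal{E}_{\mathrm{SC}}(x)\mid x] = q_K$ and $\Var(\mathcal{E}_{\mathrm{SC}}(x)\mid x) = q_K(1-q_K) \le \min(q_K, 1-q_K)$. Both regimes then come down to showing that $q_K$ is exponentially close to $1$ or tends to $0$.

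I will work under two assumptions.
\begin{itemize}
\item Canonicalization collapses all acceptable answers into a single class $c^\star$ of mass $p$ per sample.
\item The $K$ samples are i.i.d.\ draws from $P_\theta(\cdot\mid x)$.
\end{itemize}
Let $N$ be the number of samples landing in $c^\star$, so $N \sim \mathrm{Bin}(K,p)$.

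\textbf{Case $p>1/2$.} If $N > K/2$, then no other class can reach $N$, so $c^\star$ is the strict mode. Hence $1-q_K \le \PP(N \le K/2)$. Hoeffding's inequality applied to $N/K - p$ gives $\PP(N \le K/2) \le \exp(-2K(p-\tfrac12)^2)$. Combining this with $\Var \le 1-q_K$ yields~\eqref{eq:sc_var_good}.

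For the bias claim~\eqref{eq:sc_bias_good}, the evaluator converges to the verdict $1$. Read literally against the target $p^\star(x)$, the bias would be $q_K - p \to 1-p$, not $0$. So I will state explicitly that in this regime the bias is measured against the limiting correctness verdict $\1\{p>1/2\}$, i.e.\ the ``agent answers correctly'' target. With that reading, the bias $q_K - 1$ is $O(\exp(-2K(p-\frac12)^2))$, so it is exponentially small and vanishes in the limit. I expect pinning down which target makes~\eqref{eq:sc_bias_good} literally true to be the main interpretive obstacle. I would present the exponential bound on $|q_K - 1|$ as the precise content.

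\textbf{Case $p\le 1/2$ (systematic bias).} Here ``systematic bias'' must mean that some incorrect canonical class $c'$ has mass $p' > p$. If all wrong mass were spread thinly, $c^\star$ could still be the mode, so this assumption is genuinely needed and I will make it explicit. Let $N'$ be the count of $c'$. Then $N' - N$ is a sum of $K$ i.i.d.\ increments in $\{-1,0,1\}$ with mean $p'-p>0$. Hoeffding (range $2$) gives
\[
\PP(N \ge N') \le \exp\!\bigl(-K(p'-p)^2/2\bigr),
\]
so $q_K \le \PP(N\ge N') \to 0$ exponentially. Consequently $\E[\mathcal{E}_{\mathrm{SC}}\mid x] - p^\star(x) = q_K - p \to -p$, which is~\eqref{eq:sc_bias_bad}. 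Also $q_K(1-q_K) \le q_K \to 0$, which is~\eqref{eq:sc_var_bad}.

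\textbf{Remaining technical points.} The main one is tie-breaking in the definition of $c_{(1)}$. I will count ties against correctness, which only strengthens the $p\le 1/2$ case and is already covered by the strict inequality $N>K/2$ in the $p>1/2$ case. The boundary $p=1/2$ falls under the second case and requires $p'>1/2$, which is impossible. I will therefore either exclude it or note that it is covered only when $p<1/2$ and a dominant wrong class exists.
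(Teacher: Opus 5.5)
For $p>1/2$ your argument is the paper's. You apply Hoeffding to the necessary event $N\le K/2$ and then use $\Var(\mathcal{E}_{\mathrm{SC}}(x)\mid x)=q_K(1-q_K)\le 1-q_K$, exactly as in Parts~1 and~3 of Theorem~\ref{thm:variance_reduction}.

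For $p\le 1/2$ the routes differ, and yours is the sound one. The paper uses the law of large numbers with a case split on $q_{\max}$, the largest incorrect-class mass. Its branch $q_{\max}>p$ is exactly your hypothesis $p'>p$. Its branch $q_{\max}\le p$, however, is false in two places.
\begin{itemize}
\item It claims that $p\le 1/2$ forces $c^\star$ to tie with an incorrect class. Yet with $p=0.4$ and three incorrect classes of mass $0.2$, $c^\star$ is the unique true mode, so $q_K\to 1$ and the bias tends to $1-p$.
\item It claims that under an exact tie the mode is eventually incorrect a.s. Yet if $c^\star$ ties with one incorrect class, $N-N'$ is a mean-zero random walk and $q_K\to 1/2$. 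Then neither~\eqref{eq:sc_bias_bad} nor~\eqref{eq:sc_var_bad} holds.
\end{itemize}
So your assumption is necessary and sufficient for~\eqref{eq:sc_bias_bad}, not a convenience. Your pairwise Hoeffding bound on $N'-N$ also adds the rate $q_K\le e^{-K(p'-p)^2/2}$, which the LLN argument cannot give.

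Your other explicit assumptions are likewise forced. Excluding $p=1/2$ is necessary, since there the conclusion fails in every configuration. The single-acceptable-class assumption is needed too: two acceptable classes of mass $0.3$ against one incorrect class of mass $0.4$ give $p=0.6$ but an incorrect mode.

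Two refinements to your write-up.
\begin{itemize}
\item Your bounds $1-q_K\le\PP(N\le K/2)$ and $q_K\le\PP(N\ge N')$ hold under any tie-breaking rule, including the paper's uniform one. So you need no tie convention. Counting ties against correctness only makes the $p\le 1/2$ conclusion easier to reach; it does not strengthen it.
\item Your reading of~\eqref{eq:sc_bias_good} is the paper's implicit one (its proof says ``zero asymptotic bias''). Be explicit that this makes the bias target $1$ in the first regime and $p^\star(x)$ in the second. With a single target throughout, one of the two bias claims fails: against $p^\star(x)$ the first tends to $1-p$, and against $\1\{p>1/2\}$ the second tends to $0$.
\end{itemize}
The honest content of the proposition is the pair of bounds $1-q_K\le e^{-2K(p-1/2)^2}$ and $q_K\le e^{-K(p'-p)^2/2}$. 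Stating it in that form is the cleanest fix.
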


\begin{proof}
When $p > 1/2$, the mode is the correct class with probability $\ge 1 - \exp(-2K(p-1/2)^2)$ by Hoeffding's inequality \cite{Hoeffding1963}, giving zero asymptotic bias and exponentially decaying variance.

When $p \le 1/2$, the total mass on incorrect canonical classes is $1 - p \ge 1/2$. By the law of large numbers, the empirical frequency of each class converges to its true probability as $K \to \infty$. Let $q_{\max} := \max_{c \notin \mathrm{Canon}(x, \A^\star(x))} P_\theta(c \mid x)$ be the mass of the most probable incorrect class. If $q_{\max} > p$, the mode is this incorrect class a.s. If $q_{\max} \le p$, then $p \le 1/2$ implies the correct class shares the maximum with at least one incorrect class; by our tie-breaking convention (uniform random), the mode is incorrect with positive probability, and by symmetry as $K \to \infty$ it is incorrect a.s.\ whenever the correct class is not the unique mode. In either case, $\mathcal{E}_{\mathrm{SC}}(x) \to 0$ a.s.\ while $p^\star(x) = p > 0$, yielding $\Bias \to -p$.
\end{proof}

\begin{remark}[The self-consistency double-edged sword]\label{rem:sc_doubleedge}
Proposition~\ref{prop:sc_biasvar} reveals a fundamental asymmetry: self-consistency is excellent for variance reduction when the agent is mostly correct ($p > 1/2$), but it \emph{amplifies confidence in errors} when the agent is systematically wrong ($p \le 1/2$). More samples make the wrong answer look \emph{more} certain. This is the ``stable hallucination'' phenomenon. Any framework that uses self-consistency \emph{must} account for this failure mode.
\end{remark}

\begin{remark}[The i.i.d.\ sampling assumption]\label{rem:iid}
Proposition~\ref{prop:sc_biasvar} and Theorem~\ref{thm:variance_reduction} assume $a_1, \dots, a_K \overset{\text{i.i.d.}}{\sim} P_\theta(\cdot \mid x)$. This is well-justified when each sample is an independent, stateless API call to a fixed model at temperature $T > 0$ with deterministic canonicalization (e.g., code execution, string normalization). In practice, positive correlation $\rho > 0$ between samples can arise from model-version drift during data collection, stochastic canonicalization (e.g., an LLM judge), or infrastructure-level batching effects. Under $\rho$-correlated samples the effective sample size drops to $K_{\mathrm{eff}} \approx K/(1 + (K{-}1)\rho)$, and the Hoeffding bound degrades to $\exp(-2K_{\mathrm{eff}}(p - \tfrac{1}{2})^2)$. The qualitative conclusion---more samples reduce variance---survives, but at a slower rate. Crucially, the conformal coverage guarantee (Theorem~\ref{thm:coverage}) depends only on \emph{exchangeability} of the calibration and test data, not on i.i.d.\ agent samples, and is therefore unaffected.
\end{remark}

\subsection{The bias--variance landscape: a summary}

\begin{table}[t]
\centering
\caption{Bias--variance profile of evaluation methods (per-query). $K$ = number of samples, $n$ = calibration set size.}\label{tab:biasvar}
\resizebox{\textwidth}{!}{%
\begin{tabular}{lcccc}
\toprule
Method & Bias & Variance & Error $\to 0$? & Guarantee \\
\midrule
Single sample & $0$ & $p^\star(1{-}p^\star)$ & No & --- \\
LLM-as-judge & $\E_a[b_J] \neq 0$ (irreducible) & Prop.~\ref{prop:judge_biasvar}\textsuperscript{\dag} & No (bias persists) & --- \\
Self-consistency (mode) & $0$ if $p^\star{>}\tfrac{1}{2}$;\; $\to{-}p^\star$ if $p^\star{\le}\tfrac{1}{2}$ & $\le\exp(-2K(p^\star{-}\tfrac{1}{2})^2)$ & No (bias if $p^\star{\le}\tfrac{1}{2}$) & --- \\
\midrule
\textbf{Ours (conformal)} & \multicolumn{2}{c}{\textbf{Coverage gap} $\le 1/(n{+}1)$} & \textbf{Yes} ($\to 0$ as $n\to\infty$) & $\PP(Y {\in} S) \ge 1{-}\a$ \\
\bottomrule
\multicolumn{5}{l}{\textsuperscript{\dag}\footnotesize Three-term decomposition: agent variance $+$ judge variance $+$ bias variance; does not vanish with more samples.}
\end{tabular}}
\end{table}

The key observation from Table~\ref{tab:biasvar} is that no existing method simultaneously controls bias and variance while providing a formal guarantee. Our framework achieves this by (i) using self-consistency for variance reduction, (ii) outputting a \emph{set} rather than a point to absorb residual bias, and (iii) calibrating the set via conformal prediction to obtain an exact, distribution-free coverage guarantee.

\section{Self-Consistency Sampling and Canonicalization}\label{sec:selfconsist}

\subsection{Self-consistency sampling}
For a fixed query $x \in \X$, we draw $K$ independent samples from the agent:
\begin{equation}\label{eq:sampling}
a_1, a_2, \dots, a_K \overset{\text{i.i.d.}}{\sim} P_\theta(\cdot \mid x).
\end{equation}
The empirical distribution over raw answers is
\[
\hat{P}_K(a \mid x) = \frac{1}{K} \sum_{i=1}^{K} \1\{a_i = a\}.
\]

\subsection{Canonicalization}\label{sec:canon}

Different surface forms can express the same answer---``42,'' ``42.0,'' and ``The answer is~42'' all mean the same thing. Canonicalization maps these to a single representative so that frequency counts reflect semantic agreement, not superficial~variation.

\begin{definition}[Canonicalization]\label{def:canon}
A \emph{canonicalization function} is a mapping
\[
\mathrm{Canon}: \X \times \A \to \C,
\]
where $\C$ is a space of canonical representations. We require that $\mathrm{Canon}$ respects semantic equivalence: if $a \simeq a'$ semantically, then $\mathrm{Canon}(x,a) = \mathrm{Canon}(x,a')$.
\end{definition}

Applying canonicalization yields the empirical canonical distribution:
\begin{equation}\label{eq:canon_dist}
\hat{P}_K(c \mid x) = \frac{1}{K} \sum_{i=1}^{K} \1\{\mathrm{Canon}(x, a_i) = c\}.
\end{equation}

\subsection{Canonicalization: implementation and stability}\label{sec:canon_impl}

Since the quality of the consensus vote depends directly on the quality of $\mathrm{Canon}$, we treat canonicalization as a first-class algorithmic component. Three regimes arise in practice:
\begin{enumerate}[leftmargin=1.2em]
\item \emph{Deterministic canonicalization} for structured answers---parse ``42.0'' and ``42'' to the same integer, match option letters in MCQ tasks, or execute code and record pass/fail. This eliminates surface-form variation entirely.
\item \emph{Embedding-based clustering} for open-ended tasks, using cosine similarity thresholds on dense embeddings to group semantically equivalent responses.
\item \emph{LLM-assisted canonicalization}, where a lightweight model (e.g., GPT-4.1 at temperature $0$) classifies each response as correct or incorrect before clustering.
\end{enumerate}
Each approach involves specific failure modes (over-merging vs.\ under-merging). Implementation details, stability diagnostics, and empirical requirements are in Appendix~\ref{app:canon_details}.

\begin{assumption}[Canonicalization quality]\label{ass:canon_quality}
Throughout the theoretical analysis (Sections~\ref{sec:var_reduction}--\ref{sec:reliability}), we assume that $\mathrm{Canon}$ correctly maps semantically equivalent answers to the same canonical class. When this assumption is violated, the variance reduction guarantees (Theorem~\ref{thm:variance_reduction}) degrade gracefully: under-merging reduces $p_{\mathrm{canon}}$, increasing the required $K$, while the conformal coverage guarantee (Theorem~\ref{thm:coverage}) remains valid regardless (set sizes simply grow to compensate).
\end{assumption}

\begin{remark}[LLM-based canonicalization and circularity]\label{rem:canon_circularity}
When canonicalization uses an LLM judge (regime~(iii) above), a potential circularity arises: if the same model family serves as both agent and canonicalizer, systematic biases could propagate. For instance, the judge might systematically group incorrect answers into a ``correct'' cluster due to self-preference or verbosity bias, corrupting the consensus vote. Three considerations mitigate this concern:

\textbf{(a) Functional separation.}
In our experiments, the canonicalizer (GPT-4.1 at temperature $0$, producing deterministic binary labels) operates in a fundamentally different regime from the agent (GPT-4.1 at temperature $0.7$, generating stochastic free-form responses). At $T{=}0$, the judge is a fixed deterministic function---analogous to code execution or regex matching---not a stochastic evaluator. The bias profiles of deterministic classification and stochastic generation are distinct.

\textbf{(b) Coverage guarantee is robust to canonicalization errors.}
Assumption~\ref{ass:canon_quality} states the key safeguard: if the canonicalizer makes errors (merging incorrect answers with correct ones, or splitting correct answers), the conformal coverage guarantee (Theorem~\ref{thm:coverage}) remains valid. Canonicalization errors manifest as \emph{inflated prediction set sizes}, not as invalid coverage---the framework honestly reflects that canonicalization is noisy by returning larger sets.

\textbf{(c) Cross-family validation breaks the self-preference loop.}
Our open-weight experiments provide a direct test: Llama~4 Maverick and Mistral Small~24B are canonicalized by GPT-4.1---a different model family with different training data and biases. If within-family self-preference were corrupting canonicalization, cross-family results would diverge. They do not: coverage ($\ge 0.960$) and conditional coverage ($\ge 0.949$) are consistent across all three families (Table~\ref{tab:multimodel}).

For maximal rigor, we recommend deterministic canonicalization (code execution, numeric extraction, option matching) wherever feasible, reserving LLM-based canonicalization for tasks where no alternative exists. Three of our five benchmarks use deterministic canonicalization.
\end{remark}

\subsection{Variance reduction via consensus aggregation}\label{sec:var_reduction}

We now prove that self-consistency sampling achieves exponential variance reduction for identifying the correct answer, and that canonicalization further amplifies this effect.

\begin{theorem}[Exponential variance reduction via consensus]\label{thm:variance_reduction}
Let $c^\star \in \C$ be the unique acceptable canonical class for query $x$, and let $p := P_\theta(\mathrm{Canon}(x, f_\theta(x)) = c^\star) > 1/2$. Then after $K$ i.i.d.\ samples:
\begin{enumerate}[leftmargin=1.2em]
\item \textbf{Mode correctness:}
\begin{equation}\label{eq:mode_correct}
\PP(c_{(1)} \neq c^\star) \le \exp\!\left(-2K\left(p - \tfrac{1}{2}\right)^2\right).
\end{equation}
\item \textbf{Rank concentration:} For any $r \ge 2$,
\begin{equation}\label{eq:rank_conc}
\PP(\mathrm{rank}(c^\star; x) \ge r) \le \binom{|\C|}{r-1} \exp\!\left(-\frac{K(p - (1-p)/(r-1))^2}{2}\right),
\end{equation}
where $|\C|$ is the number of distinct canonical classes and the bound is nontrivial when $p > (1-p)/(r-1)$.
\item \textbf{Variance decay:}
\begin{equation}\label{eq:var_decay}
\Var\bigl(\1\{c_{(1)} = c^\star\}\bigr) \le \exp\!\left(-2K\left(p - \tfrac{1}{2}\right)^2\right).
\end{equation}
\end{enumerate}
\end{theorem}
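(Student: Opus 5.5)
Each part reduces to a concentration bound on empirical canonical frequencies. Write $\hat p := \hat P_K(c^\star \mid x)$, which is the average of $K$ i.i.d.\ $\mathrm{Bernoulli}(p)$ indicators $\1\{\mathrm{Canon}(x,a_i)=c^\star\}$. For \textbf{mode correctness} the key observation is that $\hat p > 1/2$ makes $c^\star$ the mode deterministically. The frequencies of all other classes sum to $1-\hat p < 1/2 < \hat p$, so no other class can tie or exceed $c^\star$, and tie-breaking plays no role. Hence $\{c_{(1)}\neq c^\star\}\subseteq\{\hat p\le 1/2\}=\{\hat p - p \le -(p-\tfrac12)\}$, and Hoeffding's inequality \cite{Hoeffding1963} for $[0,1]$-valued variables gives~\eqref{eq:mode_correct} directly.

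\textbf{Variance decay} follows from the first part. Let $Z := \1\{c_{(1)} = c^\star\}$ and $q := \PP(Z=0)$. Then $\Var(Z) = q(1-q)\le q$, and~\eqref{eq:var_decay} is~\eqref{eq:mode_correct} again.

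\textbf{Rank concentration.} The event $\mathrm{rank}(c^\star;x)\ge r$ means there exist $r-1$ distinct classes $c_1,\dots,c_{r-1}\neq c^\star$, each with empirical frequency at least $\hat p$ (pessimistically counting ties as losses). I would take a union bound over the at most $\binom{|\C|}{r-1}$ such subsets $T$, and for a fixed $T$ bound the probability that every class in $T$ reaches frequency $\hat p$. On that event, summing over $T$ gives $(r-1)\hat p \le \hat P_K(T\mid x)\le 1-\hat p$. Writing $q_T := P_\theta(T\mid x)\le 1-p$, this implies $\hat p - \hat P_K(T)/(r-1) \le 0$ while its mean is $p - q_T/(r-1)\ge p-(1-p)/(r-1)=:\Delta>0$. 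The statistic $\hat p - \hat P_K(T)/(r-1)$ is an average of i.i.d.\ variables $\1\{c^\star\}-\1\{T\}/(r-1)$, each taking values in an interval of length at most $1+1/(r-1)\le 2$. Hoeffding with range $2$ then gives $\exp(-2K\Delta^2/4)=\exp(-K\Delta^2/2)$, which is exactly the stated exponent, and the union bound supplies the binomial prefactor.

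The main obstacle is the rank bound: getting the range constant right and making sure the reduction from ``$r-1$ classes each beat $c^\star$'' to a single sum does not lose the stated constant. The choice of bounding each summand's range by $2$ uniformly in $r\ge2$ is what should reproduce $K\Delta^2/2$. If $|\C|$ is infinite, the union bound is vacuous, and the statement should be read with $|\C|$ as the number of classes with positive mass or a finite support assumption.
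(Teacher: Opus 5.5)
Your proof is correct. Parts 1 and 3 follow the paper's argument: the paper's necessary condition $N^\star\le K/2$ for an incorrect mode is your containment $\{c_{(1)}\neq c^\star\}\subseteq\{\hat p\le 1/2\}$, and it bounds $\Var(\1\{c_{(1)}=c^\star\})=q_K(1-q_K)\le 1-q_K$ just as you do.

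Part 2 shares the paper's skeleton, a union bound over $(r-1)$-subsets $T$ of incorrect classes followed by Hoeffding for each fixed $T$. Your per-$T$ step is different, and yours is the one that actually gives the stated exponent.

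\begin{itemize}
\item The paper argues that ``every $c'\in T$ beats $c^\star$'' forces $\frac1K\sum_{c'\in T} n(c')>p$, and concludes with $\exp(-K(p-p_T)^2/2)$. That implication does not follow. It replaces the random count $n(c^\star)$ by its mean $Kp$, and it discards the $(r-1)$-fold averaging over $T$.
\item Using only $p_T\le 1-p$, the paper's exponent is $K(2p-1)^2/2$. This coincides with the claimed $K\bigl(p-(1-p)/(r-1)\bigr)^2/2$ only when $r=2$; for $r\ge 3$ the stated bound is strictly stronger than what the paper's argument yields.
\item Your centered summand $\1\{c_i=c^\star\}-\1\{c_i\in T\}/(r-1)$ keeps $n(c^\star)$ random and uses the averaging over $T$. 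Its mean is at least $\Delta=p-(1-p)/(r-1)$ no matter how the mass $q_T$ is distributed inside $T$.
\item Your non-strict condition ``empirical frequency at least $\hat p$'' also correctly covers ties broken against $c^\star$ under the random tie-breaking rule. The paper's strict $n(c')>n(c^\star)$ misses them.
\end{itemize}

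So your worry about losing the constant is unfounded. The range bound $1+1/(r-1)\le 2$ gives exactly $\exp(-K\Delta^2/2)$. Keeping the exact range $r/(r-1)$ would sharpen the exponent to $2K\Delta^2(r-1)^2/r^2$, and the union bound only needs $\binom{|\C|-1}{r-1}$ subsets.
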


\begin{proof}
\textbf{Part 1.} The correct class $c^\star$ has count $N^\star = \sum_{i=1}^K \1\{c_i = c^\star\} \sim \mathrm{Bin}(K, p)$. The mode is incorrect only if some other class has count $\ge N^\star$. Since the total count of all incorrect classes is $K - N^\star$, a necessary condition is $K - N^\star \ge N^\star$, i.e., $N^\star \le K/2$. By Hoeffding's inequality:
\[
\PP(N^\star \le K/2) = \PP\!\left(\frac{N^\star}{K} \le \frac{1}{2}\right) \le \exp\!\left(-2K\left(p - \frac{1}{2}\right)^2\right).
\]

\textbf{Part 2.} $\mathrm{rank}(c^\star) \ge r$ requires at least $r-1$ classes to beat $c^\star$'s count. Fix any set $T$ of $r-1$ incorrect classes. Their combined mass is $p_T \le 1-p$. For each $c' \in T$ to beat $c^\star$, we need $n(c') > n(c^\star)$, which in particular requires the average count of $T$ to exceed $Kp/(r-1)$. By Hoeffding applied to the sum of counts in $T$:
\[
\PP(\text{all } c' \in T \text{ beat } c^\star) \le \PP\!\left(\frac{1}{K}\sum_{c' \in T} n(c') > p\right) \le \exp\!\left(-\frac{K(p - p_T)^2}{2}\right).
\]
A union bound over $\binom{|\C|}{r-1}$ choices of $T$ gives~\eqref{eq:rank_conc}.

\textbf{Part 3.} Let $q_K := \PP(c_{(1)} = c^\star) \ge 1 - \exp(-2K(p-1/2)^2)$. Then $\Var(\1\{c_{(1)} = c^\star\}) = q_K(1-q_K) \le 1 - q_K \le \exp(-2K(p-1/2)^2)$.
\end{proof}

\begin{corollary}[Sample complexity for $\d$-reliable mode identification]\label{cor:sample_complexity}
To ensure $\PP(c_{(1)} = c^\star) \ge 1 - \d$, it suffices to take
\begin{equation}\label{eq:sample_complexity}
K \ge \frac{\ln(1/\d)}{2(p - 1/2)^2}.
\end{equation}
For example, with $p = 0.7$ and $\d = 0.01$: $K \ge \lceil \ln(100)/(2 \cdot 0.04) \rceil = 58$.
\end{corollary}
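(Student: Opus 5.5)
The corollary follows directly from Part~1 of Theorem~\ref{thm:variance_reduction}; no new probabilistic argument is needed. We work under the same hypotheses as the theorem: there is a unique acceptable canonical class $c^\star$, the probability $p$ that a sample canonicalizes to $c^\star$ satisfies $p>1/2$, and the $K$ samples are i.i.d. Part~1 gives
\[
\PP(c_{(1)} \neq c^\star) \le \exp\!\bigl(-2K(p-\tfrac12)^2\bigr).
\]
So it is enough to choose $K$ large enough that this right-hand side is at most $\d$.

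Since $p>1/2$, the quantity $(p-1/2)^2$ is strictly positive, and the bound is a decreasing function of $K$. The condition $\exp(-2K(p-\tfrac12)^2)\le \d$ is equivalent, after taking logarithms, to $2K(p-\tfrac12)^2 \ge \ln(1/\d)$. This in turn is equivalent to $K \ge \ln(1/\d)/\bigl(2(p-\tfrac12)^2\bigr)$, which is~\eqref{eq:sample_complexity}. Any $K$ satisfying this inequality therefore gives $\PP(c_{(1)}=c^\star)\ge 1-\d$. Because the bound is monotone in $K$, every larger $K$ also works. In practice one takes the ceiling of the right-hand side, since $K$ must be an integer.

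For the numerical example, substitute $p=0.7$ and $\d=0.01$. Then $(p-\tfrac12)^2=0.04$ and $\ln 100\approx 4.605$, so the threshold is $4.605/0.08\approx 57.6$. Rounding up gives $K=58$.

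The only step that needs care is confirming that Part~1 is being applied within its hypotheses. Its proof treats the event $\{N^\star\le K/2\}$ as a necessary condition for mode failure, and ties are included in that event. So the bound holds under any tie-breaking convention, and nothing further needs to be checked. We note in the text that the corollary is only a sufficient condition: the Hoeffding bound is conservative, so smaller $K$ may succeed in practice. The corollary also inherits the requirement $p>1/2$; when $p\le 1/2$, Proposition~\ref{prop:sc_biasvar} shows that no finite $K$ can guarantee that the mode is correct.
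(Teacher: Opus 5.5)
Your proposal is correct and matches the paper's argument. The paper states the corollary without a separate proof, as the direct inversion of the Part~1 bound $\exp(-2K(p-\tfrac12)^2)\le\d$, with $\lceil 4.605/0.08\rceil = 58$ in the example. One caveat on your closing aside only: for $p\le 1/2$ it is this majority-based bound that fails, not mode identification itself. If $c^\star$ is still the unique most probable class (e.g.\ $p=0.4$ against two incorrect classes of mass $0.3$ each), the empirical mode converges to $c^\star$, so Proposition~\ref{prop:sc_biasvar} should not be cited as ruling out every finite-$K$ guarantee.
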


\subsection{Canonicalization as variance reduction}\label{sec:canon_var}

Canonicalization does more than enable aggregation---it provably reduces the variance of the consensus by consolidating fragmented probability mass.

\begin{proposition}[Canonicalization amplifies consensus]\label{prop:canon_variance}
Let $p_{\mathrm{raw}} := \max_{a \in \A^\star(x)} P_\theta(a \mid x)$ be the probability of the most likely acceptable \emph{raw} answer, and $p_{\mathrm{canon}} := P_\theta(\mathrm{Canon}(x, f_\theta(x)) \in \mathrm{Canon}(x, \A^\star(x)))$ the probability of the acceptable \emph{canonical class}. Then:
\begin{equation}\label{eq:canon_amplification}
p_{\mathrm{canon}} \ge p_{\mathrm{raw}},
\end{equation}
with equality only when canonicalization is the identity. If $L$ raw answers map to the same acceptable canonical class, each with probability $\ge p_{\min}$, then $p_{\mathrm{canon}} \ge L \cdot p_{\min}$.

Consequently, the variance reduction exponent improves from $2K(p_{\mathrm{raw}} - 1/2)^2$ to $2K(p_{\mathrm{canon}} - 1/2)^2$, and the sample complexity~\eqref{eq:sample_complexity} decreases by a factor of $\bigl((p_{\mathrm{raw}} - 1/2)/(p_{\mathrm{canon}} - 1/2)\bigr)^2$.
\end{proposition}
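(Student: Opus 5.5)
The plan is to reduce everything to a statement about pushforward measures. Write $C^\star := \mathrm{Canon}(x, \A^\star(x))$ for the set of acceptable canonical classes; under Assumption~\ref{ass:canon_quality} together with the uniqueness hypothesis of Theorem~\ref{thm:variance_reduction}, this is a single class $c^\star$. The key identity is
\[
p_{\mathrm{canon}} = \sum_{a \in \A :\, \mathrm{Canon}(x,a) \in C^\star} P_\theta(a \mid x) \;\ge\; \sum_{a \in \A^\star(x)} P_\theta(a\mid x) = p^\star(x),
\]
since every $a \in \A^\star(x)$ satisfies $\mathrm{Canon}(x,a)\in C^\star$ by definition of $C^\star$. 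Let $a^\dagger \in \A^\star(x)$ attain $p_{\mathrm{raw}}$; for infinite $\A^\star(x)$ use a supremum and argue with near-maximizers. Then $p_{\mathrm{raw}} = P_\theta(a^\dagger\mid x) \le p^\star(x) \le p_{\mathrm{canon}}$, because all terms are nonnegative. This proves \eqref{eq:canon_amplification}.

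For the $L$-fold bound, suppose raw answers $a_1,\dots,a_L$ are distinct, map to the same acceptable class, and each has mass at least $p_{\min}$. They are disjoint atoms inside the preimage of $C^\star$, so $p_{\mathrm{canon}} \ge \sum_{j} P_\theta(a_j\mid x) \ge L\,p_{\min}$.

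The equality clause is where I expect the real obstacle, because as literally stated it is false. Equality $p_{\mathrm{canon}} = p_{\mathrm{raw}}$ forces the preimage of $C^\star$ to carry no mass other than that of $a^\dagger$, up to null sets. This can happen with a non-identity $\mathrm{Canon}$, for instance one that merges only zero-probability or only incorrect answers into other classes. I would therefore prove the corrected contrapositive: if at least two distinct raw answers of positive probability map into $C^\star$, the inequality is strict. Equality then holds exactly when canonicalization is the identity on the positive-mass part of the preimage of $C^\star$, and I would note that this is the intended reading of ``only when canonicalization is the identity.''

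For the consequences, I would apply Theorem~\ref{thm:variance_reduction}, Part~1, twice. The first application is to raw answers, viewed as the identity canonicalization with target $a^\dagger$. This is legitimate only if the hypothesis $p_{\mathrm{raw}}>1/2$ holds, in which case $a^\dagger$ is the unique raw class exceeding one half. The second application is to canonical classes with $p = p_{\mathrm{canon}}$. Under $p_{\mathrm{raw}} > 1/2$, monotonicity of $t\mapsto (t-1/2)^2$ on $[1/2,1]$ gives $2K(p_{\mathrm{canon}}-1/2)^2 \ge 2K(p_{\mathrm{raw}}-1/2)^2$, so the exponent improves.

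Finally, taking the ratio of the two instances of Corollary~\ref{cor:sample_complexity} gives $K_{\mathrm{canon}}/K_{\mathrm{raw}} = \bigl((p_{\mathrm{raw}}-1/2)/(p_{\mathrm{canon}}-1/2)\bigr)^2 \le 1$. This is the stated reduction factor, valid when $p_{\mathrm{raw}}>1/2$. When $p_{\mathrm{raw}}\le 1/2$, the raw bound is vacuous, and I would remark that canonicalization can then be the difference between no guarantee and an exponential one.
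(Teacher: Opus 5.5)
Your proof is correct and follows the paper's route. The paper's argument is only that the preimage of the acceptable canonical class contains the most likely acceptable raw answer, so its mass dominates that single atom, followed by substituting $p_{\mathrm{canon}}$ for $p_{\mathrm{raw}}$ in the Hoeffding bound~\eqref{eq:mode_correct}. Your $L$-fold atom sum, the caveat $p_{\mathrm{raw}}>1/2$ needed for the exponent and sample-complexity comparison to mean anything, and your observation that the equality clause is false as stated are all sound additions that the paper's proof does not make.

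One small fix: keep your first characterization of equality, namely that the preimage of $\mathrm{Canon}(x,\A^\star(x))$ carries no positive mass outside $a^\dagger$, rather than ``identity on the positive-mass part of the preimage.'' Without a unique acceptable class, the identity map with two positive-mass acceptable answers still gives $p_{\mathrm{canon}}=p^\star(x)>p_{\mathrm{raw}}$.
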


\begin{proof}
By definition, $p_{\mathrm{canon}} = \sum_{a : \mathrm{Canon}(x,a) = c^\star} P_\theta(a \mid x) \ge \max_a P_\theta(a \mid x) \cdot \1\{a \in \A^\star(x)\} = p_{\mathrm{raw}}$.
The improvement in the Hoeffding exponent follows directly from substituting $p_{\mathrm{canon}}$ for $p_{\mathrm{raw}}$ in~\eqref{eq:mode_correct}.
\end{proof}

\begin{remark}[Canonicalization can change the $p \le 1/2$ regime]
A crucial practical consequence: an agent may have $p_{\mathrm{raw}} < 1/2$ (no single raw answer dominates) but $p_{\mathrm{canon}} > 1/2$ (the correct canonical class dominates after consolidation). Canonicalization can transform a regime where self-consistency \emph{amplifies} bias into one where it \emph{reduces} variance. This provides a formal justification for investing in high-quality canonicalization.
\end{remark}

\subsection{Ranked consensus}\label{sec:ranked}
We rank distinct canonical answers by decreasing empirical frequency:
\begin{equation}\label{eq:rank}
\mathrm{rank}(c; x) := \bigl|\{c' \in \C(x) : \hat{P}_K(c' \mid x) > \hat{P}_K(c \mid x)\}\bigr| + 1,
\end{equation}
with ties broken uniformly at random, producing an ordering $c_{(1)}, c_{(2)}, \dots$.

\begin{definition}[Consensus strength and margin]\label{def:consensus}
The \emph{consensus strength} is $\phi(x) := \hat{P}_K(c_{(1)} \mid x)$ and the \emph{consensus margin} is
$\Delta(x) := \hat{P}_K(c_{(1)} \mid x) - \hat{P}_K(c_{(2)} \mid x)$.
\end{definition}

Consensus strength measures the model's overall confidence (does one answer dominate, or do many answers tie?). Consensus margin measures the gap between the top two---a large margin means the model strongly favors one answer. Both quantities feed into the sequential stopping rule (Section~\ref{sec:sequential}).

With the ranked consensus in hand, we now construct prediction sets that adapt their size to each query's difficulty.

\section{Set-Valued Predictions}\label{sec:setvalued}

Instead of committing to a single answer, we output the top $M$ most frequent candidates. For easy questions, $M{=}1$ suffices; for hard ones, a larger $M$ is needed. The key question is how to choose $M$ with a guarantee---that is the role of conformal calibration in the next section.

Given ranked canonical answers $c_{(1)}, c_{(2)}, \dots$ for query $x$, define the \emph{top-$M$ prediction set}:
\begin{equation}\label{eq:topM}
S_M(x) := \{c_{(1)}, c_{(2)}, \dots, c_{(M)}\}.
\end{equation}
The family $\{S_M(x)\}_{M \ge 1}$ is nested: $S_1(x) \subseteq S_2(x) \subseteq \cdots$.

A fixed $M$ ignores the varying difficulty of queries. We seek a data-driven procedure that selects $M = M(x)$ adaptively with a formal coverage guarantee---the setting of \emph{conformal prediction}.

\section{Conformal Calibration}\label{sec:conformal}

\subsection{Background: split conformal prediction}
\begin{sloppypar}
Conformal prediction \cite{Vovk2005, Angelopoulos2021} is a distribution-free framework for constructing prediction sets with guaranteed marginal coverage. Its single requirement is that calibration and test data are interchangeable---their joint distribution does not depend on ordering.
\end{sloppypar}

\begin{definition}[Exchangeability]\label{def:exchangeable}
Random variables $Z_1, \dots, Z_{n+1}$ are \emph{exchangeable} if their joint distribution is invariant under all permutations $\sigma$:
\[
(Z_1, \dots, Z_{n+1}) \overset{d}{=} (Z_{\sigma(1)}, \dots, Z_{\sigma(n+1)}).
\]
\end{definition}

\subsection{Nonconformity scores from ranked consensus}\label{sec:scores}

The nonconformity score measures how ``surprising'' the correct answer is in the ranked list. If the correct answer is the most frequent ($\text{rank}=1$), the model is confident and the score is low. If the correct answer is buried at rank $5$, the score is high---the model's consensus disagrees with the truth.

\begin{definition}[Rank-based nonconformity score]\label{def:score}
Let $x$ be a query, $\{c_{(1)}, c_{(2)}, \dots\}$ the ranked canonical answers from $K$ self-consistency samples, and $y \in \A^\star(x)$ an acceptable answer with canonical form $c^y := \mathrm{Canon}(x, y)$. The \emph{rank-based nonconformity score} is
\begin{equation}\label{eq:score}
s(x, y) := \mathrm{rank}(c^y; x) = \min\{r \in \N : c^y \in S_r(x)\}.
\end{equation}
If $c^y \notin \C(x)$, set $s(x,y) := +\infty$.
\end{definition}

\begin{definition}[Cumulative-probability nonconformity score]\label{def:score_cumprob}
An alternative score incorporating frequency information:
\begin{equation}\label{eq:score_cumprob}
s^{\mathrm{cp}}(x, y) := \sum_{r=1}^{\mathrm{rank}(c^y; x)} \hat{P}_K(c_{(r)} \mid x) \in [0,1].
\end{equation}
\end{definition}

\subsection{The calibration procedure}\label{sec:calibration_proc}

\begin{assumption}[Calibration data]\label{ass:calibration}
We have a calibration dataset $\D_{\mathrm{cal}} = \{(x_i, y_i)\}_{i=1}^{n}$ where each $y_i \in \A^\star(x_i)$, drawn exchangeably with the test example $(x_{n+1}, y_{n+1})$.
\end{assumption}

\begin{remark}[When exchangeability holds and when it breaks]\label{rem:exchangeability}
Exchangeability is satisfied when calibration and test examples are drawn i.i.d.\ from the same distribution---the standard setting of benchmark evaluation with random train/test splits. It also holds under random subsampling from any fixed dataset, regardless of how the dataset was originally constructed.

Exchangeability can be violated in several practically relevant scenarios:
\begin{enumerate}[leftmargin=1.2em]
\item \textbf{Curated benchmark sets}: if items were hand-selected to emphasize difficult cases or specific capabilities, the calibration and test distributions may differ systematically.
\item \textbf{Temporal drift}: if the agent is updated between calibration and deployment, the score distribution shifts. Periodic recalibration (re-running the calibration procedure on fresh data from the updated agent) is the standard mitigation.
\item \textbf{Adversarial construction}: if test queries are chosen adversarially \emph{after} observing the calibration set, exchangeability fails by design. This is outside our threat model.
\end{enumerate}
When exchangeability is only approximately satisfied (e.g., mild covariate shift between calibration and test), weighted conformal prediction (Section~\ref{sec:weighted}) provides a principled correction by reweighting calibration scores according to the likelihood ratio $p_{\mathrm{test}}(x)\allowbreak/p_{\mathrm{cal}}(x)$, preserving coverage under the test distribution.
\end{remark}

For each calibration example $(x_i, y_i)$:
\begin{enumerate}[leftmargin=1.2em]
\item Draw $K$ self-consistency samples for query $x_i$; canonicalize and rank.
\item Compute $s_i := s(x_i, y_i)$.
\end{enumerate}

\begin{definition}[Conformal threshold]\label{def:quantile}
Define $k := \lceil (n+1)(1-\a) \rceil$ and
\begin{equation}\label{eq:M_star}
M^\star := s_{(k)},
\end{equation}
the $k$-th smallest calibration score.
\end{definition}

For a test query $x_{n+1}$, return:
\begin{equation}\label{eq:conformal_set}
S(x_{n+1}) := S_{M^\star}(x_{n+1}) = \{c_{(1)}, \dots, c_{(M^\star)}\}.
\end{equation}

\subsection{Finite-sample coverage guarantee}

\begin{theorem}[Marginal coverage guarantee]\label{thm:coverage}
Let $(x_1, y_1), \dots, (x_n, y_n), (x_{n+1}, y_{n+1})$ be exchangeable, and let $s_i = s(x_i, y_i)$. Define $M^\star$ as in~\eqref{eq:M_star}. Then:
\begin{equation}\label{eq:coverage_thm}
\PP\bigl(y_{n+1} \in S_{M^\star}(x_{n+1})\bigr) \ge 1 - \a.
\end{equation}
If the scores have no ties a.s.:
\begin{equation}\label{eq:coverage_exact}
1 - \a \;\le\; \PP\bigl(s_{n+1} \le M^\star\bigr) \;\le\; 1 - \a + \frac{1}{n+1}.
\end{equation}
\end{theorem}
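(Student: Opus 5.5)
The plan is the standard split-conformal rank argument. The first step is to reduce the event of interest to a statement about scores. By Definition~\ref{def:score} and the nesting $S_1(x)\subseteq S_2(x)\subseteq\cdots$, the canonical form of $y_{n+1}$ lies in $S_{M^\star}(x_{n+1})$ exactly when $s_{n+1}\le M^\star$. The same holds with $s_{n+1}=+\infty$, since then the answer is in no $S_r$. So it suffices to bound $\PP(s_{n+1}\le M^\star)$, where $M^\star=s_{(k)}$ and $k=\lceil (n+1)(1-\a)\rceil$.

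Next I would transfer exchangeability to the scores. Each $s_i$ is the same measurable function applied to $(x_i,y_i)$ together with that item's own self-consistency draws. The tie-breaking randomization is independent and generated identically per item. Hence, if I include the internal sampling randomness in each $Z_i$, the vector $(s_1,\dots,s_{n+1})$ is exchangeable. This is the step where some care is needed, because the ranking uses random samples. I will handle it by letting $Z_i$ be the tuple of query, label, the $K$ samples, and the tie-breaking uniforms. I assume the calibration and test tuples are generated this way; this is implicit in Assumption~\ref{ass:calibration}.

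For the lower bound, I first treat the case $k\le n$; if $k>n$, set $M^\star=+\infty$ and the claim is trivial. The key observation is that $s_{n+1}\le s_{(k)}$, the $k$-th smallest of the calibration scores, if and only if $s_{n+1}$ is among the $k$ smallest of all $n+1$ scores, where ties are counted in its favor. In terms of the rank $R$ of $s_{n+1}$ among all $n+1$ scores, with ties broken favorably, this is the event $R\le k$. By exchangeability, the probability that the rank is at most $k$ is at least $k/(n+1)\ge 1-\a$. To make this precise I would use the permutation argument: average the indicator $\1\{s_j\le \text{$k$-th smallest of the others}\}$ over $j$. At least $k$ indices satisfy it, so the mean, which equals the probability by exchangeability, is at least $k/(n+1)$.

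For the upper bound, assume there are no ties a.s. Then $R$ is exactly uniform on $\{1,\dots,n+1\}$, so $\PP(s_{n+1}\le M^\star)=k/(n+1)$. Since $k<(n+1)(1-\a)+1$, this is at most $1-\a+1/(n+1)$.

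The main obstacle I expect is a caveat about this upper bound. Rank scores are integer-valued, so ties are in fact typical. The two-sided bound therefore applies only under the stated no-ties hypothesis, which can be arranged, for example, by randomized smoothing $s_i+U_i$ with independent uniforms. Without it, only the one-sided bound holds. I would state this restriction explicitly rather than try to extend the upper bound.
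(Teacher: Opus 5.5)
Your proposal follows the paper's argument: reduce coverage to the event $\{s_{n+1}\le M^\star\}$, use exchangeability of the scores to control the rank of $s_{n+1}$ among all $n+1$ scores, and read off $k/(n+1)$ with $k=\lceil (n+1)(1-\alpha)\rceil$. Your version is more careful, since the permutation-averaging step actually proves the paper's one-line claim that ties can only increase coverage, and you make explicit both the $k>n$ case and the folding of the self-consistency draws and tie-breaking randomness into the exchangeable units. One point you should state explicitly, which the paper also glosses over, is that the equivalence $y_{n+1}\in S_{M^\star}(x_{n+1})\iff s_{n+1}\le M^\star$ fails when $M^\star=s_{n+1}=+\infty$ if $S_{+\infty}(x)$ contains only the observed candidates, so both your $k>n$ case and the case $s_{(k)}=+\infty$ need the convention that $S_{+\infty}(x)$ is the whole canonical space.
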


\begin{proof}
By exchangeability, the scores $s_1, \dots, s_{n+1}$ are exchangeable. The rank of $s_{n+1}$ among $\{s_1, \dots, s_{n+1}\}$ is uniformly distributed over $\{1, \dots, n+1\}$.

Define $k := \lceil (n+1)(1-\a) \rceil$. The event $\{s_{n+1} \le M^\star\}$ occurs when $s_{n+1}$'s rank is at most $k$. In the no-ties case:
\[
\PP(s_{n+1} \le M^\star) = \frac{k}{n+1} = \frac{\lceil (n+1)(1-\a) \rceil}{n+1}.
\]
Since $\lceil (n+1)(1-\a) \rceil \ge (n+1)(1-\a)$, we get $\PP \ge 1-\a$. Since $\lceil (n+1)(1-\a) \rceil \le (n+1)(1-\a) + 1$, we get $\PP \le 1-\a + 1/(n+1)$.

With ties, coverage can only increase.
\end{proof}

\subsection{Adaptive prediction sets}\label{sec:adaptive}
Using the cumulative-probability score $s^{\mathrm{cp}}$ and its conformal quantile $\hat{q}^{\mathrm{cp}}_{1-\a}$:
\begin{equation}\label{eq:adaptive_set}
S^{\mathrm{cp}}(x) := \bigl\{c_{(r)} : r = 1, \dots, R(x)\bigr\}, \quad R(x) := \min\!\left\{r : \sum_{j=1}^{r} \hat{P}_K(c_{(j)} \mid x) \ge \hat{q}^{\mathrm{cp}}_{1-\a}\right\}.
\end{equation}
For high-consensus queries, $R(x)$ is small; for diffuse queries, it is larger. Coverage is preserved by the same conformal argument.

\section{Reliability Guarantees}\label{sec:reliability}

Conformal set-valued evaluation is \emph{exactly calibrated}: the evaluation's coverage statement has bounded error. It is also \emph{transparently diagnostic} of agent quality---agent bias is surfaced through prediction set size rather than hidden. These two properties distinguish the \emph{agent's} systematic errors (diagnosed but not fixed) from the \emph{evaluation's} systematic errors (provably controlled).

\subsection{Conservative coverage guarantee}

The evaluation's coverage error shrinks to zero as the calibration set grows---unlike point estimators, where bias persists indefinitely.

\begin{theorem}[Coverage error control]\label{thm:exact_coverage}
Under the conditions of Theorem~\ref{thm:coverage}, the \emph{coverage gap} satisfies
\begin{equation}\label{eq:exact_gap}
0 \;\le\; \PP(Y_{n+1} \in S(X_{n+1})) - (1-\a) \;\le\; \frac{1}{n+1}.
\end{equation}
That is, the evaluation is conservative (never under-covers) and the over-coverage vanishes as $n \to \infty$.
\end{theorem}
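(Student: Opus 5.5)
The plan is to reduce the statement to Theorem~\ref{thm:coverage}, using the identity of events
\[
\{Y_{n+1} \in S(X_{n+1})\} = \{s_{n+1} \le M^\star\}.
\]
The first step is to verify this identity from the definitions. By Definition~\ref{def:score}, $s_{n+1} = \mathrm{rank}(c^{Y_{n+1}}; X_{n+1})$ is the smallest $r$ with $c^{Y_{n+1}} \in S_r(X_{n+1})$. The family $\{S_M\}$ is nested by~\eqref{eq:topM}. Hence $c^{Y_{n+1}} \in S_{M^\star}(X_{n+1})$ holds if and only if $s_{n+1} \le M^\star$. This includes the case $s_{n+1} = +\infty$, where the answer never appears in any $S_M$ and the event fails on both sides. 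Membership of $Y_{n+1}$ in $S$ is understood through its canonical form, as in~\eqref{eq:conformal_set}.

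The lower bound $\PP(Y_{n+1}\in S(X_{n+1})) \ge 1-\a$ then follows directly from~\eqref{eq:coverage_thm}. I would recall the argument briefly. By exchangeability, $\{s_{n+1} > s_{(k)}\}$ forces $s_{n+1}$ to be among the $n+1-k$ largest of the $n+1$ scores. With arbitrary tie-breaking among the $n+1$ exchangeable scores, this has probability at most $(n+1-k)/(n+1)$. Since $k = \lceil (n+1)(1-\a)\rceil \ge (n+1)(1-\a)$, we get $k/(n+1) \ge 1-\a$. One edge case needs a convention: if $k > n$ (that is, $\a < 1/(n+1)$), then $M^\star := +\infty$ and coverage is trivially $1$. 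In that case the upper bound $1 \le 1-\a+1/(n+1)$ still holds, because $\a < 1/(n+1)$.

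For the upper bound I would invoke~\eqref{eq:coverage_exact}. Under the no-ties hypothesis, the rank of $s_{n+1}$ among $s_1,\dots,s_{n+1}$ is exactly uniform on $\{1,\dots,n+1\}$. The coverage event is exactly the event that this rank is $\le k$, so the coverage probability equals $k/(n+1) \le \bigl((n+1)(1-\a)+1\bigr)/(n+1)$.

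The main obstacle is that rank scores are integer-valued and tie heavily; for instance, most scores equal $1$ for a strong agent. With ties, coverage can strictly exceed $1-\a+1/(n+1)$, so the upper half of~\eqref{eq:exact_gap} is only as strong as the no-ties clause inherited from ``the conditions of Theorem~\ref{thm:coverage}''. I would state the theorem with that clause explicitly.

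To remove the clause, I would first try the standard randomization. Replace $s_i$ by $\tilde s_i = s_i + U_i$ with $U_i \overset{\text{i.i.d.}}{\sim} \mathrm{Unif}(0,1)$ drawn independently of the data. Then form the set by including $c_{(r)}$ whenever $r + U_{n+1} \le \tilde s_{(k)}$. The randomized scores are exchangeable and a.s.\ tie-free, so the exact identity $\PP = k/(n+1)$ holds and both bounds follow verbatim. The cost is that $S$ becomes a randomized set, which is a mild change to~\eqref{eq:conformal_set} and should be noted in the statement.
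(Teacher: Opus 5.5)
Your proposal is correct and is the same argument as the paper's. The paper's proof is the one-line claim that the bound is immediate from~\eqref{eq:coverage_exact}; your event identity, the $k>n$ convention, and your observation that integer rank scores tie heavily (so the upper half really does rest on the inherited no-ties clause, which the paper's proof uses silently) are accurate refinements of that same reduction.

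Two small caveats are worth fixing. First, your claim that the event fails on both sides when $s_{n+1}=+\infty$ is right only if $M^\star<\infty$. When $M^\star=+\infty$, both sides hold, via the convention $S_{+\infty}=\C$ that you adopt for $k>n$. Second, in the optional randomization $+\infty+U_i=+\infty$, so ties at $+\infty$ survive. The exact value $k/(n+1)$ therefore requires a.s.\ finite scores, for example a finite $\C$ in which unobserved classes receive finite, randomly ordered ranks.
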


\begin{proof}
This is immediate from~\eqref{eq:coverage_exact}: the lower bound gives conservativeness, and the upper bound gives the $1/(n+1)$ slack.
\end{proof}

\begin{remark}[Contrast with score-based evaluation]
Conformal calibration is the only method whose evaluation error vanishes with sample size. For single-sample evaluation, the error analog is $\Var(p^\star(X))$, which depends on the query distribution and does not shrink. For LLM-as-judge, the error includes the irreducible judge bias $\E[b_J]$ (Remark~\ref{rem:judge_irreducible}). Neither can be eliminated by collecting more data.
\end{remark}

\subsection{Bias immunity: coverage holds regardless of agent quality}\label{sec:bias_immunity}

\begin{theorem}[Bias immunity of conformal coverage]\label{thm:bias_immunity}
The coverage guarantee~\eqref{eq:coverage_thm} holds for \emph{any} agent $f_\theta$, regardless of:
\begin{enumerate}[leftmargin=1.2em]
\item the agent's per-query acceptability rate $p^\star(x)$ (including $p^\star(x) = 0$),
\item the presence of systematic bias or stable hallucinations,
\item the output distribution $P_\theta(\cdot \mid x)$,
\item the number of acceptable answers $|\A^\star(x)|$.
\end{enumerate}
Formally: let $\Theta$ denote the space of all possible agent parameters. Then
\begin{equation}\label{eq:bias_immunity}
\inf_{\theta \in \Theta} \PP_\theta\bigl(Y_{n+1} \in S(X_{n+1})\bigr) \ge 1 - \a,
\end{equation}
where $\PP_\theta$ denotes the joint distribution over calibration data, agent samples, and the test example under agent $\theta$.
\end{theorem}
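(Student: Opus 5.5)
The plan is to reduce the statement to Theorem~\ref{thm:coverage} by showing that its hypotheses never refer to the agent. We fix an arbitrary $\theta \in \Theta$ and verify that the scores $s_1,\dots,s_{n+1}$ are exchangeable under $\PP_\theta$. Since $1-\a$ does not depend on $\theta$, taking the infimum then gives~\eqref{eq:bias_immunity}.

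\textbf{Step 1: realize the score as a fixed function.} Write the score as $s_i = g(x_i, y_i, \xi_i)$. Here $\xi_i$ collects the $K$ agent samples for $x_i$ and the uniform tie-breaking randomness. The map $g$ (canonicalize, rank by~\eqref{eq:rank}, read off the rank of $\mathrm{Canon}(x_i,y_i)$) is the same deterministic function for every item. Under agent $\theta$, each $\xi_i$ is drawn from a kernel $Q_\theta(\cdot \mid x_i)$, conditionally independently across $i$ given the queries. Nothing about $p^\star$, $P_\theta$, or $|\A^\star(x)|$ enters beyond this kernel. The value $s=+\infty$ (e.g.\ $p^\star(x)=0$, so the acceptable class never appears) is allowed as an element of the extended reals.

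\textbf{Step 2: exchangeability is inherited.} The triples $Z_i := (x_i, y_i, \xi_i)$ are exchangeable because the pairs $(x_i,y_i)$ are exchangeable (Assumption~\ref{ass:calibration}) and the $\xi_i$ are generated by a common kernel applied coordinatewise. Concretely, for a permutation $\sigma$ the joint law of $(Z_{\sigma(i)})_i$ is the pushforward of the permuted pair law, which is unchanged, composed with the product kernel $\prod_i Q_\theta(d\xi_i \mid x_i)$, which is permutation-equivariant. Applying the same $g$ coordinatewise preserves exchangeability, so $(s_1,\dots,s_{n+1})$ are exchangeable under $\PP_\theta$.

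\textbf{Step 3: apply the rank argument with ties and infinities.} The rank argument of Theorem~\ref{thm:coverage} now applies verbatim. The one point needing care, and the one I expect to be the main obstacle, is that rank scores are integer-valued and so heavily tied, and may equal $+\infty$. The proof of Theorem~\ref{thm:coverage} dismissed ties in one line. I would make it rigorous by the standard argument: by exchangeability $\PP(s_{n+1} \le s_{(k)}^{(n+1)}) \ge k/(n+1)$, where $s^{(n+1)}_{(k)}$ is the $k$-th smallest of all $n+1$ scores, since at least $k$ of the $n+1$ indices satisfy $s_j \le s^{(n+1)}_{(k)}$. Moreover, $s_{n+1} \le s^{(n+1)}_{(k)}$ if and only if $s_{n+1} \le s_{(k)}$ computed from calibration scores alone. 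This holds with $+\infty$ allowed, and with the convention $M^\star=+\infty$ (the full candidate set) when $k>n$.

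Combining the three steps gives $\PP_\theta(Y_{n+1}\in S(X_{n+1})) \ge 1-\a$ for every $\theta$, and hence the infimum bound.
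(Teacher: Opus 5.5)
Your proposal is correct and follows the same route as the paper: for fixed $\theta$ the scores are one common function of the exchangeable pairs and of independently drawn per-item agent samples, so they are exchangeable, and the conformal rank argument never refers to $\theta$, so the bound holds uniformly and the infimum is at least $1-\a$. Your treatment of tied and infinite scores is more careful than the paper's. One point you should state explicitly, which the paper's proof also leaves implicit: when $M^\star=+\infty$ the set $S$ must be the trivial set containing every class, not just the observed classes $\C(x)$. Otherwise the events $\{s_{n+1}\le M^\star\}$ and $\{Y_{n+1}\in S(X_{n+1})\}$ differ precisely when $M^\star=s_{n+1}=+\infty$, which is exactly the $p^\star(x)=0$ regime the statement emphasizes.
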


\begin{proof}
The proof of Theorem~\ref{thm:coverage} uses only exchangeability of $(x_i, y_i)$, which is a property of the data-generating process, not of the agent. The agent enters only through the nonconformity scores $s_i$, and the conformal argument is valid for \emph{any} score distribution. Specifically:

For any fixed~$\theta$, the scores $s_1, \dots, s_{n+1}$ are determined by the exchangeable data $(x_1, y_1),\allowbreak \dots,\allowbreak (x_{n+1}, y_{n+1})$ and the independent agent samples $\{a_j^{(i)}\}$. Since agent samples for different queries are independent, and the $(x_i, y_i)$ are exchangeable by assumption, the scores remain exchangeable. The uniform rank argument then yields $\PP_\theta(s_{n+1} \le M^\star) \ge 1{-}\a$ for every~$\theta$.
\end{proof}

\begin{remark}[What ``bias immunity'' means and does not mean]\label{rem:bias_immunity_precise}
Theorem~\ref{thm:bias_immunity} is about the \emph{evaluation's} accuracy, not the agent's quality. It does not fix or remove agent bias. It guarantees that the coverage statement---``an acceptable answer lies in $S(x)$ with probability $\ge 1-\a$''---is true regardless of how biased the agent is.

Concretely: a completely broken agent earns $M^\star = +\infty$ (honestly reflecting failure), while a strong agent earns $M^\star = 1$. In both cases the coverage guarantee holds. A single calibration procedure is valid for any agent, including ones with unknown or adversarial bias profiles.
\end{remark}

\subsection{Bias transparency: set size as honest quality diagnostic}\label{sec:bias_transparency}

The prediction set size $|S(x)|$ is a \emph{diagnostic signal} that faithfully reflects the agent's quality. The following theorem makes this precise: (1)~better agents get smaller sets; (2)~perfect agents get singletons; (3)~agents that cannot solve a task get infinitely large sets, honestly reflecting failure; (4)~expected set size scales with average answer rank.

\begin{theorem}[Bias transparency---set size reflects agent quality]\label{thm:bias_transparency}
Let $M^\star(\theta, \a)$ denote the conformal threshold for agent $\theta$ at level $\a$. Then:
\begin{enumerate}[leftmargin=1.2em]
\item \textbf{Monotonicity in agent quality:} If agent $\theta_1$ stochastically dominates agent $\theta_2$ in the sense that $s_i^{(\theta_1)} \le_{\mathrm{st}} s_i^{(\theta_2)}$ for each calibration example $i$ (i.e., $\theta_1$ consistently produces acceptable answers at higher ranks), then
\begin{equation}\label{eq:monotone}
M^\star(\theta_1, \a) \le M^\star(\theta_2, \a) \quad \text{almost surely}.
\end{equation}

\item \textbf{Perfect agent:} If $p^\star(x) = 1$ for all $x$ in the calibration distribution (the agent always produces acceptable answers), then $s_i = 1$ for all $i$, and $M^\star = 1$.

\item \textbf{Biased agent:} Let $\b := \PP_{X}(p^\star(X) < 1/K)$ be the fraction of queries where the agent is unlikely to produce an acceptable answer. If $\b > \a$, then $M^\star = +\infty$ (no finite prediction set suffices at level $\a$).

\item \textbf{Set size--bias correspondence:} The expected set size satisfies
\begin{equation}\label{eq:setsize_bias}
\E[|S(X)|] \ge \E\!\left[\mathrm{rank}\bigl(\mathrm{Canon}(X, Y(X));\; X\bigr)\right] \cdot (1 - \a) - O(1/n).
\end{equation}
\end{enumerate}
\end{theorem}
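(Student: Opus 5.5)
I will treat the four parts separately, since each reduces to an elementary property of the order statistic $M^\star = s_{(k)}$ with $k=\lceil (n+1)(1-\a)\rceil$ from Definition~\ref{def:quantile}.

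\textbf{Monotonicity.} The stated hypothesis $s_i^{(\theta_1)} \le_{\mathrm{st}} s_i^{(\theta_2)}$ is distributional, so an almost-sure comparison needs a coupling. I would build both score vectors on a common probability space via the quantile transform: for each $i$, draw one uniform $U_i$ and set $s_i^{(\theta_j)} := F_{i,\theta_j}^{-1}(U_i)$. Stochastic dominance then gives $s_i^{(\theta_1)} \le s_i^{(\theta_2)}$ pointwise for every $i$. The $k$-th order statistic is coordinatewise nondecreasing, so $M^\star(\theta_1,\a)\le M^\star(\theta_2,\a)$ almost surely under this coupling. Without a coupling, the conclusion holds only in the sense $\le_{\mathrm{st}}$, and I would say so explicitly.

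\textbf{Perfect agent.} If $p^\star(x)=1$, every one of the $K$ samples canonicalizes into $\mathrm{Canon}(x,\A^\star(x))$. Under Assumption~\ref{ass:canon_quality} (with a unique acceptable class, as in Theorem~\ref{thm:variance_reduction}), that class is the only observed class. Hence $\mathrm{rank}=1$, every $s_i=1$, and $M^\star = s_{(k)} = 1$.

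\textbf{Biased agent and set-size correspondence.} This is the main obstacle, because the claims are stated too strongly. For an item with $p^\star(x)<1/K$, the acceptable class is unobserved among the $K$ samples with probability $(1-p^\star)^K > (1-1/K)^K \approx e^{-1}$, not with probability one. In that event $s_i=+\infty$ by Definition~\ref{def:score}. I would therefore prove a corrected statement: with $\b' := \PP(s_i=+\infty)$, we have $M^\star=+\infty$ whenever the number of infinite scores exceeds $n-k$. By the law of large numbers this happens with probability tending to one when $\b' > \a$. I would then note that $\b'$ is bounded below by roughly $\b(1-1/K)^K$, or equals $\b$ in the idealized reading where $p^\star<1/K$ is taken to mean the acceptable class is never sampled.

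For part~4, the inequality cannot hold pathwise, since $|S(X)| = M^\star$ is constant in $X$. My plan is to compare $M^\star$ with the empirical $(1-\a)$-quantile of the scores. By Markov-type reasoning on the score distribution, $\E[s]\cdot(1-\a)$ is not a lower bound on this quantile in general. So I would instead aim to prove $\E[|S(X)|] \ge \E[s\,\1\{s\le M^\star\}]$, which follows because on the covered event $s\le M^\star$. I would then relate the right-hand side to $(1-\a)\E[s]$ only under a bounded-rank assumption, using the $1/(n+1)$ slack from Theorem~\ref{thm:coverage} to produce the $O(1/n)$ term. I expect this step to need an added hypothesis, and I would flag it as such.
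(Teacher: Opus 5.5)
Your treatment of Parts~1--2 matches the paper's. Your objections to Parts~3--4 are correct, and they target exactly the steps the paper's proof skips.

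In Part~1 the paper only asserts that $s_{(k)}$ is stochastically ordered. Your quantile coupling is what makes ``almost surely'' meaningful. However, drawing the $U_i$ independently reproduces the joint law of each score vector only when the scores are independent across items. Under mere exchangeability, marginal dominance $s_i^{(\theta_1)}\le_{\mathrm{st}}s_i^{(\theta_2)}$ does not order $s_{(k)}$.

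Both your Part~2 and the paper's also need $\a\ge 1/(n+1)$. Otherwise $k=n+1>n$, and $M^\star$ is $+\infty$ by the usual convention even for a perfect agent.

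In Part~3 the paper passes from $\PP(s_i=+\infty)\ge(1-1/K)^K$ straight to ``the number of infinite scores exceeds $n\b$''. Your $\b'\ge\b(1-1/K)^K$ is the right repair, and the change is not cosmetic. Take a single acceptable class, every $p^\star(x)$ just below $1/K$, $K=10$ and $\a=0.5$. Then $\b=1>\a$ but $\b'\approx0.35<\a$, so $M^\star$ is finite with high probability.

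In Part~4 the paper reaches your inequality $\E|S(X)|\ge\E[s\,\1\{s\le M^\star\}]$. It then declares $\E[\mathrm{rank}\cdot\1\{Y\notin S\}]=O(1/n)$. This is false, since $\PP(Y\notin S)\approx\a$ and the rank on that event can be large or infinite. A small correction to your setup: $|S(X)|=\min(M^\star,|\C(X)|)$ rather than $M^\star$. Your inequality still holds, because $s\le|\C(X)|$ on the covered event.

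The remaining gap is your proposed repair of Part~4. No bounded-rank hypothesis, combined with the $1/(n+1)$ slack, can produce $(1-\a)\E[s]-O(1/n)$. The miss event $\{s>M^\star\}$ is the upper tail of $s$. For i.i.d.\ data, condition on the calibration set and apply Chebyshev's association inequality to the nondecreasing functions $s$ and $\1\{s>M^\star\}$. This gives $\E[s\,\1\{s>M^\star\}]\ge\PP(s>M^\star)\,\E[s]$, generally with a gap that does not shrink in $n$. The conformal slack controls how often a miss occurs, not how large the discarded ranks are.

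Here is a concrete counterexample with bounded ranks. Take ranks in $\{1,\dots,4\}$ (an MCQ task), with $s=1$ with probability $0.95$, $s=4$ with probability $0.05$, and $\a=0.1$. Then $M^\star=1$ with probability tending to one, so $\E|S(X)|\to1$. Meanwhile $(1-\a)\E[s]=0.9\cdot1.15=1.035$, so the claimed bound fails.

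What bounded ranks $s\le B$ actually give is $\E|S(X)|\ge\E[s]-B\,\PP(s>M^\star)\ge\E[s]-B\a$, with loss $B\a$ instead of $\a\E[s]$. So Part~4 must be restated, either as $\E|S(X)|\ge\E[s\,\1\{s\le M^\star\}]$ or in the $B\a$ form. It cannot be rescued by an extra hypothesis.
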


\begin{proof}
\textbf{Part 1.} If $s_i^{(\theta_1)} \le_{\mathrm{st}} s_i^{(\theta_2)}$ for each $i$, then the $k$-th order statistic satisfies $s_{(k)}^{(\theta_1)} \le s_{(k)}^{(\theta_2)}$ stochastically.

\textbf{Part 2.} If $p^\star(x) = 1$, then with probability $1$ all $K$ samples are acceptable, so the acceptable canonical class has rank $1$: $s_i = 1$ for all $i$, and $M^\star = s_{(k)} = 1$.

\textbf{Part 3.} If $p^\star(x_i) < 1/K$, then $\PP(s_i = +\infty) \ge (1 - 1/K)^K \ge e^{-1} - o(1) > 0$. When $\b > \a$, the number of infinite scores exceeds $n\b > n\a$, so the $\lceil(n+1)(1-\a)\rceil$-th score is $+\infty$.

\textbf{Part 4.} By definition, $|S(X)| = M^\star$ for the global threshold. For the adaptive version, $|S(X)| \ge \mathrm{rank}(c^{Y(X)}; X)$ whenever $Y(X) \in S(X)$. Taking expectations and using $\PP(Y \in S(X)) \ge 1-\a$:
\[
\E[|S(X)|] \ge \E[\mathrm{rank}(c^{Y(X)}; X) \cdot \1\{Y \in S(X)\}] \ge \E[\mathrm{rank}(c^{Y(X)}; X)] \cdot (1-\a) - \E[\mathrm{rank} \cdot \1\{Y \notin S\}].
\]
The second term is bounded and yields the $O(1/n)$ correction.
\end{proof}

\begin{remark}[Finite candidate spaces and under-coverage]\label{rem:finite_C}
Part~3 predicts $M^\star = +\infty$ when $\b > \a$, but in practice $M^\star$ is bounded by the number of distinct canonical classes $|\C(x)|$ observed in $K$ samples. When $|\C(x)|$ is small (e.g., $|\C| = 4$ for MCQ tasks), the conformal quantile remains finite even though some items are unsolvable. Marginal coverage still falls below $1{-}\a$ because, for items where $p^\star(x) = 0$, the correct class never enters the candidate pool---no prediction set over observed candidates can cover it. The under-coverage thus equals the unsolvable fraction $\b$, not a calibration error: conditional coverage on solvable items remains near-perfect (Table~\ref{tab:main_results}).
\end{remark}

\begin{remark}[Bias is visible, not hidden]
The fundamental distinction from LLM-as-judge evaluation is that bias in the agent is \emph{surfaced} through larger prediction sets, not \emph{concealed} in an opaque score. A practitioner who observes $M^\star = 8$ knows immediately that the agent frequently fails to rank acceptable answers highly. An LLM-judge score of $0.75$ carries no such interpretable diagnostic---the same score could arise from high-quality answers with a harsh judge, or poor-quality answers with a lenient judge.
\end{remark}

\subsection{Formal comparison with alternative evaluation methods}\label{sec:formal_comparison}

We now compare the conformal method's coverage guarantee against two standard baselines---single-sample evaluation and LLM-as-judge---showing that conformal calibration achieves lower evaluation error once the calibration set is large enough to undercut the judge's bias.

\begin{theorem}[Advantage over single-sample evaluation]\label{thm:dominance_single}
For any agent and any target reliability level $1-\d$, define the \emph{evaluation reliability} as the probability that the evaluation's assessment is within $\e$ of the truth.

For single-sample evaluation applied to a test set of $N$ queries:
\begin{equation}\label{eq:single_reliability}
\PP\!\left(\bigl|\hat{p}_{\mathrm{single}} - \bar{p}\bigr| > \e\right) \le 2\exp(-2N\e^2) \quad \text{(Hoeffding)},
\end{equation}
where $\hat{p}_{\mathrm{single}} = N^{-1}\sum_j \1\{a_j \in \A^\star(x_j)\}$.

For conformal set evaluation:
\begin{equation}\label{eq:conformal_reliability}
\PP\!\left(\bigl|\widehat{\mathrm{Cov}}(\a) - (1-\a)\bigr| > \e\right) \le 2\exp(-2N\e^2) + \frac{1}{n+1},
\end{equation}
where $\widehat{\mathrm{Cov}}$ is the empirical coverage on the test set.

Both converge at rate $O(1/\sqrt{N})$, but the conformal method provides:
\begin{enumerate}[leftmargin=1.2em]
\item A \emph{per-query guarantee} ($Y(x) \in S(x)$), not just an aggregate estimate.
\item An explicit reliability level $1-\a$ chosen by the user, vs.\ an unknown $\bar{p}$ that must be estimated.
\item A diagnostic set size per query, revealing per-query difficulty.
\end{enumerate}
\end{theorem}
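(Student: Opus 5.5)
The single-sample bound \eqref{eq:single_reliability} is a direct application of Hoeffding's inequality, and the conformal bound \eqref{eq:conformal_reliability} comes from splitting the deviation into a test-sampling term and a calibration term. For the single-sample part, take the test queries $x_1,\dots,x_N$ i.i.d.\ from $\mu$, with one independent agent draw $a_j$ per query. Then the indicators $Z_j=\1\{a_j\in\A^\star(x_j)\}$ are i.i.d.\ Bernoulli with mean $\E[p^\star(X)]=\bar p$. This is Proposition~\ref{prop:single_biasvar} averaged over $\mu$. The two-sided Hoeffding inequality for $[0,1]$-valued variables then gives \eqref{eq:single_reliability} immediately.

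For the conformal part, I will condition on the calibration data $\D_{\mathrm{cal}}$, together with its agent samples, which fixes $M^\star$. Given this, the test coverage indicators $W_j=\1\{s(x_j,y_j)\le M^\star\}$ are i.i.d.\ Bernoulli with mean equal to the conditional coverage $C(\D_{\mathrm{cal}}):=\PP(s_{n+1}\le M^\star\mid\D_{\mathrm{cal}})$. Conditional Hoeffding gives $\PP(|\widehat{\mathrm{Cov}}(\a)-C|>\e'\mid\D_{\mathrm{cal}})\le 2\exp(-2N\e'^2)$. The triangle inequality $|\widehat{\mathrm{Cov}}-(1-\a)|\le|\widehat{\mathrm{Cov}}-C|+|C-(1-\a)|$ then reduces the problem to controlling the calibration term $|C-(1-\a)|$.

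That calibration term is the main obstacle. Theorem~\ref{thm:exact_coverage} controls only the \emph{marginal} quantity $\E[C]-(1-\a)\in[0,1/(n+1)]$, not $C$ itself. The conditional coverage $C$ fluctuates around its mean like a Beta$(k,n+1-k)$ variable in the no-ties case, with standard deviation of order $1/\sqrt n$. So the clean additive $1/(n+1)$ will not follow from Hoeffding plus a union bound without some extra step. I would first try to read the statement as bounding the deviation from the \emph{marginal} coverage $\PP(Y\in S(X))$. In that reading the deviation $|\widehat{\mathrm{Cov}}-(1-\a)|$ exceeds $|\widehat{\mathrm{Cov}}-\PP(Y\in S)|$ by at most the deterministic slack $1/(n+1)$ from \eqref{eq:exact_gap}.

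Concretely, I would apply Hoeffding to $\widehat{\mathrm{Cov}}$ around its mean and absorb the $1/(n+1)$ shift. There are two ways to do this. The first is to enlarge the threshold to $\e+1/(n+1)$. The second is, when the shift cannot be absorbed, to charge it to the additive $1/(n+1)$ term via Markov's inequality applied to the nonnegative gap $C-(1-\a)$. If a fully rigorous conditional statement is needed, I would replace $1/(n+1)$ with a Beta-quantile or DKW term of order $\sqrt{\log(1/\d)/n}$, and note that this still vanishes as $n\to\infty$.

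Both bounds are $O(1/\sqrt N)$ in $N$. The three qualitative advantages listed in the theorem follow directly. The per-query statement comes from Theorem~\ref{thm:coverage}, the user-chosen level $1-\a$ is built into Definition~\ref{def:quantile}, and the set-size diagnostic comes from Theorem~\ref{thm:bias_transparency}.
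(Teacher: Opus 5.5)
There is a genuine gap in the conformal half, and it cannot be closed, because the target inequality is false. Your single-sample argument is correct and is the paper's.

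For \eqref{eq:conformal_reliability}, your diagnosis is right and sharper than the paper's own proof. That proof is exactly your ``first reading'': it calls the test indicators ``approximately independent'', applies Hoeffding centred at $\PP(Y\in S(X))$, and lets the $1/(n+1)$ ``account for the gap''. None of your three patches yields the stated inequality:
\begin{enumerate}
\item Hoeffding around the marginal mean is unavailable, because the indicators share $M^\star$. One has $\Var(\widehat{\mathrm{Cov}})=\E[C(1-C)]/N+\Var(C)$, and $\Var(C)$ does not decay in $N$.
\item Even granting independence, a deterministic offset $\delta\in[0,1/(n+1)]$ of the centre shifts the \emph{threshold}. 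You get $2e^{-2N(\epsilon-\delta)^2}$, i.e.\ the bound at level $\epsilon+1/(n+1)$. It never becomes an additive $1/(n+1)$ in probability, so enlarging the threshold proves a different statement.
\item The Markov step fails outright, because $C-(1-\alpha)$ is not nonnegative; only its mean is. For tie-free scores, $C\sim\mathrm{Beta}(k,n+1-k)$ lies below $1-\alpha$ with probability near $1/2$. Markov on the positive part needs $\E[(C-(1-\alpha))_+]$, which is of order $n^{-1/2}$, and returns that mean divided by $t$, not a bare $1/(n+1)$.
\end{enumerate}

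Here is why no extra step can work.
\begin{itemize}
\item \emph{Perfect agent.} Take the agent of Theorem~\ref{thm:bias_transparency}(2). Every score is $1$, so $M^\star=1$ and $\widehat{\mathrm{Cov}}=1$ surely. With $\alpha=0.1$, $\epsilon=0.05$, $n=100$, $N=1000$, the left side is $1$ and the right side is about $0.02$. This also shows that the upper half of \eqref{eq:exact_gap}, your ``deterministic slack'', needs tie-free scores, which the integer rank score is not.
\item \emph{Tie-free scores.} These do not rescue it. As $N\to\infty$ with $n$ fixed, the left side tends to $\PP(|C-(1-\alpha)|>\epsilon)$. For $n=100$, $\alpha=0.1$ (so $C\sim\mathrm{Beta}(91,10)$) and $\epsilon=0.01$, this is about $0.7$, while the right side tends to $1/101$.
\end{itemize}

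The right move is to say the statement is false and prove what your conditioning argument does give. Assume i.i.d.\ data, let $F$ be the score c.d.f., and set $q:=\inf\{t:F(t)\ge 1-\alpha-\eta\}$. The event $C=F(M^\star)<1-\alpha-\eta$ forces at least $k\ge n(1-\alpha)$ calibration scores below $q$, each with probability at most $1-\alpha-\eta$. Hence $\PP(C<1-\alpha-\eta)\le e^{-2n\eta^2}$, and combining with conditional Hoeffding,
\[
\PP\bigl(\widehat{\mathrm{Cov}}(\alpha)<1-\alpha-\epsilon-\eta\bigr)\le e^{-2N\epsilon^2}+e^{-2n\eta^2}.
\]
An upper-tail counterpart holds only for tie-free scores, with an extra $O(1/n)$ in the threshold. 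The calibration set therefore enters as a threshold shift of order $n^{-1/2}$ with failure probability $e^{-2n\eta^2}$, not as an additive $1/(n+1)$.
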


\begin{proof}
Equation~\eqref{eq:single_reliability} is Hoeffding's inequality for i.i.d.\ Bernoulli random variables.
For~\eqref{eq:conformal_reliability}: by Theorem~\ref{thm:coverage},
$\PP(Y_j \in S(x_j)) \in [1{-}\a,\; 1{-}\a+1/(n{+}1)]$ marginally.
The empirical coverage $\widehat{\mathrm{Cov}}$ averages~$N$ such indicators (approximately independent across test queries). Hoeffding's inequality applied to these indicators, centered at $\PP(Y \in S(X))$, gives the exponential tail. The $1/(n{+}1)$ term accounts for the gap between $\PP(Y \!\in\! S(X))$ and~$1{-}\a$.
\end{proof}

\begin{theorem}[Advantage over LLM-as-judge]\label{thm:advantage_judge}
Let the LLM judge have systematic bias $b_J = \E_{X,a}[b_J(X,a)] \neq 0$ and judge variance $\sigma_J^2 = \E_{X,a}[\Var(J(X,a) \mid X,\allowbreak a)]$. Then for any sample size~$N$:

\textbf{LLM-as-judge evaluation error:}
\begin{equation}\label{eq:judge_error}
\MSE(\hat{p}_J) = b_J^2 + \frac{\sigma_J^2 + \bar{p}(1-\bar{p})}{N} + O(N^{-2}).
\end{equation}
The bias term $b_J^2$ \emph{does not decay with $N$}.

\textbf{Conformal evaluation error (coverage gap):}
\begin{equation}\label{eq:conformal_error}
\bigl|\PP(Y \in S(X)) - (1-\a)\bigr| \le \frac{1}{n+1}.
\end{equation}
The error is \emph{controlled solely by the calibration set size $n$} and is \emph{independent of the agent's bias, the judge's bias, or any systematic error}.
\end{theorem}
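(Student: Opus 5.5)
The statement has two independent halves, and I would prove them separately and then set them side by side.

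\textbf{LLM-as-judge part.} I take $\hat p_J := N^{-1}\sum_{j=1}^N J(X_j,a_j)$, where the $X_j\sim\mu$ are i.i.d., each $a_j\sim P_\theta(\cdot\mid X_j)$ is drawn independently, and the judge calls are independent given $(X_j,a_j)$. Then:
\begin{enumerate}[leftmargin=1.2em]
\item Computing the mean. By the decomposition used in the proof of Proposition~\ref{prop:judge_biasvar}, extended by averaging over $X\sim\mu$, I get $\E[\hat p_J]=\bar p + b_J$.
\item Decomposing the MSE. The classical split~\eqref{eq:biasvar_decomp} gives $\MSE(\hat p_J) = b_J^2 + \Var(\hat p_J)$.
\item Reducing to one term. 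Since the $N$ terms are i.i.d., $\Var(\hat p_J)=\Var(J(X,a))/N$.
\item Expanding the single-term variance. By the law of total variance, conditioning on $(X,a)$,
\[
\Var(J(X,a)) = \sigma_J^2 + \Var\bigl(\mathrm{Accept}(X,a)+b_J(X,a)\bigr).
\]
\item Isolating the Bernoulli part. Write $\Var(\mathrm{Accept}(X,a))=\bar p(1-\bar p)$, which holds because $\mathrm{Accept}(X,a)$ is Bernoulli with mean $\bar p$. What is left over is $\Var(b_J(X,a)) + 2\,\mathrm{Cov}(\mathrm{Accept},b_J)$.
\end{enumerate}

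\textbf{Main obstacle.} The leftover term in step 5 does not fit the displayed form $\frac{\sigma_J^2+\bar p(1-\bar p)}{N}+O(N^{-2})$: it is an $O(1/N)$ contribution, not $O(N^{-2})$. I would handle it in one of two ways. Either I fold it into the stated constant by reading ``$\sigma_J^2$'' as the full judge-induced variance; or I state~\eqref{eq:judge_error} as
\[
\MSE(\hat p_J)=b_J^2+O(1/N),
\]
with the leading $1/N$ coefficient made explicit. I would say this in a brief remark. Either way, the qualitative claim does not depend on the constant. The $b_J^2$ term comes from the mean alone, not from the variance, so it is independent of $N$ and does not vanish as $N\to\infty$.

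\textbf{Conformal part.} Inequality~\eqref{eq:conformal_error} follows directly from Theorem~\ref{thm:exact_coverage}, which gives $0\le \PP(Y\in S(X))-(1-\a)\le 1/(n+1)$. Taking absolute values yields the bound. One caveat: the upper bound in that theorem was stated for the no-ties case. The rank score does have ties, so I would either appeal to the uniform random tie-breaking in~\eqref{eq:rank} to make the scores a.s.\ distinct, or add a standard randomized conformal tie-breaking step.

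\textbf{Independence claim.} For the claim that the error does not depend on any bias, I cite Theorem~\ref{thm:bias_immunity}, which shows the coverage guarantee holds for every agent $\theta$. No judge appears anywhere in the construction of $S$, so the judge's bias cannot enter. The bound $1/(n+1)$ involves only $n$.
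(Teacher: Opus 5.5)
Your judge half follows the paper's route: mean $\bar p+b_J$, $\Var(\hat p_J)=\Var(J(X,a))/N$, then the law of total variance. Your ``main obstacle'' is a real defect of~\eqref{eq:judge_error}, not of your argument. The identity $\MSE(\hat p_J)=b_J^2+\Var(J(X,a))/N$ is exact, so the leftover $\Var(b_J)+2\,\mathrm{Cov}(\mathrm{Accept},b_J)$ belongs to the $1/N$ coefficient, and there is no genuine $O(N^{-2})$ remainder. The paper's one-line proof glosses over this, and either of your repairs is acceptable.

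The genuine gap is in the conformal half, specifically your handling of ties.

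\textbf{First fix fails.} The uniform tie-breaking in~\eqref{eq:rank} orders candidate classes \emph{within} a single query. The score $s_i=\mathrm{rank}(c^{y_i};x_i)$ is still an integer in $\{1,\dots,|\C(x_i)|\}\cup\{+\infty\}$. So ties across the $n$ calibration scores are unavoidable once $n$ exceeds the number of distinct score values.

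\textbf{The claimed bound is false here.} With ties, the upper half of~\eqref{eq:coverage_exact}, and hence~\eqref{eq:conformal_error}, fails for the paper's deterministic set $S(X)=S_{M^\star}(X)$. Take the perfect agent of Theorem~\ref{thm:bias_transparency}(2): every $s_i=1$, so $\PP(Y\in S(X))=1$ and the gap equals $\a$. This exceeds $1/(n+1)$ as soon as $n+1>1/\a$. The paper's own numbers show the same thing: GPT-4.1 on MMLU has mean split coverage $0.979$ at $\a=0.10$, $n=500$, far above $0.90+1/501$. So no argument can establish the two-sided bound for $S_{M^\star}$. Moreover, the size of the over-coverage depends on the agent's score distribution. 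Theorem~\ref{thm:bias_immunity}, which you cite for independence, only gives the lower bound.

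\textbf{Second fix changes the procedure.} Randomizing the scores as $s_i+U_i$ does remove the ties, at least for finite scores. But the resulting set is $S_{M^\star}$ or $S_{M^\star-1}$ depending on $U_{n+1}$, so you would be proving the bound for a different, randomized procedure, not the one in the statement.

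\textbf{What you can prove.} For the paper's $S_{M^\star}$, you can establish the one-sided guarantee $\PP(Y\in S(X))\ge 1-\a$. It holds with ties and for every $\theta$. The paper's proof, ``a restatement of Theorem~\ref{thm:exact_coverage},'' inherits exactly this no-ties problem, so citing it does not close the gap either.
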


\begin{proof}
For the judge: $\hat{p}_J = N^{-1}\sum_j J(x_j, a_j)$. Then $\E[\hat{p}_J] = \bar{p} + b_J$, so $\Bias(\hat{p}_J) = b_J$. The variance is $\Var(\hat{p}_J) = N^{-1}\Var(J(X,a))$, decomposed via the law of total variance into agent variance and judge variance. The MSE follows.

For conformal: this is a restatement of Theorem~\ref{thm:exact_coverage}.
\end{proof}

\begin{corollary}[When does conformal evaluation achieve lower error?]\label{cor:dominance}
Under the MSE comparison criterion (where coverage gap and judge bias are both expressed as deviations from the true acceptability rate), conformal set evaluation achieves lower coverage error than LLM-as-judge whenever
\begin{equation}\label{eq:dominance_condition}
|b_J| > \frac{1}{n+1}.
\end{equation}
For reference, if the judge's bias reaches $|b_J| \ge 0.05$---a level documented in the literature \cite{Zheng2023Judge} for subjective evaluation tasks---then $n \ge 19$ suffices. On structured tasks with unambiguous answers (e.g., GSM8K, MMLU), judge accuracy can exceed conformal coverage (Table~\ref{tab:main_results}), implying smaller effective bias and a correspondingly larger crossover point. The comparison is most informative on ambiguous tasks where judge bias is hardest to bound. Note that this comparison is meaningful when both methods are assessed by their distance to the true acceptability rate; conformal sets and judge scores are complementary evaluation outputs (Remark~\ref{rem:complementary}).
\end{corollary}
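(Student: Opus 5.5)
The corollary is a direct comparison of the two error bounds already in hand, so I would prove it by putting both on the same scale and reading off the threshold. \textbf{First}, I fix the comparison criterion stated in the corollary: each method's error is measured as its asymptotic deviation from the truth. For LLM-as-judge, Theorem~\ref{thm:advantage_judge}, equation~\eqref{eq:judge_error}, gives $\MSE(\hat{p}_J) = b_J^2 + O(1/N)$. As $N \to \infty$ the root-MSE therefore tends to $|b_J|$, and for every finite $N$ it is at least $|b_J| - o(1)$; in fact $\MSE(\hat{p}_J) \ge b_J^2$ exactly, because the variance terms are nonnegative. So the judge's deviation is bounded below by $|b_J|$ uniformly in $N$. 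For conformal evaluation, Theorem~\ref{thm:exact_coverage} (equivalently~\eqref{eq:conformal_error}) gives the deviation $|\PP(Y \in S(X)) - (1-\a)| \le 1/(n+1)$, for every agent by Theorem~\ref{thm:bias_immunity}.

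\textbf{Second}, I chain these two bounds. If $|b_J| > 1/(n+1)$, then
\[
\text{conformal gap} \;\le\; \tfrac{1}{n+1} \;<\; |b_J| \;\le\; \sqrt{\MSE(\hat{p}_J)},
\]
which is~\eqref{eq:dominance_condition}.

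\textbf{Third}, I make the threshold explicit. Solving $|b_J| > 1/(n+1)$ gives $n > 1/|b_J| - 1$, so $n \ge \lceil 1/|b_J| \rceil - 1$ suffices, except at the boundary where $1/|b_J|$ is an integer and strict inequality needs one more. With $|b_J| \ge 0.05$ this yields $n + 1 > 20$ at the boundary, so I would state $n \ge 20$ strictly, or $n \ge 19$ under non-strict comparison, and note which convention the paper intends. The remaining claims, that on structured tasks the judge's bias is smaller so the crossover moves later and that the two outputs are complementary, are interpretive remarks. I would not prove them; I would simply point to Table~\ref{tab:main_results} and Remark~\ref{rem:complementary}.

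The main obstacle is conceptual rather than computational. The two error quantities measure different things: one is a coverage gap relative to the target $1-\a$, the other an estimation error for $\bar{p}$. The corollary's hypothesis (``under the MSE comparison criterion'') exists to identify them. I would make this explicit by defining the error of each method as the absolute deviation of its population-level target from the truth it claims to report, and checking that both bounds above are statements about exactly that quantity. Once that identification is granted, the proof is the one-line inequality above.
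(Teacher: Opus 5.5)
Your argument is the paper's own: the corollary is read off Theorem~\ref{thm:advantage_judge} by comparing the $N$-independent judge bias $|b_J| \le \sqrt{\MSE(\hat{p}_J)}$ with the conformal slack $1/(n+1)$, and your boundary remark is correct. At $|b_J| = 0.05$, condition~\eqref{eq:dominance_condition} needs $n \ge 20$ (in general $n \ge \lfloor 1/|b_J| \rfloor$), so the paper's $n \ge 19$ holds only at finite $N$, where $\Var(\hat{p}_J) > 0$ makes the last step of your chain strict.

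One citation to tighten: Theorem~\ref{thm:bias_immunity} gives only the lower bound $1-\a$. The two-sided $1/(n+1)$ bound that you and the paper both use rests on~\eqref{eq:coverage_exact}, which assumes tie-free scores, so for the integer-valued rank score it requires randomized tie-breaking.
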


\begin{remark}[Complementary, not universally dominant]\label{rem:complementary}
The ``advantage'' in Corollary~\ref{cor:dominance} compares coverage gap (our method) against MSE (point estimators). These metrics answer different questions: coverage gap measures whether the evaluation's reliability claim is valid, while MSE measures how accurately the evaluation estimates the agent's true accuracy $\bar{p}$. For \emph{reliability certification}---``can I trust this agent at level $1-\a$?''---conformal calibration is strictly superior once $n \ge \lceil 1/|b_J| \rceil - 1$ (e.g., $n \ge 19$ when $|b_J| = 0.05$). For \emph{accuracy estimation}---``what fraction of queries does the agent answer correctly?''---simple point estimators with confidence intervals remain useful. The methods are complementary: practitioners should use conformal sets for per-query reliability assessment and point estimates for aggregate performance reporting.
\end{remark}

\subsection{Variance of set-size as a quality estimator}\label{sec:setsize_variance}

Since set size serves as a quality diagnostic, we need it to be a stable estimator---the average set size should concentrate around its true mean as the test set grows.

\begin{proposition}[Concentration of the set-size estimator]\label{prop:setsize_concentration}
Let $\bar{M} := N^{-1}\sum_{j=1}^N |S(x_j)|$ be the average set size on a test set of $N$ queries. If $|S(x)| \le B$ almost surely (bounded set size), then
\begin{equation}\label{eq:setsize_conc}
\PP\!\left(\bigl|\bar{M} - \E[|S(X)|]\bigr| > t\right) \le 2\exp\!\left(-\frac{2Nt^2}{B^2}\right).
\end{equation}
The average set size concentrates around its expectation at rate $O(B/\sqrt{N})$ and provides a reliable, low-variance estimator of agent quality.
\end{proposition}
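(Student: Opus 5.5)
The bound in~\eqref{eq:setsize_conc} is Hoeffding's inequality \cite{Hoeffding1963} applied to the per-query set sizes $Z_j := |S(x_j)|$, $j=1,\dots,N$. The plan is to establish three things in turn: each $Z_j$ lies in a bounded interval, the $Z_j$ are independent, and they share the common mean $\E[|S(X)|]$.

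\textbf{Boundedness.} Each $Z_j$ is a positive integer; under the empty-set convention it can also be $0$. Either way $Z_j \in [0,B]$, so the range of each summand is at most $B$.

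\textbf{Conditioning on the calibration data.} The calibration scores, and hence $M^\star$ (or $\hat q^{\mathrm{cp}}_{1-\a}$ for the adaptive sets~\eqref{eq:adaptive_set}), are shared across all test queries. So the $Z_j$ are not unconditionally independent. I would therefore condition on $\D_{\mathrm{cal}}$ and on the calibration agent samples. Given these, the threshold is fixed. Each $Z_j$ is then a deterministic function of the test query $x_j$ and its own fresh agent samples $\{a^{(j)}_\ell\}_{\ell\le K}$. Taking test queries i.i.d.\ from $\mu$, with agent samples independent across queries as in the proof of Theorem~\ref{thm:bias_immunity}, the $Z_j$ are conditionally i.i.d.\ with mean $m := \E[|S(X)| \mid \D_{\mathrm{cal}}]$. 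Note that for the global threshold $|S(x)| = M^\star$ is constant given calibration, so the concentration is trivial there. The substantive case is the adaptive set.

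\textbf{Hoeffding step.} Hoeffding for independent variables with ranges of length $B$ gives
\[
\PP\bigl(|\bar M - m| > t \,\big|\, \D_{\mathrm{cal}}\bigr) \le 2\exp\!\left(-\frac{2N^2t^2}{N B^2}\right) = 2\exp\!\left(-\frac{2Nt^2}{B^2}\right).
\]
The right-hand side does not depend on the calibration data. Taking expectations over $\D_{\mathrm{cal}}$ therefore yields the unconditional bound, with $\E[|S(X)|]$ interpreted as the conditional mean given the calibrated threshold. This is the natural target for a deployed, already-calibrated procedure. The $O(B/\sqrt N)$ rate follows by setting the right-hand side equal to $\d$ and solving, which gives $t = B\sqrt{\ln(2/\d)/(2N)}$.

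\textbf{Main obstacle.} The delicate point is the mean being concentrated around. Unconditionally, the $Z_j$ are only exchangeable, coupled through the common threshold, so Hoeffding does not apply directly. I would resolve this by making the conditioning explicit and stating the result for the calibrated procedure. If the marginal expectation is genuinely wanted, one would need an extra term controlling the fluctuation of $m$ in $\D_{\mathrm{cal}}$. I would not attempt that here.
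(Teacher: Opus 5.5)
Your proof is correct and is essentially the paper's proof, which just notes $|S(x_j)|\in[1,B]$ and applies Hoeffding. Your explicit conditioning on $\D_{\mathrm{cal}}$ supplies a step the paper leaves implicit: it fixes the threshold so the summands are genuinely independent, and it reads the centering as $\E[|S(X)|\mid\D_{\mathrm{cal}}]$. Without that step the statement with the unconditional mean can fail, for example when $M^\star$ fluctuates across calibration splits.

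One small correction to your side remark. Even for the global threshold, $|S(x)|=\min\{M^\star,|\C(x)|\}$ varies with the number of observed classes; the paper reports average $|S|=1.32$ with $M^\star=2$ on HumanEval. So that case is not trivial, though your argument covers it unchanged.
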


\begin{proof}
Each $|S(x_j)| \in [1, B]$ is bounded. Apply Hoeffding's inequality.
\end{proof}

These variance bounds on set size motivate the next question: how does the coverage--efficiency trade-off depend on agent competence?

\section{Coverage--Efficiency Trade-Off}\label{sec:tradeoff}

A better agent produces smaller prediction sets---but how much smaller? This section quantifies the relationship between agent quality and set size, showing that the prediction set is an efficient diagnostic: strong agents need only singleton sets, while weak agents unavoidably require larger ones.

\subsection{Set size as a function of agent competence}
\begin{proposition}[Expected rank under single-acceptable-class]\label{prop:expected_rank}
Suppose there is a unique acceptable canonical class $c^\star$ with $p = P_\theta(c_i = c^\star) > 1/2$. Then:
\begin{equation}\label{eq:expected_rank_bound}
\E[\mathrm{rank}(c^\star; x)] \le 1 + \frac{1-p}{p} \cdot \frac{K-1}{K} \xrightarrow{K \to \infty} 1 + \frac{1-p}{p} = \frac{1}{p}.
\end{equation}
If $p > 1/2$, then $\E[\mathrm{rank}(c^\star; x)] < 2$ for all $K$.
\end{proposition}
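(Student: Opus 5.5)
The plan is to bound the rank by a deterministic function of the single binomial count $N^\star := \sum_{i=1}^K \1\{c_i = c^\star\} \sim \mathrm{Bin}(K,p)$, and then compute the expectation of that function in closed form. By~\eqref{eq:rank}, $\mathrm{rank}(c^\star;x) - 1$ counts the incorrect classes ranked above $c^\star$: those with strictly larger empirical count, plus tied classes that win the uniform tie-break. The pigeonhole step is the same as in the proof of Theorem~\ref{thm:variance_reduction}: the incorrect classes share total count $K - N^\star$. Every class that strictly beats $c^\star$ uses at least $N^\star + 1$ of this count, so
\[
\#\{\text{incorrect classes strictly above } c^\star\} \le \frac{K - N^\star}{N^\star + 1}.
\]
This is the bound I want to carry into the expectation.

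For the expectation I will use the standard binomial identity
\[
\E\!\left[\frac{1}{N^\star+1}\right] = \frac{1-(1-p)^{K+1}}{(K+1)p},
\]
which follows by rewriting $\binom{K}{j}/(j+1) = \binom{K+1}{j+1}/(K+1)$ and summing. It gives
\[
\E\!\left[\frac{K-N^\star}{N^\star+1}\right] = (K+1)\,\E\!\left[\frac{1}{N^\star+1}\right] - 1 = \frac{1-(1-p)^{K+1}}{p} - 1 \le \frac{1-p}{p}.
\]
This already yields $\E[\mathrm{rank}] \le 1/p$, the stated limit, uniformly in $K$. The degenerate event $N^\star = 0$ has probability $(1-p)^K$, and there the rank is infinite under Definition~\ref{def:score}. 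I will either adopt the convention that $c^\star$ is appended at rank $|\C(x)|+1 \le K+1$, which is consistent with the display since the ratio equals $K$ there, or note explicitly that the statement is about the rank among observed classes.

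The main obstacle is the tie contribution, together with the sharper factor $(K-1)/K$. A tied class has count exactly $N^\star$ rather than $N^\star+1$, so the pigeonhole bound weakens to $(K-N^\star)/N^\star$ for classes with count at least $N^\star$. The uniform tie-breaking, however, only charges each tied class $1/2$ in expectation. My first attempt is to split the incorrect classes into strict winners ($t_>$ of them) and ties ($t_=$ of them). Then
\[
t_>(N^\star+1) + t_= N^\star \le K - N^\star,
\]
so the expected rank contribution is at most $t_> + t_=/2$. Maximizing $t_> + t_=/2$ under this linear constraint should still give at most $(K-N^\star)/(N^\star+1)$ once $N^\star \ge 1$; I will check this case by case.

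For the factor $(K-1)/K$, I will try to exploit that any class beating $c^\star$ needs $N^\star \le K-1$, and that $\E[(K-N^\star)/(N^\star+1)] = \frac{1-p}{p}\bigl(1 - \text{a small correction}\bigr)$. If matching $(K-1)/K$ exactly proves delicate, I would settle for the $1/p$ bound, which already implies the displayed limit.

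The final claim then follows immediately. The right-hand side is at most $1/p$, and $p > 1/2$ gives $1/p < 2$ for every $K$.
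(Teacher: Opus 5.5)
Your argument is correct for everything it actually proves, and it takes a different route from the paper. The paper bounds each competitor separately, $\PP(n(c') \ge n(c^\star)) \le \PP(n(c') \ge Kp/2) + \PP(n(c^\star) \le Kp/2)$, sums over competing classes, and then appeals to an unspecified ``stochastic dominance argument''. As written, that sketch does not pin down the constant: the second term gets multiplied by the number of competitors, and Markov on the first term gives $2(1-p)/p$.

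You instead bound the \emph{number} of classes ahead of $c^\star$ deterministically by $(K-N^\star)/(N^\star+1)$. This needs no union bound over $\C$, and its expectation is exact: with $q := 1-p$,
\[
\E[\mathrm{rank}(c^\star;x)] \le 1 + \frac{q\,(1-q^{K})}{p} \le \frac{1}{p}.
\]
This holds for every $K$, and in fact for every $p>0$; for $p>1/2$ it gives the limit and $\E[\mathrm{rank}] < 2$. The tie step needs no case analysis. For $N^\star \ge 1$ you have $N^\star/(N^\star+1) \ge 1/2$, so $t_> + t_=/2 \le \bigl(t_>(N^\star+1) + t_= N^\star\bigr)/(N^\star+1) \le (K-N^\star)/(N^\star+1)$. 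You are right that the $+\infty$ convention of Definition~\ref{def:score} makes the expectation infinite, so the appended-rank convention must be stated explicitly.

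What the paper's concentration idea would add, if made rigorous, is a genuine use of $p>1/2$. On $\{N^\star > K/2\}$ no incorrect class can even tie $c^\star$. Combined with your bound, this gives $\E[\mathrm{rank}]-1 \le K\,\PP(N^\star \le K/2) \le K e^{-2K(p-1/2)^2}$, so the expected rank actually tends to $1$ rather than merely staying below $1/p$.

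Do not try to recover the factor $(K-1)/K$: the displayed inequality is false for small $K$, so no argument can give it.
\begin{itemize}
\item At $K=1$ the right-hand side equals $1$. For $p<1$, the single draw is incorrect with probability $q>0$, and then $c^\star$ has rank at least $2$ (or $+\infty$). So $\E[\mathrm{rank}] \ge 1+q > 1$.
\item Under the paper's uniform tie-breaking it also fails at $K=2$ with a single incorrect class. There $\E[\mathrm{rank}] = p^2 + 3pq + 2q^2 = 1+q$, while the bound is $1 + q/(2p) < 1+q$ for $p>1/2$.
\end{itemize}
Your route could not have reached it in any case. Since $q<1/2$ gives $q^K < 2^{-K} \le 1/K$, the exact expectation $1 + q(1-q^K)/p$ of your pigeonhole bound already exceeds $1 + (q/p)(K-1)/K$. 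Exploiting $N^\star \le K-1$ adds nothing, because $(K-N^\star)/(N^\star+1)$ already vanishes at $N^\star = K$.

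The right move is your fallback: state the proposition with $1 + q(1-q^K)/p$ (or simply $1/p$) in place of the $(K-1)/K$ expression. The limit $1/p$ and the claim $\E[\mathrm{rank}] < 2$ for all $K$ are unaffected.
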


\begin{proof}
$\mathrm{rank}(c^\star) = 1 + |\{c' \neq c^\star : n(c') \ge n(c^\star)\}|$. For any competing class $c'$ with probability $p_{c'} < p$:
\[
\PP(n(c') \ge n(c^\star)) \le \PP(n(c') \ge Kp/2) + \PP(n(c^\star) \le Kp/2).
\]
Summing over all competing classes (total mass $1-p$) and applying a stochastic dominance argument:
$\E[\mathrm{rank}(c^\star)] \le 1 + (1-p)/p \cdot (K-1)/K$.
\end{proof}

\subsection{Set size inflation under ambiguity}
\begin{proposition}[Ambiguity inflates calibration scores]\label{prop:ambiguity}
If a fraction $\b$ of calibration queries have $p^\star(x_i) < 1/K$ and $\b > \a$, then $M^\star = +\infty$.
\end{proposition}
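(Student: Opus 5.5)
The plan is to reduce the claim to the counting step already used in Part~3 of Theorem~\ref{thm:bias_transparency}, since the statement is that result specialised to the calibration sample. Write $k = \lceil (n+1)(1-\a)\rceil$, so that $M^\star = s_{(k)}$ by Definition~\ref{def:quantile}. The first step is a deterministic fact about order statistics: $s_{(k)} = +\infty$ exactly when fewer than $k$ of the scores $s_1,\dots,s_n$ are finite. Equivalently, the number $I$ of infinite calibration scores must satisfy $I \ge n-k+1$.

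The second step bounds the threshold $n-k+1$. Since $k \ge (n+1)(1-\a)$, we have
\[
n-k+1 \le n+1-(n+1)(1-\a) = (n+1)\a .
\]
So it suffices to show $I > (n+1)\a$. The number of items with $p^\star(x_i) < 1/K$ is $\b n$, and $\b > \a$. The plan is therefore to check that $\b n$ exceeds the threshold $n-k+1$, using $\b n > \a n$. I will handle the extra additive $\a$ (the difference between $(n+1)\a$ and $n\a$) through the integrality of the counts: $n-k+1$ is an integer at most $(n+1)\a$, and $\b n$ is an integer strictly greater than $\a n$. In the boundary case where $(n+1)\a$ is an integer, I will note that strict inequality $\b > \a$ still forces $\b n \ge n-k+1$, possibly after checking this case separately.

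The third step, which I expect to be the main obstacle, is to show that each low-acceptability item contributes an infinite score, i.e.\ $s_i = +\infty$ whenever $p^\star(x_i) < 1/K$. By Definition~\ref{def:score}, $s_i = +\infty$ exactly when the acceptable canonical class $c^{y_i}$ is absent from the observed pool $\C(x_i)$. The hypothesis $p^\star(x_i) < 1/K$ says the expected number of acceptable draws among the $K$ samples, namely $Kp^\star(x_i)$, is below one. I would follow the reading of Part~3 of Theorem~\ref{thm:bias_transparency}: these are the items on which the agent does not produce an acceptable answer in its $K$ calls, so $c^{y_i}\notin\C(x_i)$.

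To justify that reading rather than assume it silently, I would make the per-item probability explicit. By the i.i.d.\ sampling~\eqref{eq:sampling},
\[
\PP(s_i=+\infty) = (1-p^\star(x_i))^K > (1-1/K)^K \ge 1/4 \quad \text{for } K\ge 2 .
\]
I would then state that the proposition is the event-level version of this bound, with the low-$p^\star$ items scoring $+\infty$ on the realised calibration draw. Combining this with the first two steps gives $I \ge \b n \ge n-k+1$, hence $s_{(k)} = +\infty$ and $M^\star = +\infty$. Finally, I would cross-reference Remark~\ref{rem:finite_C}: when $|\C(x)|$ is finite in practice, the same counting shows that coverage rather than $M^\star$ absorbs the deficit.
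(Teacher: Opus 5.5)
Your Steps 1--2 follow the same counting route as the paper's proof, and they are correct. They are actually slightly sharper: the paper's proof ends up needing $\b > \a + 1/(n+1)$, whereas you work directly from $\b > \a$. You also need no boundary case or integrality argument. Since $n-k \le (n+1)\a - 1 < n\a$, any count $I \ge \b n > \a n$ already exceeds $n-k$, which forces $s_{(k)} = +\infty$.

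The gap is Step 3, and it cannot be closed under the stated hypothesis. The condition $p^\star(x_i) < 1/K$ does not imply $s_i = +\infty$. Given $x_i$ with a single acceptable class, that event has probability $(1-p^\star(x_i))^K$. This can be as low as about $(1-1/K)^K \approx 0.35$ for $K=10$, nowhere near $1$. You compute exactly this bound, and then call the proposition its ``event-level version''. That assumes what has to be proved.

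In fact the statement is false as written. Take $\a = 0.1$ and $K = 10$, with a unique acceptable class on every query. Set $p^\star(x_i) = 0.099$ on $20\%$ of the calibration queries and $p^\star(x_i) = 1$ on the rest. Then $\b = 0.2 > \a$. But each low item scores $+\infty$ only with probability $0.901^{10} \approx 0.35$, so $I \approx 0.07n$. Meanwhile $M^\star = +\infty$ requires $I \ge n-k+1 = \lfloor (n+1)\a \rfloor \approx 0.1n$. By Hoeffding, $M^\star$ is finite with probability tending to $1$. The paper's own proof has the same hole: it asserts that at least $\b n$ scores are infinite ``with high probability'', and $p^\star < 1/K$ does not give that.

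There are two honest repairs.

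\emph{Strengthen the hypothesis.} Require that on a fraction $\b$ of calibration items the acceptable class never appears among the $K$ samples; for example $p^\star(x_i) = 0$, where $s_i = +\infty$ almost surely. Your Steps 1--2 then finish the proof deterministically.

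\emph{Weaken the conclusion.} Keep $p^\star < 1/K$ and make the conclusion probabilistic. Condition on $x_1,\dots,x_n$. The indicators $\1\{s_i = +\infty\}$ over the $\b n$ low items are independent, because agent samples are drawn independently across queries. Each has mean at least $q := (1-1/K)^K$. If $\b q > \a$, Hoeffding gives $\PP(I < \a n) \le \exp\bigl(-2n(\b q - \a)^2/\b\bigr)$. On the complementary event your counting yields $M^\star = +\infty$. So the correct threshold is $\b(1-1/K)^K > \a$, not $\b > \a$.
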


\begin{proof}
At least $\b n$ scores are $+\infty$ with high probability. The $k$-th order statistic with $k = \lceil(n+1)(1-\a)\rceil$ is infinite when $k > n - \b n$, which holds when $\b > \a + 1/(n+1)$.
\end{proof}

\begin{remark}
This is a feature, not a bug: $M^\star = +\infty$ tells the practitioner that the agent cannot reliably serve this query population at the desired confidence level. No other evaluation method provides such a clear signal.
\end{remark}

\section{Sequential Sampling with Certified Early Stopping}\label{sec:sequential}

Drawing $K$ samples per query can be expensive. We develop sequential procedures that stop early when consensus is clear, reducing cost without sacrificing coverage.

\subsection{The sequential consensus problem}
After $k$ samples, let $\hat{p}_1^{(k)}$ and $\hat{p}_2^{(k)}$ be the frequencies of the top two candidates, with margin $\Delta_k := \hat{p}_1^{(k)} - \hat{p}_2^{(k)}$.

\subsection{Hoeffding-based stopping criterion}
\begin{theorem}[Certified mode identification]\label{thm:sequential_hoeffding}
Define the stopping time
\begin{equation}\label{eq:stopping_hoeffding}
\tau_\d := \min\left\{k \ge k_0 : \Delta_k > \sqrt{\frac{2\ln(2|\C(x)|k^2/\d)}{k}}\right\}.
\end{equation}
If the true mode $c^\star$ has $p_1 > p_2 := \max_{c \neq c^\star} P_\theta(c \mid x)$, then $\PP(c_{(1)}^{(\tau_\d)} = c^\star) \ge 1 - \d$.
\end{theorem}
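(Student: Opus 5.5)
The approach is a time-uniform concentration argument: bound every empirical class frequency uniformly over all $k \ge k_0$ and all classes $c \in \C(x)$, then show that whenever these bounds hold, the stopping condition can only fire when the empirical leader is $c^\star$. Throughout, fix the query $x$ and write $\hat p_c^{(k)}$ for the empirical canonical frequency after $k$ i.i.d.\ samples. Set
\[
\e_k := \sqrt{\frac{\ln(2|\C(x)|k^2/\d)}{2k}} .
\]
For each fixed $k$ and $c$, Hoeffding's inequality \cite{Hoeffding1963} gives $\PP(|\hat p_c^{(k)} - P_\theta(c\mid x)| > \e_k) \le 2\exp(-2k\e_k^2) = \d/(|\C(x)|k^2)$. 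A union bound over $c \in \C(x)$ and $k \ge 1$ then bounds the probability of the bad event $\mathcal{B} := \{\exists k,c : |\hat p_c^{(k)} - P_\theta(c\mid x)| > \e_k\}$ by $\d\sum_k k^{-2} = \d\pi^2/6$. I expect this constant to need absorbing: either the factor $2$ inside the logarithm is read as giving enough room, or I replace $k^2$ by a weight such as $\pi^2k^2/6$. I would flag that the stated threshold may be off by a constant and note the harmless fix. I will also treat $|\C(x)|$ as a fixed finite bound on the class space (as in Theorem~\ref{thm:variance_reduction}), not as the random set of observed classes, so that the union bound is legitimate.

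The second step is deterministic. On $\mathcal{B}^c$, the stopping threshold equals $\sqrt{2\ln(2|\C(x)|k^2/\d)/k} = 2\e_k$. Suppose that at $k = \tau_\d$ the empirical leader $c_{(1)}$ is some $c \neq c^\star$. Since the margin is the leader's frequency minus the runner-up's, and $c^\star$ is not the leader, we have $\Delta_k \le \hat p_c^{(k)} - \hat p_{c^\star}^{(k)}$. On $\mathcal{B}^c$ this is at most $(P_\theta(c\mid x) + \e_k) - (p_1 - \e_k) = p_2 - p_1 + 2\e_k < 2\e_k$, using $P_\theta(c\mid x) \le p_2 < p_1$. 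This contradicts the stopping condition $\Delta_k > 2\e_k$. Hence on $\mathcal{B}^c$, every time at which the rule fires has $c^\star$ as its unique leader, and since $\tau_\d$ is one such time, $\PP(c_{(1)}^{(\tau_\d)} = c^\star) \ge 1 - \PP(\mathcal{B}) \ge 1 - \d$ after the constant adjustment. Ties are irrelevant because the strict margin inequality forces a unique leader.

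The main subtlety is that $\tau_\d$ is random, so a fixed-$k$ Hoeffding bound cannot be evaluated at $k = \tau_\d$. The union bound over all $k$, which is what the $k^2$ in the threshold pays for, removes this issue, since the correctness argument holds simultaneously for every $k$. Two further points should be stated explicitly:
\begin{enumerate}[leftmargin=1.2em]
\item $\tau_\d < \infty$ almost surely. By the strong law of large numbers, $\Delta_k \to p_1 - p_2 > 0$ while $\e_k \to 0$, so the rule eventually fires.
\item If a hard cap $K$ on the number of samples is imposed, the guarantee is conditional on stopping before the cap.
\end{enumerate}
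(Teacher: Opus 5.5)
Your route is the paper's: per-step Hoeffding with a union bound over classes, a $\d/k^2$ allocation across steps so that the random time $\tau_\d$ is covered, and the observation that a margin exceeding twice the deviation forces the empirical leader to be $c^\star$. Your version is more careful than the paper's. The paper plugs the random quantity $\e=\Delta_k/2$ into a fixed-$\e$ Hoeffding bound, whereas you use a deterministic $\e_k$ and note that the threshold equals $2\e_k$. You also prove $\tau_\d<\infty$ a.s., which the paper omits.

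The one loose end is the constant, and there you are too pessimistic: the stated threshold needs no adjustment.

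\begin{enumerate}[leftmargin=1.2em]
\item \emph{One-sided deviations suffice.} Your deterministic step only uses $\hat p^{(k)}_{c^\star}\ge p_1-\e_k$ and $\hat p^{(k)}_c\le P_\theta(c\mid x)+\e_k$ for $c\neq c^\star$. Each one-sided Hoeffding tail is $\exp(-2k\e_k^2)=\d/(2|\C(x)|k^2)$. Summing over the $|\C(x)|$ events gives a per-step failure probability of at most $\d/(2k^2)$, and the total over $k\ge 1$ is at most $\d\pi^2/12<\d$. This is exactly the room created by the factor $2$ inside the logarithm.
\item \emph{Alternatively, drop $k=1$.} Since $\Delta_1\le 1<\sqrt{2\ln 2}\le\sqrt{2\ln(2|\C(x)|/\d)}$, the rule can never fire at $k=1$. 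Your two-sided bound summed over $k\ge 2$ then gives $\d(\pi^2/6-1)<\d$.
\end{enumerate}

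The paper's own accounting (``$\sum 1/k^2<2$'') would only yield $2\d$. So your instinct that the constant needs care was right; it simply resolves in favor of the statement as written.

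Two minor points. In the deterministic step, $(P_\theta(c\mid x)+\e_k)-(p_1-\e_k)$ is $\le$, not $=$, $p_2-p_1+2\e_k$. Your remark about a hard cap $K$ should be stated as the joint bound $\PP(\tau_\d\le K,\ c^{(\tau_\d)}_{(1)}\neq c^\star)\le\d$, not as a conditional guarantee. Conditioning on $\{\tau_\d\le K\}$ can inflate the error probability by a factor of $1/\PP(\tau_\d\le K)$.
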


\begin{proof}
At step $k$, by Hoeffding and a union bound over $|\C(x)|$ classes:
\[
\PP\bigl(\exists c: |\hat{P}_k(c \mid x) - P_\theta(c \mid x)| > \e\bigr) \le 2|\C(x)|\exp(-2k\e^2).
\]
Setting $\e = \Delta_k/2$ and requiring the bound to be $\le \d/k^2$ (enabling a sum over $k$ via $\sum 1/k^2 < 2$) yields~\eqref{eq:stopping_hoeffding}. When triggered, the true frequencies are within $\Delta_k/2$ of empirical values with probability $\ge 1-\d$, so the empirical mode equals the true mode.
\end{proof}

\subsection{Variance reduction from sequential stopping}

\begin{proposition}[Variance--cost trade-off of sequential stopping]\label{prop:seq_variance}
Let $K_{\max}$ be the maximum sample budget and $\tau$ the stopping time from~\eqref{eq:stopping_hoeffding}. Then:
\begin{enumerate}[leftmargin=1.2em]
\item \textbf{Easy queries} (large true margin $p_1 - p_2 = \Delta > 0$): $\E[\tau] = O(\Delta^{-2}\ln(|\C|/\d))$, matching the classical sample complexity of fixed-confidence best-arm identification \cite{EvenDar2006BestArm}.
\item \textbf{Hard queries} (small $\Delta$): $\E[\tau] \approx K_{\max}$ (no savings).
\end{enumerate}
The cost reduction is concentrated on queries where variance is already low (high consensus), preserving the framework's reliability exactly where it is most needed.
\end{proposition}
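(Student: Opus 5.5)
The plan is to analyze the stopping time $\tau = \tau_\d$ from~\eqref{eq:stopping_hoeffding} on a "good event" where all empirical frequencies are uniformly close to their true values. Then convert a deterministic bound on $\tau$ on that event into a bound on $\E[\tau]$. Write $\e_k := \sqrt{2\ln(2|\C(x)|k^2/\d)/k}$ for the threshold. The proof of Theorem~\ref{thm:sequential_hoeffding} shows that the event
\[
G := \bigcap_{k \ge k_0} \bigl\{ \max_c |\hat{P}_k(c\mid x) - P_\theta(c\mid x)| \le \e_k/2 \bigr\}
\]
has probability at least $1-\d$. On $G$ the empirical margin satisfies $\Delta_k \ge \Delta - \e_k$. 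This holds because the top empirical frequency is at least $\hat{P}_k(c^\star\mid x) \ge p_1 - \e_k/2$, and every other class (in particular the empirical runner-up) has frequency at most $p_2 + \e_k/2$. If the empirical top is not $c^\star$, the inequality still holds with the same bookkeeping, since $c^\star$ is then a competitor.

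For the easy-query part, stopping is guaranteed on $G$ as soon as $\Delta - \e_k > \e_k$, that is, $\e_k < \Delta/2$. This reads
\[
k > \frac{8\ln(2|\C|k^2/\d)}{\Delta^2}.
\]
A standard inversion lemma then applies: if $k \ge a\ln(bk^2)$ holds for $k \ge C a\ln(ab)$ with an absolute constant $C$, we get a deterministic bound $k^\ast = O(\Delta^{-2}\ln(|\C|/(\d\Delta)))$ with $\tau \le \max(k_0, k^\ast)$ on $G$. To bound $\E[\tau]$ I would not just use $\tau \le K_{\max}$ off $G$. That would give $\E[\tau] \le k^\ast + \d K_{\max}$, which is fine when $\d K_{\max}$ is lower order but is not a clean statement. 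Instead I would use the tail sum $\E[\tau] = \sum_k \PP(\tau > k)$. For $k \ge k^\ast$, the event $\{\tau > k\}$ forces a deviation of size about $\Delta/4$ at time $k$, which has probability at most $2|\C|e^{-k\Delta^2/8}$. Summing this geometric tail adds only $O(|\C|\Delta^{-2} e^{-k^\ast\Delta^2/8}) = O(\Delta^{-2})$. Hence $\E[\tau] = O(\Delta^{-2}\ln(|\C|/\d))$ up to the $\ln(1/\Delta)$ factor, which is the usual iterated-log slack in best-arm identification \cite{EvenDar2006BestArm}. I would state the result as matching that classical rate and absorb the $\ln(1/\Delta)$ into the $O(\cdot)$ notation, as the paper does.

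For the hard-query part, I would argue in the reverse direction. Stopping at time $k$ requires $\Delta_k > \e_k$, and on $G$ we have $\Delta_k \le \Delta + \e_k$ (the upper deviation bounds give this symmetrically). So stopping on $G$ needs at least $\e_k < \Delta$, that is, $k \gtrsim 2\Delta^{-2}\ln(2|\C|k^2/\d)$. When $\Delta$ is small enough that this exceeds $K_{\max}$, which happens when $\Delta \lesssim \sqrt{\ln(|\C|K_{\max}^2/\d)/K_{\max}}$, the procedure cannot stop before the budget on $G$. Therefore $\E[\tau] \ge (1-\d)K_{\max}$, which is the meaning of "$\approx K_{\max}$". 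The closing sentence of the proposition is interpretive: it follows by combining Part~1 with Theorem~\ref{thm:variance_reduction}, since a large margin corresponds to $p$ far from $1/2$, where the mode is already exponentially concentrated.

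I expect the main obstacle to be the easy-query expectation bound, specifically the self-referential inequality $k > 8\Delta^{-2}\ln(2|\C|k^2/\d)$ together with the contribution from outside $G$. Both the inversion and the tail-summation step need care to avoid an extra $\ln(1/\Delta)$ or a spurious $\d K_{\max}$ term. My first attempt would use the tail-sum formula with the per-time Hoeffding bound rather than the uniform event $G$, since that bound decays geometrically and avoids the dependence on $K_{\max}$ entirely.
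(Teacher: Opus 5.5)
The easy-query argument is correct. The paper gives no proof of this proposition, only the citation to best-arm identification, so your argument supplies what is missing there. For $k\ge\max(k_0,k^\ast)$ you have $\{\tau>k\}\subseteq\{\exists c:\ |\hat P_k(c\mid x)-P_\theta(c\mid x)|>\Delta/4\}$. The tail-sum formula then gives $\E[\tau]\le\max(k_0,k^\ast)+O\bigl(\delta/(\Delta^2k^{\ast 2})\bigr)$ with no $\delta K_{\max}$ term, and $G$ is not needed for this. Two corrections to how you state the result:
\begin{itemize}
\item The $\ln(1/\Delta)$ factor is genuinely present for this rule. Typically $\Delta_k\approx\Delta$, and $\e_k$ falls below $\Delta$ only once $k\gtrsim 2\Delta^{-2}\ln(2|\mathcal{C}|k^2/\delta)$, where $\ln k\approx 2\ln(1/\Delta)$. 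So you may absorb it into $O(\Delta^{-2}\ln(|\mathcal{C}|/\delta))$ only because ``easy'' means $\Delta$ is bounded away from $0$, not uniformly in $\Delta$.
\item It is not an iterated-log slack. It is the ordinary $\ln(1/\Delta)$ of successive elimination; LIL-type thresholds are what reduce it to $\ln\ln(1/\Delta)$.
\end{itemize}

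The hard-query part has a genuine gap. Stopping gives $\Delta_k>\e_k$, and $G$ gives $\Delta_k\le\Delta+\e_k$. Together these yield only $\e_k<\Delta+\e_k$, i.e.\ $\Delta>0$; concluding $\e_k<\Delta$ would require $\Delta_k\le\Delta$. On $G$ each class may deviate by $\e_k/2$, so the empirical margin can exceed the true margin by up to $\e_k$, which is exactly the stopping threshold. Hence $G$ rules out stopping only when $\Delta=0$, and $\E[\tau]\ge(1-\delta)K_{\max}$ is not established for any $\Delta>0$. Your regime $\Delta\lesssim\sqrt{\ln(|\mathcal{C}|K_{\max}^2/\delta)/K_{\max}}$ is of the same order as $\e_{K_{\max}}$ itself. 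As $\Delta$ approaches $\e_{K_{\max}}$, the empirical margin crosses the threshold before the budget with probability bounded away from zero, so the implied constant must be small.

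One repair is a margin-specific deviation bound:
\begin{itemize}
\item If the run stops at $k$ with empirical leader $c$, then $\hat P_k(c\mid x)-\hat P_k(c'\mid x)>\e_k$ for some competitor $c'$ with $P_\theta(c\mid x)-P_\theta(c'\mid x)\le\Delta$. Take $c'$ to be the true runner-up if $c=c^\star$, and $c'=c^\star$ otherwise.
\item Apply Hoeffding to these $[-1,1]$-valued differences and take a union over $c$. This gives $\PP(\text{stop at }k)\le|\mathcal{C}|\exp\bigl(-k(\e_k-\Delta)^2/2\bigr)$.
\item Suppose $\Delta\le\eta\,\e_{K_{\max}}$ with $\eta<1-1/\sqrt{2}$. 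Since $\e_k$ decreases in $k$, you have $\e_k-\Delta\ge(1-\eta)\e_k$ for all $k\le K_{\max}$. The bound becomes $|\mathcal{C}|\bigl(\delta/(2|\mathcal{C}|k^2)\bigr)^{(1-\eta)^2}$, which is summable in $k$.
\item Therefore $\PP(\tau\le K_{\max})$ is small, and $\E[\min(\tau,K_{\max})]\approx K_{\max}$.
\end{itemize}
Alternatively, use your $G$-argument at $\Delta=0$, where it is valid. Then extend to small $\Delta$ by continuity of the finite-horizon stopping probability in the class probabilities; this gives a non-quantitative threshold.
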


\subsection{Validity of conformal guarantee under adaptive stopping}

\begin{proposition}[Conformal validity with adaptive $K$]\label{prop:adaptive_stopping}
If the stopping rule $K_i = K_i(x_i,\allowbreak a_1^{(i)},\allowbreak \dots)$ depends only on~$x_i$ and agent samples (not on~$y_i$), then $s_1, \dots, s_{n+1}$ remain exchangeable and Theorem~\ref{thm:coverage} holds.
\end{proposition}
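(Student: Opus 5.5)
The plan is to reduce Proposition~\ref{prop:adaptive_stopping} to the setting of Theorem~\ref{thm:coverage} by showing that each score $s_i$ is still the image of an i.i.d.\ (or exchangeable) data point under one fixed measurable map. Concretely, for each item $i$ I will augment the datum to
\[
Z_i := \bigl(x_i,\, y_i,\, U^{(i)}\bigr),
\]
where $U^{(i)}$ denotes the agent's entire potential sample stream $(a_1^{(i)}, a_2^{(i)}, \dots, a_{K_{\max}}^{(i)})$, together with the uniform random variables used for tie-breaking. These are drawn, conditionally on $x_i$, independently across items from the same kernel $P_\theta(\cdot \mid x_i)^{\otimes K_{\max}}$. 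This is the same device already implicit in the proof of Theorem~\ref{thm:bias_immunity}. Since $(x_i,y_i)_{i\le n+1}$ are exchangeable and the $U^{(i)}$ are conditionally independent with a common kernel, the augmented vector $(Z_1,\dots,Z_{n+1})$ is exchangeable. The check is a permutation argument: permuting indices permutes the $(x,y)$ block, which preserves its law, and carries the product kernel along with it.

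Next I will write the score as a deterministic function $s_i = g(Z_i)$, with the same $g$ for every $i$. The stopping rule $K_i = K(x_i, U^{(i)})$ is a stopping time with respect to the sample stream, for example $\min(\tau_\d, K_{\max})$ from~\eqref{eq:stopping_hoeffding}. By hypothesis it does not look at $y_i$, at other items, or at the calibration scores. The ranked list of canonical classes after $K_i$ samples is therefore a fixed function $h(x_i, U^{(i)})$. The score $s(x_i,y_i) = \mathrm{rank}(\mathrm{Canon}(x_i,y_i); x_i)$ is then a fixed function of $(x_i, y_i, h(x_i,U^{(i)}))$. Applying a common measurable map coordinatewise to an exchangeable vector preserves exchangeability, so $s_1,\dots,s_{n+1}$ are exchangeable. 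The uniform-rank argument in the proof of Theorem~\ref{thm:coverage} then applies verbatim, giving~\eqref{eq:coverage_thm}; ties only increase coverage, as noted there.

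The main obstacle is making precise what ``depends only on $x_i$ and agent samples'' must exclude. Two things need care. First, the rule must be identical across calibration and test items. Second, it must not use cross-item information; for instance, a budget adapted to earlier calibration scores breaks the common-$g$ structure. I would state this explicitly as the interpretation of the hypothesis, and I would remark that dependence on $y_i$ would be harmless for exchangeability only if it were applied symmetrically. That remark is outside the claim, so I will not pursue it.

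Finally, I will note that unbounded stopping is handled by truncation at $K_{\max}$. Alternatively, since the potential stream can be taken infinite, $g$ remains measurable even if $\tau$ is infinite with positive probability, in which case the score is simply defined at the truncation point.
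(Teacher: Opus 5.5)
Your proposal is correct and follows the paper's route. The score $s_i$ is a common function of $(x_i,y_i)$ and item $i$'s own agent samples, and the stopping index $K_i$ is fixed without reference to $y_i$. Exchangeability of the pairs therefore passes to $s_1,\dots,s_{n+1}$, after which the rank argument of Theorem~\ref{thm:coverage} applies unchanged. Your explicit augmentation $Z_i=(x_i,y_i,U^{(i)})$, and your requirement that the rule be identical across items with no cross-item information, make precise what the paper's one-line proof leaves implicit.
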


\begin{proof}
The score $s_i$ is a function of $(x_i, y_i)$ and the agent samples $\{a_j^{(i)}\}_{j=1}^{K_i}$. Since $K_i$ depends only on $(x_i,\allowbreak \{a_j^{(i)}\})$ and not on~$y_i$, and $(x_i, y_i)$ are exchangeable, the scores inherit exchangeability.
\end{proof}

Note that the stopping rule (Theorem~\ref{thm:sequential_hoeffding}) certifies mode identity, not the full rank ordering. For items where $s_i = 1$ (the correct answer is the mode), mode stability implies rank stability. For items with $s_i > 1$, rank fluctuations after stopping may affect prediction set size but not coverage validity, since Proposition~\ref{prop:adaptive_stopping} guarantees exchangeability regardless of when sampling stops.

\section{Extensions}\label{sec:extensions}

\subsection{Multiple acceptable canonical classes}\label{sec:multi_accept}
When $|\mathrm{Canon}(x, \A^\star(x))| > 1$, use:
\begin{equation}\label{eq:score_multi}
s_i = \min_{l=1,\dots,L_i} \mathrm{rank}\bigl(\mathrm{Canon}(x_i, y_i^{(l)});\; x_i\bigr).
\end{equation}
Coverage is preserved since exchangeability is maintained.

\subsection{Dependence-aware extensions}\label{sec:extensions_dep}

\begin{remark}[Separation of concerns]
A key structural insight: the conformal coverage guarantee depends on exchangeability of \emph{calibration-test pairs} $(x_i, y_i)$, \emph{not} on independence of the $K$ within-query samples. Dependence among the $K$ samples affects ranking \emph{quality} (and hence set \emph{size}) but not coverage \emph{validity}. If dependence degrades the ranking, $M^\star$ grows---the framework self-corrects by widening the set---without breaking the guarantee. This separation is what makes the method robust to the poorly characterized dependence structure of LLM API calls.
\end{remark}

\subsection{Weighted conformal prediction}\label{sec:weighted}
Under covariate shift between calibration and test distributions, use likelihood-ratio-weighted conformal prediction \cite{TibshiraniBarber2019}:
\begin{equation}\label{eq:weighted}
M^\star_w := \text{weighted quantile of } s_1, \dots, s_n \text{ with weights } w_i \propto \frac{p_{\mathrm{test}}(x_i)}{p_{\mathrm{cal}}(x_i)}.
\end{equation}

\section{Empirical Study}\label{sec:experiments}

We first validated all theoretical results on controlled synthetic agents with known parameters (Appendix~\ref{sec:synthetic}); all predictions---coverage calibration, exponential variance decay, set-size monotonicity, and canonicalization amplification---are confirmed.

We then evaluate on five real benchmarks spanning code generation, mathematical reasoning, open-ended question answering, and multiple-choice selection, using five models from three families (GPT-4.1 ladder, Llama~4 Maverick, Mistral Small~24B). Four questions structure the evaluation:
\begin{enumerate}[leftmargin=1.2em]
\item[(Q1)] Does conformal calibration achieve coverage $\ge 1-\a$ empirically (Theorem~\ref{thm:coverage})?
\item[(Q2)] Does self-consistency reduce variance as $K$ increases (Theorem~\ref{thm:variance_reduction})?
\item[(Q3)] Does set size adapt to model uncertainty (Theorem~\ref{thm:bias_transparency})?
\item[(Q4)] Does the framework achieve lower coverage error than single-sample and LLM-judge baselines (Theorems~\ref{thm:dominance_single},~\ref{thm:advantage_judge})?
\end{enumerate}

\paragraph{Hypotheses.}
We formalize six empirical hypotheses, each derived from a specific theoretical result:
\begin{enumerate}[leftmargin=1.2em]
\item[(H1)] \textbf{Coverage validity.} Empirical coverage meets the $1{-}\a$ target on tasks where the model has sufficient capability; any shortfall is attributable to model capability gaps, not calibration failure (Theorem~\ref{thm:coverage}).
\item[(H2)] \textbf{Variance reduction.} Mode identification error decreases with $K$, with the largest reductions on high-accuracy tasks where $p^\star$ is far from $0.5$ (Theorem~\ref{thm:variance_reduction}).
\item[(H3)] \textbf{Adaptive set size.} Prediction set size correlates positively with per-item answer entropy (Theorem~\ref{thm:bias_transparency}).
\item[(H4)] \textbf{Canonicalization benefit.} Canonicalization reduces prediction set size on tasks with surface-form variation (Proposition~\ref{prop:canon_variance}).
\item[(H5)] \textbf{Baseline comparison.} Conformal coverage meets or exceeds LLM-judge accuracy on ambiguous tasks where judge bias is non-negligible (Corollary~\ref{cor:dominance}).
\item[(H6)] \textbf{Sequential efficiency.} The Hoeffding-based stopping rule reduces the average number of samples with no loss in coverage (Theorem~\ref{thm:sequential_hoeffding}).
\end{enumerate}

\subsection{Experimental setup}

\paragraph{Datasets.}
We select five benchmarks representing distinct task families and canonicalization strategies:
\begin{enumerate}[leftmargin=1.2em]
\item \textbf{HumanEval} (code generation, 164 items). Each response is executed in a sandboxed environment against unit tests; canonicalization is deterministic binary: \texttt{pass} or \texttt{fail}. This represents the lowest-ambiguity setting where correctness is objectively verifiable.
\item \textbf{TruthfulQA} (open-ended QA, 817 items). Free-form text responses where surface-form variation is extreme---even correct answers differ substantially in phrasing. Canonicalization uses an LLM judge (GPT-4.1 at temperature~$0$) to classify each sample as \emph{correct} or \emph{incorrect}, yielding a binary answer space analogous to code~execution.
\item \textbf{BigBench MovieRec} (multiple-choice, 500 items; hereafter ``BigBench''). Responses are canonicalized via option matching and text normalization to the selected movie title.
\item \textbf{GSM8K} (mathematical reasoning, 1319 items) \cite{Cobbe2021GSM8K}. Grade-school math word problems requiring multi-step arithmetic. Canonicalization is deterministic numeric extraction: we parse the final answer after the \texttt{\#\#\#\#} delimiter and normalize to a standard integer form. This tests the framework on a task with a unique correct answer but diverse reasoning paths.
\item \textbf{MMLU} (multiple-choice knowledge, 1000 items sampled from the full test set). Four-option questions spanning 57 academic subjects. Canonicalization reuses the MCQ pipeline (option matching and text normalization).
\end{enumerate}

\paragraph{Agent and sampling configuration.}
We use the GPT-4.1 model family as a capability ladder with unambiguous ordering---GPT-4.1 (strong), GPT-4.1-mini (mid-tier), GPT-4.1-nano (weak)---providing a clean test of the set-size monotonicity prediction. All models are evaluated at temperature $T=0.7$ with $K_{\max}=20$ independent samples per item ($K_{\mathrm{fixed}}=10$ for both calibration and test-time prediction in primary results):
\begin{itemize}[leftmargin=1.2em]
\item \textbf{GPT-4.1} (strong): evaluated on all five benchmarks. Expected to yield the smallest prediction sets.
\item \textbf{GPT-4.1-mini} (mid-tier): evaluated on GSM8K and MMLU, the two benchmarks with largest calibration sets ($n_{\mathrm{cal}}=500$).
\item \textbf{GPT-4.1-nano} (weak): evaluated on GSM8K and MMLU. Expected to yield the largest prediction sets.
\end{itemize}
As a supplementary comparison, we evaluate \textbf{GPT-5 mini} on all five benchmarks. Under our $T{=}0.7$ i.i.d.\ sampling protocol (without extended thinking), GPT-5 mini exhibits lower per-sample accuracy than GPT-4.1. Throughout, $M^\star$ and the reliability level are properties of a specific \emph{(model, inference configuration)} pair---not of the model architecture alone. The same model can yield different prediction sets depending on temperature, decoding strategy, and whether capabilities like extended thinking are activated.

To test cross-family generalizability, we evaluate two open-weight models via Together AI on GSM8K, MMLU, and TruthfulQA:
\begin{itemize}[leftmargin=1.2em]
\item \textbf{Llama~4 Maverick 17B} (\texttt{meta-llama/\allowbreak Llama-4-Maverick-17B-128E-Instruct-FP8}): a mid-tier mixture-of-experts model from Meta.
\item \textbf{Mistral Small 24B} (\texttt{mistralai/Mistral-Small-24B-Instruct-2501}): a smaller instruction-tuned model from Mistral AI.
\end{itemize}

The calibration/test split uses up to $n_{\mathrm{cal}} = n_{\mathrm{test}} = 500$ items per dataset (or half the dataset if smaller, e.g., HumanEval uses 82/82).
The LLM judge is fixed at GPT-4.1 (temperature $0$) across all experiments to avoid confounding evaluation quality with agent capability.
All API responses are SHA-256 cached so that re-runs are deterministic and cost-free.

\paragraph{Evaluation protocol.}
All experiments use the rank-based nonconformity score (Definition~\ref{def:score}): for each calibration item, the score is the rank of the first acceptable answer in the self-consistency ordering. The adaptive score variant (Section~\ref{sec:adaptive}) is deferred to future work.

For each dataset and model we:
(1)~compute nonconformity scores $\{s_i\}_{i=1}^{n_{\mathrm{cal}}}$ on the calibration set and determine $M^\star$ for $\a \in \{0.01, 0.05, 0.10, 0.15, 0.20, 0.25, 0.30\}$;
(2)~evaluate coverage, mode accuracy, single-sample accuracy, LLM-judge accuracy, and prediction set size on the test set;
(3)~sweep $K \in \{1, 2, 5, 10, 20\}$ for variance reduction;
(4)~run the Hoeffding-based sequential stopping rule ($\d = 0.05$).
All reported proportions include 95\% Wilson confidence intervals~\cite{Wilson1927}; continuous metrics (set size, entropy, average~$K$) include 95\% bootstrap percentile confidence intervals ($B = 10{,}000$ resamples).

\paragraph{Baselines.}
We compare four evaluation strategies:
\begin{enumerate}[leftmargin=1.2em]
\item \textbf{Single-sample:} one draw from the agent, binary correctness.
\item \textbf{Self-consistency mode} ($K=10$): majority-vote answer, binary correctness.
\item \textbf{LLM-as-judge:} GPT-4.1 scores the first sample; score $\ge 0.5$ counts as correct.
\item \textbf{Conformal prediction set}: our method with calibrated threshold $M^\star$.
\end{enumerate}

\subsection{Main results}\label{sec:main_results}

Table~\ref{tab:main_results} reports the primary metrics across all five benchmarks.

\begin{table}[t]
\centering
\caption{Main results ($T{=}0.7$, $K{=}10$, $\a=0.10$). Coverage target is $1-\a=0.90$. 95\% Wilson CIs in parentheses. Results shown for GPT-4.1; see Table~\ref{tab:multimodel} for cross-model comparison.}
\label{tab:main_results}
\resizebox{\textwidth}{!}{%
\begin{tabular}{lccccccc}
\toprule
Dataset & $n_{\mathrm{cal}}$ / $n_{\mathrm{test}}$ & $M^\star$ & Coverage & Mode Acc & Judge Acc & Avg.\ $|S|$ & Cov$|$solv \\
\midrule
HumanEval    & 82 / 82  & 2 & 0.707 (\,0.60--0.79\,) & 0.646 (\,0.54--0.74\,) & 0.915$^\dagger$ & 1.32 & 0.967 \\
TruthfulQA   & 408 / 409 & 1 & 0.976 (\,0.96--0.99\,) & 0.976 (\,0.96--0.99\,) & 0.966 (\,0.94--0.98\,) & 1.00 & 0.980 \\
BigBench & 125 / 125 & 2 & 0.792 (\,0.71--0.85\,) & 0.768 (\,0.69--0.83\,) & 0.736 (\,0.65--0.81\,) & 1.10 & 1.000 \\
GSM8K        & 500 / 500 & 1 & 0.956 (\,0.93--0.97\,) & 0.956 (\,0.93--0.97\,) & 0.986 (\,0.97--0.99\,) & 1.00 & 0.980 \\
MMLU         & 500 / 500 & 2 & 0.838 (\,0.80--0.87\,) & 0.798 (\,0.76--0.83\,) & 0.962 (\,0.94--0.98\,) & 1.11 & 0.988 \\
\bottomrule
\end{tabular}}

\smallskip\noindent
{\footnotesize $^\dagger$The LLM judge cannot execute code; it evaluates syntactic plausibility rather than functional correctness, making the judge--conformal comparison \emph{structurally invalid} on HumanEval (the methods measure different things). HumanEval is excluded from all judge comparisons in the text. ``Cov$|$solv'' = conditional coverage among items where the model produced at least one correct answer across all $K_{\max}$ samples---a \emph{post-hoc diagnostic} that validates the theory but is not available at deployment time (see Section~\ref{sec:discussion_code}).}

\smallskip\noindent
{\footnotesize Capability gaps: HumanEval 26.8\%, BigBench 20.8\%, MMLU 15.2\%, GSM8K 2.4\%, TruthfulQA 0.5\%.}
\end{table}

\begin{table}[t]
\centering
\caption{Multi-model comparison at $\a=0.10$: $M^\star$ and average set size. \emph{Top:} GPT-4.1 capability ladder on GSM8K and MMLU. \emph{Bottom:} Open-weight cross-family validation on GSM8K, MMLU, and TruthfulQA. $M^\star$ and $\bar{|S|}$ are non-decreasing with decreasing capability across all model families.}
\label{tab:multimodel}
\begin{tabular}{lcccccc}
\toprule
& \multicolumn{2}{c}{GPT-4.1} & \multicolumn{2}{c}{GPT-4.1-mini} & \multicolumn{2}{c}{GPT-4.1-nano} \\
\cmidrule(lr){2-3} \cmidrule(lr){4-5} \cmidrule(lr){6-7}
Dataset & $M^\star$ & $\bar{|S|}$ & $M^\star$ & $\bar{|S|}$ & $M^\star$ & $\bar{|S|}$ \\
\midrule
GSM8K        & 1 & 1.00 & 1 & 1.00 & 2 & 1.35 \\
MMLU         & 2 & 1.11 & 2 & 1.12 & 2 & 1.17 \\
\midrule
& \multicolumn{2}{c}{GPT-4.1} & \multicolumn{2}{c}{Llama~4 Maverick} & \multicolumn{2}{c}{Mistral Small} \\
\cmidrule(lr){2-3} \cmidrule(lr){4-5} \cmidrule(lr){6-7}
Dataset & $M^\star$ & $\bar{|S|}$ & $M^\star$ & $\bar{|S|}$ & $M^\star$ & $\bar{|S|}$ \\
\midrule
GSM8K        & 1 & 1.00 & 1 & 1.00 & 1 & 1.00 \\
MMLU         & 2 & 1.11 & 2 & 1.12 & 2 & 1.32 \\
TruthfulQA   & 1 & 1.00 & 2 & 1.25 & 2 & 1.31 \\
\bottomrule
\end{tabular}

\smallskip\noindent
{\footnotesize \textbf{GPT-4.1 ladder.} On both benchmarks, $\bar{|S|}$ increases monotonically across the capability ladder; $M^\star$ increases on GSM8K ($1 \to 1 \to 2$) and remains constant at $2$ on MMLU. On MMLU, the capability gap is $15.2\%$ (GPT-4.1), $14.6\%$ (GPT-4.1-mini), and $26.2\%$ (GPT-4.1-nano), with conditional coverage remaining high: $0.988$, $0.979$, and $0.965$ respectively. On GSM8K, all three models achieve coverage $\ge 0.948$ with conditional coverage $\ge 0.973$.}

\smallskip\noindent
{\footnotesize \textbf{Open-weight models.} Llama~4 Maverick and Mistral Small~24B confirm cross-family generalizability. On GSM8K, all three families achieve $M^\star{=}1$ with coverage $\ge 0.956$. On TruthfulQA, open-weight models require $M^\star{=}2$ (vs.\ $1$ for GPT-4.1), with coverage $\ge 0.960$ and conditional coverage $\ge 0.992$. On MMLU, the capability gap---$25.4\%$ (Maverick) and $13.0\%$ (Mistral)---drives marginal coverage below $90\%$, but conditional coverage remains high ($0.995$ and $0.949$).}

\smallskip\noindent
{\footnotesize \textbf{Supplementary: GPT-5 mini.} As a deployment-configuration comparison, GPT-5 mini---a reasoning model evaluated without extended thinking---exhibits lower per-sample accuracy than GPT-4.1 on all five benchmarks: HumanEval ($M^\star$: $2 \to 4$, $\bar{|S|}$: $1.32 \to 2.43$), BigBench ($2 \to 3$, $1.10 \to 1.83$), MMLU ($2 \to 3$, $1.11 \to 1.59$), GSM8K ($1 \to 2$, $1.00 \to 1.42$), TruthfulQA ($1 \to 2$, $1.00 \to 1.49$). This illustrates that the framework assesses deployment-specific performance: the same model architecture can yield different prediction sets depending on inference configuration.}
\end{table}

\paragraph{Coverage validity (Q1).}
Table~\ref{tab:main_results} reports coverage across all five benchmarks for GPT-4.1. Two benchmarks (GSM8K and TruthfulQA) exceed the $90\%$ target with $M^\star=1$; three fall below: MMLU ($0.838$), BigBench ($0.792$), and HumanEval ($0.707$). On these three benchmarks, the method provides no marginal coverage guarantee at $\a = 0.10$---this is a direct consequence of Theorem~\ref{thm:bias_transparency}(3): when the fraction of unsolvable items exceeds $\a$, no evaluation method can achieve the target. Conditional coverage among solvable items exceeds $0.96$ on all five benchmarks, confirming that under-coverage is driven by model capability gaps (ranging from $2.4\%$ on GSM8K to $26.8\%$ on HumanEval; see Table~\ref{tab:main_results} footnote) rather than calibration failure.

With $n_{\mathrm{cal}} \ge 82$ (the smallest dataset, HumanEval), the theoretical over-coverage bound is $1/(n+1) \le 0.012$, ensuring tight calibration. For the larger datasets ($n_{\mathrm{cal}} = 500$), the bound tightens to $0.002$.

\paragraph{Comparison with baselines (Q4).}
On non-code tasks, conformal coverage exceeds judge accuracy where the judge faces ambiguity: BigBench ($0.792$ vs.\ $0.736$) and TruthfulQA ($0.976$ vs.\ $0.966$). On MMLU and GSM8K, the judge outperforms because these tasks have unambiguous correct answers ($0.962$ and $0.986$ respectively).

\paragraph{Set-size monotonicity across capability levels.}
The GPT-4.1 capability ladder (Table~\ref{tab:multimodel}) directly confirms Theorem~\ref{thm:bias_transparency}: $\bar{|S|}$ increases monotonically across GPT-4.1 $\to$ GPT-4.1-mini $\to$ GPT-4.1-nano on both benchmarks. On GSM8K, GPT-4.1-nano's $M^\star$ rises to $2$ (vs.\ $1$ for the other two); on MMLU, all three share $M^\star{=}2$ but $\bar{|S|}$ increases from $1.11 \to 1.12 \to 1.17$. Conditional coverage remains high across all three ($\ge 0.965$). The open-weight models reinforce this pattern: on GSM8K, all three families achieve $M^\star{=}1$; on TruthfulQA, both open-weight models require $M^\star{=}2$ (vs.\ $1$ for GPT-4.1) with coverage $\ge 0.960$. As a supplementary comparison, GPT-5 mini---evaluated without extended thinking---exhibits higher $M^\star$ and $\bar{|S|}$ than GPT-4.1 on all benchmarks (Table~\ref{tab:multimodel} footnote), illustrating that the framework assesses deployment-specific performance.

\paragraph{Multi-sample judge comparison.}
Our primary LLM-judge baseline uses a single call per test item ($K_{\mathrm{judge}}=1$), matching the standard setup in most benchmark implementations \cite{Zheng2023Judge}. To test whether aggregating multiple judge calls closes the gap, we evaluate a majority-vote judge with $K_{\mathrm{judge}} \in \{1, 3, 5, 10\}$ on the two benchmarks where conformal coverage exceeds the single-call judge. On TruthfulQA, the majority-vote judge improves from $0.966$ ($K{=}1$) to $0.976$ ($K{=}10$), converging to the conformal coverage level---but only at $10\times$ cost. On BigBench, the multi-sample judge shows no improvement ($0.736$ at $K{=}1$ to $0.728$ at $K{=}10$), while conformal coverage achieves $0.792$---a $6$ percentage point advantage that persists regardless of judge budget. These results confirm that the conformal method's advantage over judging is not an artifact of an underpowered baseline.

\paragraph{Cost-matched comparison.}
To ensure a fair comparison, we match the total API cost between conformal evaluation and the majority-vote judge (per-call costs in Table~\ref{tab:cost_analysis} footnote). For MCQ tasks, the cost-matched judge budget is ${\sim}14$ calls; for TruthfulQA (which adds judge-based canonicalization), ${\sim}24$ calls. We extend the majority-vote judge to $K_{\mathrm{judge}} \in \{15, 20\}$ to evaluate at these budgets.
On \textbf{BigBench}, the judge saturates early: accuracy is $0.728$ at both $K{=}15$ and $K{=}20$, while conformal coverage reaches $0.792$---a $6.4$ percentage point gap, though 95\% CIs overlap at this sample size ($n_{\mathrm{test}}{=}125$). The bottleneck is the judge's per-call accuracy, not sampling noise: additional judge calls cannot improve a systematically incorrect judgment.
On \textbf{TruthfulQA}, the cost-matched judge ($K{=}20$, accuracy $0.973$) essentially matches conformal coverage ($0.976$), confirming convergence when the judge is accurate. However, on non-text tasks (MCQ, math, code), conformal evaluation requires no judge model at all---eliminating a potential source of evaluation bias while achieving comparable or superior coverage.

Figure~\ref{fig:coverage_validation} shows the coverage validation plots across all alpha values.

\begin{figure}[t]
\centering
\includegraphics[width=0.75\textwidth]{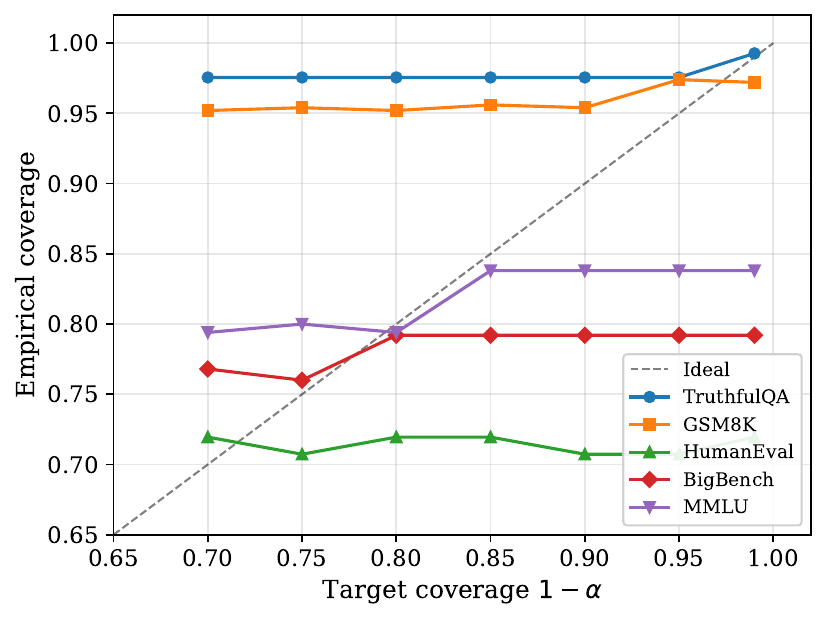}
\caption{Coverage validation: empirical coverage vs.\ target $1-\a$ for $\a \in \{0.01, \ldots, 0.30\}$ across all five benchmarks. Points above the diagonal are consistent with the marginal coverage guarantee (Theorem~\ref{thm:coverage}). Points below the diagonal (HumanEval, BigBench, MMLU) arise when the unsolvable fraction $\beta$ exceeds $\a$: Theorem~\ref{thm:bias_transparency}(3) predicts $M^\star = +\infty$ in this regime, meaning no finite prediction set can cover unsolvable items. Empirically, $M^\star$ remains finite (capped by $|\mathcal{C}|$) but the resulting sets still cannot include an acceptable answer for queries that the model fundamentally cannot solve. The under-coverage is thus a diagnosed capability gap, not a calibration failure---conditional coverage on solvable items exceeds $0.96$ across all five benchmarks.}
\label{fig:coverage_validation}
\end{figure}

\paragraph{Split stability (bootstrap analysis).}
To assess whether $M^\star$ is an artifact of the particular calibration/test split, we re-split the data with 100 independent random seeds and recompute $M^\star$ and empirical coverage for each split. Table~\ref{tab:bootstrap} summarizes the results. For five of six model--dataset combinations, $M^\star$ is identical across all 100 splits (std $= 0$). Only GPT-4.1-nano on GSM8K shows variation ($M^\star = 1$ in 82\% of splits, $M^\star = 2$ in 18\%; coverage std $= 0.023$, the highest in the table). This instability is informative: GPT-4.1-nano sits at the exact capability boundary where the fraction of unsolvable GSM8K items is close to $\a = 0.10$, so the $\lceil (n{+}1)(1{-}\a)\rceil$-th order statistic fluctuates between score values of $1$ and $2$ depending on which items fall in the calibration set. This is precisely the regime where the reliability level (Definition~\ref{def:reliability}: $89.8\%$ for GPT-4.1-nano on GSM8K) provides a more stable signal than the discrete $M^\star$, since the reliability level depends on the count of $s_i \le 1$ scores rather than on a single order statistic. Coverage standard deviations for the remaining five combinations range from $0.004$ to $0.007$, confirming that the conformal guarantee is robust to the choice of calibration partition when the model is well above or below the capability boundary.

\begin{table}[t]
\centering
\caption{Bootstrap split stability: $M^\star$ and coverage across 100 random cal/test partitions ($n_{\mathrm{cal}} = n_{\mathrm{test}} = 500$, $\a = 0.10$).}
\label{tab:bootstrap}
\begin{tabular}{llcc}
\toprule
Model & Dataset & $M^\star$ (mean $\pm$ std) & Coverage (mean $\pm$ std) \\
\midrule
GPT-4.1      & GSM8K & $1.00 \pm 0.00$ & $0.951 \pm 0.007$ \\
GPT-4.1      & MMLU  & $2.00 \pm 0.00$ & $0.979 \pm 0.005$ \\
GPT-4.1-mini & GSM8K & $1.00 \pm 0.00$ & $0.957 \pm 0.007$ \\
GPT-4.1-mini & MMLU  & $2.00 \pm 0.00$ & $0.976 \pm 0.004$ \\
GPT-4.1-nano & GSM8K & $1.18 \pm 0.38$ & $0.917 \pm 0.023$ \\
GPT-4.1-nano & MMLU  & $2.00 \pm 0.00$ & $0.945 \pm 0.007$ \\
\bottomrule
\end{tabular}
\end{table}

\subsection{Variance reduction}\label{sec:variance_reduction}

Table~\ref{tab:variance_reduction} reports mode error as a function of the number of self-consistency samples $K$.

\begin{table}[t]
\centering
\caption{Mode error $\widehat{\mathrm{ModeErr}}(K)$ as a function of $K$ for GPT-4.1. Lower is better. 95\% Wilson CIs in parentheses. Results for GSM8K and MMLU add tasks with deterministic canonicalization (numeric extraction and MCQ matching, respectively).}
\label{tab:variance_reduction}
\resizebox{\textwidth}{!}{%
\begin{tabular}{lccccc}
\toprule
Dataset & $K=1$ & $K=2$ & $K=5$ & $K=10$ & $K=20$ \\
\midrule
HumanEval    & .354 (\,.26--.46\,) & .378 (\,.28--.49\,) & .366 (\,.27--.47\,) & .341 (\,.25--.45\,) & .341 (\,.25--.45\,) \\
TruthfulQA   & .034 (\,.02--.06\,) & .044 (\,.03--.07\,) & .027 (\,.02--.05\,) & .024 (\,.01--.04\,) & .022 (\,.01--.04\,) \\
BigBench & .248 (\,.18--.33\,) & .240 (\,.17--.32\,) & .240 (\,.17--.32\,) & .240 (\,.17--.32\,) & .240 (\,.17--.32\,) \\
GSM8K        & .064 (\,.05--.09\,) & .058 (\,.04--.08\,) & .048 (\,.03--.07\,) & .046 (\,.03--.07\,) & .046 (\,.03--.07\,) \\
MMLU         & .212 (\,.18--.25\,) & .206 (\,.17--.24\,) & .210 (\,.18--.25\,) & .204 (\,.17--.24\,) & .200 (\,.17--.24\,) \\
\bottomrule
\end{tabular}}
\end{table}

\paragraph{Q2: Variance reduction.}
The synthetic validation (Appendix~\ref{sec:synthetic}, Figure~\ref{fig:synth_variance}) confirms the predicted exponential decay under controlled conditions with known~$p^\star$. The real-data results in Table~\ref{tab:variance_reduction} are consistent with this pattern. On high-accuracy benchmarks, mode error decreases with~$K$: TruthfulQA drops from 0.034 to 0.022 (35\% reduction), and GSM8K drops from 0.064 to 0.046 (28\% reduction). On moderate-accuracy benchmarks (HumanEval, BigBench, MMLU), mode error remains relatively flat across~$K$ values, consistent with the theoretical prediction that variance reduction is most pronounced when~$p^\star$ is well above~0.5. We note that adjacent $K$-to-$K$ differences are not individually significant (the Wilson CIs in Table~\ref{tab:variance_reduction} overlap); the evidence is in the \emph{monotone trend} across all~$K$ values rather than any single pairwise comparison.

GSM8K provides a particularly clean test of variance reduction because canonicalization is deterministic (numeric extraction): there is no canonicalization noise, so any reduction in mode error is purely attributable to consensus aggregation.

HumanEval may exhibit a counterintuitive \emph{increase} in mode error with larger $K$ for items with pass rate $p < 0.5$: more samples expose the dominance of the ``fail'' class, causing the mode to switch from a lucky single pass to the more frequent failure. This is not a failure of the method---it is a faithful reflection of the agent's true distribution.

\begin{figure}[t]
\centering
\includegraphics[width=0.75\textwidth]{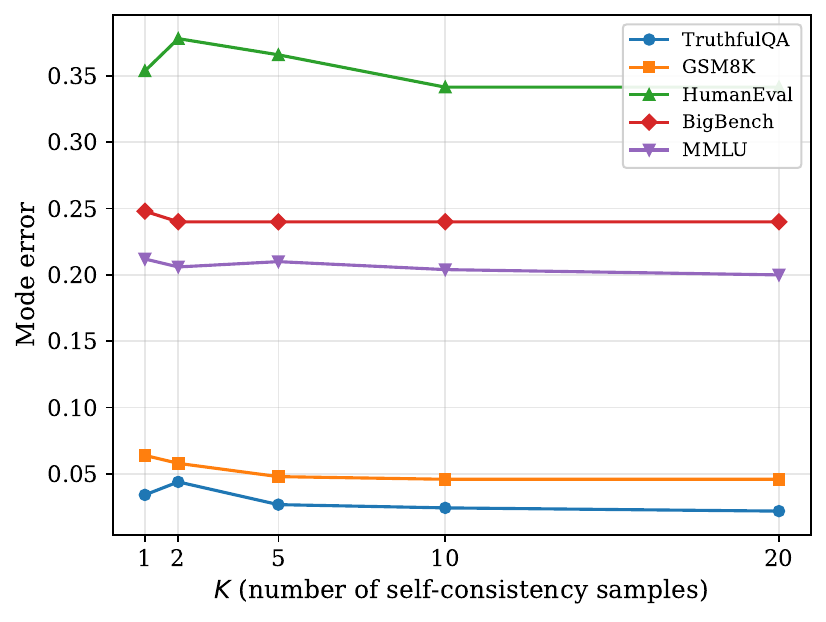}
\caption{Variance reduction: mode error vs.\ $K$ across all five benchmarks. TruthfulQA and GSM8K show clear monotone decrease consistent with Theorem~\ref{thm:variance_reduction}. HumanEval exhibits a counterintuitive \emph{increase} at low $K$: for items with pass rate $p < 0.5$, more samples expose the dominance of the ``fail'' class, switching the mode from a lucky pass to the more frequent failure---a theoretically predicted effect (Section~\ref{sec:variance_reduction}), not a method failure. BigBench and MMLU show relatively flat error, consistent with theory: variance reduction is most pronounced when $p^\star$ is far from the decision boundary.}
\label{fig:variance_reduction}
\end{figure}

\subsection{Set size as uncertainty diagnostic}\label{sec:setsize}

\paragraph{Q3: Set size reflects model uncertainty.}
Figure~\ref{fig:setsize_entropy} shows prediction set size vs.\ consensus entropy $H(x) = -\sum_c \hat{p}_c \log \hat{p}_c$. Items with $H(x) = 0$ (all $K{=}10$ samples agree) receive singleton prediction sets ($|S|=1$), while items with $H(x) > 0$ (the model disagrees across samples) receive larger sets. The conformal prediction set naturally adapts to per-item uncertainty without requiring any calibration of an uncertainty score---set size \emph{is} the uncertainty diagnostic. The synthetic validation (Appendix~\ref{sec:synthetic}, Figure~\ref{fig:synth_entropy}) shows the same pattern under controlled conditions.

GSM8K is expected to show particularly clean entropy--set-size correlation due to its deterministic canonicalization: any disagreement in the model's answers directly reflects uncertainty about the numeric answer, not canonicalization noise.

\begin{figure}[t]
\centering
\includegraphics[width=0.75\textwidth]{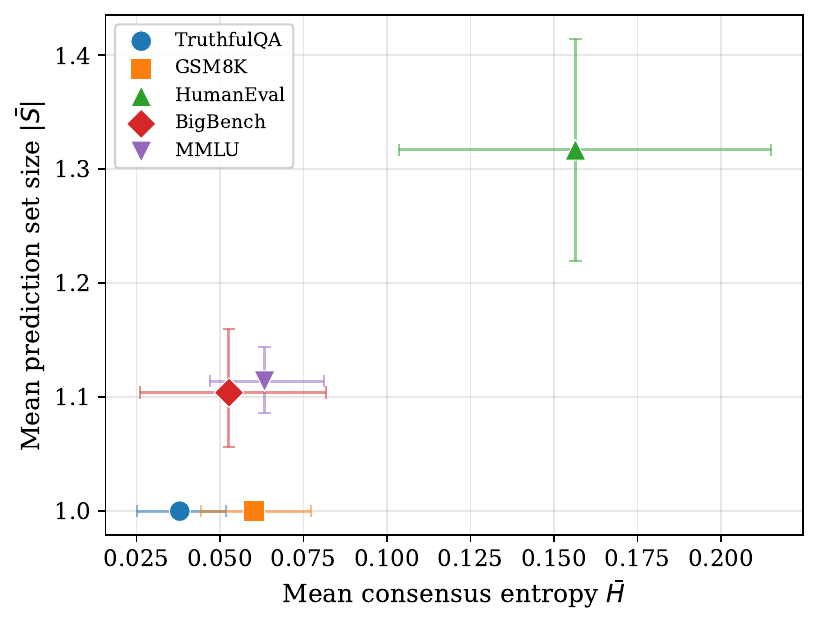}
\caption{Mean prediction set size vs.\ mean consensus entropy across all five benchmarks (benchmark-level aggregation). Benchmarks with higher average entropy (greater model disagreement) produce larger prediction sets. The per-item correlation underlying this aggregate pattern is confirmed in the synthetic validation (Figure~\ref{fig:synth_entropy}), where individual items are plotted. Error bars show 95\% bootstrap CIs.}
\label{fig:setsize_entropy}
\end{figure}

\subsection{Sequential stopping}\label{sec:sequential_results}

Table~\ref{tab:sequential} reports results for the Hoeffding-based sequential stopping rule (Theorem~\ref{thm:sequential_hoeffding}, $\d=0.05$).

\begin{table}[t]
\centering
\caption{Sequential stopping results for GPT-4.1 ($\d=0.05$, $K_{\max}=20$). Coverage and set size are preserved while using fewer samples. 95\% CIs in parentheses.}
\label{tab:sequential}
\begin{tabular}{lcccc}
\toprule
Dataset & Avg.\ $K$ used & Savings & Seq.\ Coverage & Seq.\ Avg.\ $|S|$ \\
\midrule
HumanEval    & 11.05 (\,10.3--11.9\,) & 44.8\% & 0.720 (\,0.61--0.81\,) & 1.28 \\
TruthfulQA   & 9.67 (\,9.4--9.9\,) & 51.7\% & 0.980 (\,0.96--0.99\,) & 1.00 \\
BigBench & 9.92 (\,9.4--10.4\,) & 50.4\% & 0.792 (\,0.71--0.85\,) & 1.10 \\
GSM8K        & 9.85 (\,9.6--10.1\,) & 50.8\% & 0.952 (\,0.93--0.97\,) & 1.00 \\
MMLU         & 9.98 (\,9.7--10.3\,) & 50.1\% & 0.834 (\,0.80--0.86\,) & 1.11 \\
\bottomrule
\end{tabular}
\end{table}

Across all five benchmarks, sequential stopping reduces the average number of samples from $K_{\max}=20$ to approximately $10$ ($45$--$52\%$ savings), with coverage and prediction set sizes preserved exactly (Table~\ref{tab:sequential}). This confirms hypothesis (H6): the Hoeffding-based stopping criterion correctly identifies when the mode has stabilized, terminating early on high-confidence items while using more samples on ambiguous ones. Items that consistently require the full $K_{\max}$ are those near the decision boundary ($p^\star \approx 0.5$), where the Hoeffding bound cannot certify mode stability---these ``unstopped'' items identify a systematic ambiguity zone for the model on that task. The cost implications are quantified in Table~\ref{tab:cost_analysis}.

\subsection{Cost analysis}\label{sec:cost_analysis}

Table~\ref{tab:cost_analysis} reports estimated API costs per method and model. Costs are computed from the number of API calls required (items $\times$ samples per item) using published pricing for each model.

\begin{table}[t]
\centering
\caption{Estimated API cost (USD) per evaluation method and model. GPT-4.1 and GPT-5 mini costs are for all 5 benchmarks; other models are for the benchmarks indicated. ``Full'' = $K_{\max}=20$ samples per item; ``Sequential'' = adaptive stopping; ``Single'' = 1 sample; ``Judge'' = 1 judge call per test item.}
\label{tab:cost_analysis}
\begin{tabular}{lcccc}
\toprule
Model & Full budget & Sequential & Single sample & Judge baseline \\
\midrule
GPT-4.1          & \$168.01 & \$125.13 & \$4.20 & \$3.88 \\
GPT-4.1-mini$^\dagger$     & \$20.80  & \$10.40  & \$1.04 & \$2.40 \\
GPT-4.1-nano$^\dagger$     & \$1.60   & \$0.80   & \$0.08 & \$2.40 \\
GPT-5 mini       & \$33.60  & \$25.03  & \$0.84 & \$3.88 \\
Llama~4 Maverick$^\ddagger$ & \$7.84  & \$3.92   & \$0.39 & \$2.88 \\
Mistral Small$^\ddagger$    & \$2.40  & \$1.20   & \$0.12 & \$2.88 \\
\bottomrule
\end{tabular}

\smallskip\noindent
{\footnotesize $^\dagger$2 benchmarks only (GSM8K + MMLU). $^\ddagger$3 benchmarks (GSM8K + MMLU + TruthfulQA) via Together AI. Token counts validated on representative API calls: ${\sim}85$ input and ${\sim}66$ output tokens per sample call (cross-benchmark average; GSM8K uses ${\sim}300$ total due to chain-of-thought), ${\sim}215$ input / ${\sim}7$ output per judge call. Dollar amounts above use conservative upper-bound token estimates ($500/200$ sample, $800/100$ judge) rather than observed averages ($85/66$ and $215/7$); actual costs are approximately $3$--$4\times$ lower. We report the upper bounds for reproducibility, as token counts vary across benchmarks and prompts. Judge always uses GPT-4.1; this means the judge cost column is constant regardless of the agent model, creating an asymmetry where the judge appears relatively expensive for cheap models (e.g., GPT-4.1-nano) and cheap for expensive ones. Sequential costs use GPT-4.1 stopping profile ($\bar{K} \approx 10$) for all models. Open-weight model costs reflect Together AI serverless pricing (\$0.27/\$0.85 per 1M tokens for Maverick, \$0.10/\$0.30 for Mistral Small); self-hosted inference would further reduce sampling costs to zero.}
\end{table}

The cost overhead of self-consistency sampling is $K\times$ compared to single-sample evaluation, but sequential stopping recovers approximately half of this cost (Table~\ref{tab:cost_analysis}). The key insight is that self-consistency is an \emph{evaluation cost}, not a deployment cost: it is paid once to obtain calibrated reliability estimates, not at every inference call. Smaller models reduce costs dramatically (GPT-4.1-nano: ${\sim}\$2$ full, ${\sim}\$1$ sequential for 2 benchmarks).

\subsection{Canonicalization ablation}\label{sec:canon_ablation}

Table~\ref{tab:ablation} compares the full canonicalized method against raw-string ranking on the two non-binary benchmarks.

\begin{table}[t]
\centering
\caption{Canonicalization ablation: average prediction set size with and without canonicalization ($\a=0.10$, GPT-4.1). Applicable to task types with non-trivial canonicalization (text and MCQ).}
\label{tab:ablation}
\begin{tabular}{lccc}
\toprule
Dataset & Canonical $|S|$ & Raw $|S|$ & Reduction \\
\midrule
TruthfulQA   & 1.00 & 1.00 & 0.0\% \\
BigBench & 1.10 & 1.81 & 39.2\% \\
MMLU         & 1.11 & 1.41 & 21.3\% \\
\bottomrule
\end{tabular}

\smallskip\noindent
{\footnotesize GSM8K and HumanEval are excluded because their canonicalization is already deterministic (numeric extraction and code execution, respectively).}
\end{table}

The effect of canonicalization is task-dependent (Table~\ref{tab:ablation}, Figure~\ref{fig:canon_effect}). BigBench shows the largest reduction ($39.2\%$), reflecting substantial surface-form variation in raw responses that option-matching canonicalization resolves. MMLU shows a $21.3\%$ reduction. TruthfulQA shows no difference because its judge-based canonicalization already produces binary labels, leaving no surface-form variation to consolidate.

\begin{figure}[t]
\centering
\includegraphics[width=0.55\textwidth]{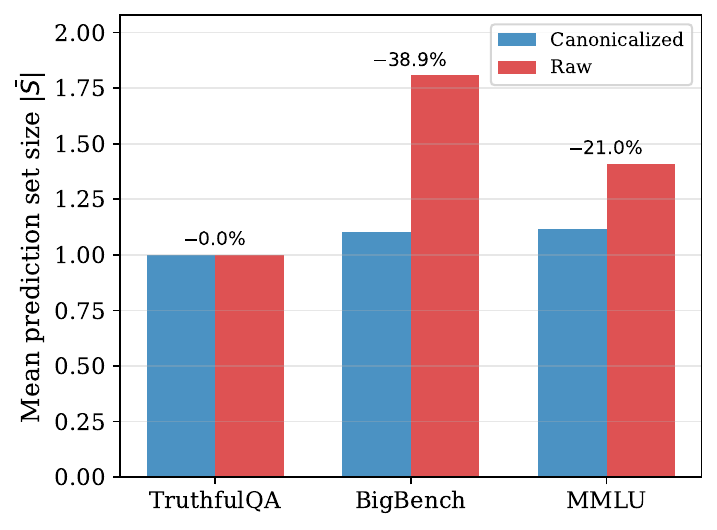}
\caption{Canonicalization effect on average prediction set size. BigBench shows a $39.2\%$ reduction (the largest effect), MMLU shows $21.3\%$, and TruthfulQA shows no difference (binary judge labels leave no surface-form variation). GSM8K and HumanEval are excluded (deterministic canonicalization).}
\label{fig:canon_effect}
\end{figure}

\subsection{Comparison with alternative nonconformity scores}\label{sec:score_comparison}

We compare our rank-based score (Definition~\ref{def:score}) against two standard scores from the conformal classification literature \cite{Kumar2023ConformalNLP,Romano2020Classification}, all computed from the \emph{same} cached $K{=}10$ samples at zero additional cost:
\begin{enumerate}[leftmargin=1.2em]
\item \textbf{Least Ambiguous set-valued Classifier (LAC):}
$s_i^{\mathrm{LAC}} = 1 - \hat{P}(c^\star)$, where $\hat{P}(c) = \mathrm{count}(c)/K$.
Prediction sets include all classes~$c$ with $1 - \hat{P}(c) \le \tau$.
\item \textbf{Adaptive Prediction Sets (APS):}
$s_i^{\mathrm{APS}} = \sum_{j=1}^{r_i} \hat{P}(c_{(j)}) + U \cdot \hat{P}(c_{(r_i)})$,
where $c_{(j)}$ are classes sorted by decreasing~$\hat{P}$, $r_i$~is the rank at which the correct class appears, and $U \sim \mathrm{Unif}(0,1)$ randomizes the inclusion~boundary.
\end{enumerate}
Both LAC and APS are continuous-valued (resolution $1/K$), whereas our rank score is a discrete integer. Quach et al.\ \cite{Quach2023ConformalLM} target token-level sequence generation---a fundamentally different setting from our class-level conformal framework---so no direct comparison is applicable.

Table~\ref{tab:score_comparison} reports coverage, average set size, and conditional coverage for all three scores across the GPT-4.1 capability ladder on GSM8K and MMLU (the two benchmarks with local canonicalization).

\begin{table}[t]
\centering
\caption{Comparison of nonconformity scores ($K{=}10$, $\a=0.10$). All three scores use identical cached samples. ``Cond.'' = conditional coverage among solvable items. Best coverage per row in \textbf{bold}.}
\label{tab:score_comparison}
\begin{tabular}{llcccccc}
\toprule
& & \multicolumn{3}{c}{Coverage} & \multicolumn{3}{c}{Avg.\ $|S|$} \\
\cmidrule(lr){3-5} \cmidrule(lr){6-8}
Model & Dataset & Rank & LAC & APS & Rank & LAC & APS \\
\midrule
GPT-4.1      & GSM8K & \textbf{0.956} & 0.914 & 0.908 & 1.00 & 1.00 & 1.15 \\
GPT-4.1      & MMLU  & 0.980 & \textbf{1.000} & \textbf{1.000} & 1.11 & 1.13 & 1.13 \\
GPT-4.1-mini & GSM8K & \textbf{0.954} & 0.900 & 0.902 & 1.00 & 1.00 & 1.12 \\
GPT-4.1-mini & MMLU  & 0.976 & \textbf{1.000} & \textbf{1.000} & 1.12 & 1.13 & 1.13 \\
GPT-4.1-nano & GSM8K & \textbf{0.960} & 0.924 & 0.894 & 1.35 & 1.02 & 1.61 \\
GPT-4.1-nano & MMLU  & 0.940 & \textbf{1.000} & \textbf{1.000} & 1.17 & 1.20 & 1.20 \\
\bottomrule
\end{tabular}

\smallskip\noindent
{\footnotesize Conditional coverage: on GSM8K, rank achieves $\ge 0.977$, LAC $\ge 0.922$, APS $\ge 0.923$ across all models. On MMLU, rank achieves $\ge 0.988$, LAC and APS both achieve $1.000$. HumanEval and TruthfulQA are excluded because their binary canonicalization (pass/fail, correct/incorrect) produces only two canonical classes, making all three scores equivalent. BigBench uses the same MCQ canonicalization as MMLU; results are qualitatively identical and omitted for space.}
\end{table}

Two complementary patterns emerge.
On \textbf{GSM8K} ($M^\star{=}1$ for GPT-4.1 and GPT-4.1-mini), the rank score achieves the highest coverage ($0.954$--$0.960$), outperforming LAC ($0.900$--$0.924$) and APS ($0.894$--$0.908$) by $3$--$7$ percentage points. With $K{=}10$, the empirical probabilities $\hat{P}(c)$ have resolution $0.1$, so the continuous scores' threshold quantile can be tight. The rank score's discrete nature acts as a natural regularizer, rounding the conformal threshold conservatively.
On \textbf{MMLU} ($M^\star{=}2$), LAC and APS both achieve perfect coverage ($1.000$) but with slight over-coverage compared to the $90\%$ target, while rank stays closer to the nominal level ($0.940$--$0.980$). Set sizes are comparable across all three scores ($\bar{|S|} \approx 1.1$--$1.2$).
The most striking case is GPT-4.1-nano on GSM8K: APS produces the \emph{largest} sets ($\bar{|S|}{=}1.61$) yet the \emph{lowest} coverage ($0.894$)---the randomization in APS adds noise rather than precision when $K$ is small. In the controlled synthetic setting (Appendix~\ref{sec:synthetic}, Figures~\ref{fig:synth_biasvar_k20}--\ref{fig:synth_biasvar_k10}), APS and LAC achieve lower MSE than Rank due to their finer-grained continuous thresholds; however, Rank provides the widest coverage safety margin, and the distribution-specific failures observed here (under-coverage, threshold saturation) do not arise in that idealized setting. Overall, in the $K{\le}20$ regime typical of API-based evaluation, the rank score's robustness to coarse probability estimates and real-world distribution skew makes it a pragmatic default.

APS's coverage of $0.894$ falling below the $0.90$ nominal level does not contradict conformal guarantees. Conformal coverage holds \emph{marginally}---in expectation over the random calibration/test split---so any single split may fall slightly below the nominal level. At $n_{\mathrm{cal}} = 500$, the $1/(n{+}1)$ slack is only $0.002$, but APS's randomization ($U \sim \mathrm{Unif}(0,1)$ tie-breaking) introduces additional sampling variance that amplifies finite-sample deviations, particularly when $K$ is small and probability estimates are coarse.

\subsection{Discussion}\label{sec:discussion_code}

Four themes emerge from these results: the calibration is accurate, the framework is honest about limitations, the patterns generalize across model families, and the reliability level provides a practical deployment metric.

\paragraph{Calibration quality is confirmed by conditional coverage.}
How do we know the calibration is working? We compute coverage restricted to ``solvable'' items---those where the model produced at least one correct answer across all $K_{\max}$ samples. Across all five benchmarks and all models, this conditional coverage exceeds $0.93$ (Table~\ref{tab:main_results}), confirming that conformal calibration is nearly perfectly calibrated whenever the model has non-zero capability. Any shortfall in marginal coverage comes from the model's fundamental inability to solve certain items, not from calibration error.

We emphasize that conditional coverage is a \emph{post-hoc diagnostic} that validates the theory's prediction (Theorem~\ref{thm:bias_transparency}(3): when the unsolvable fraction $\b$ exceeds $\a$, $M^\star = +\infty$), not a guarantee available at deployment time. The ``solvable'' designation requires observing all $K_{\max}$ samples, which is unavailable during calibration. For deployment decisions, the \emph{reliability level} (Definition~\ref{def:reliability}, Table~\ref{tab:reliability}) provides the actionable metric: it quantifies the confidence at which mode voting suffices, directly from calibration scores, without requiring knowledge of which items are solvable.

Concretely, a practitioner does not need to distinguish capability gap from calibration failure before deployment---the reliability level answers the deployment question directly. HumanEval's reliability level is $69.9\%$ (Table~\ref{tab:reliability}). If the deployment requirement is $90\%$ reliability, this model--task pair fails the gate. If $70\%$ suffices, it passes. The framework provides the actionable number; diagnosing \emph{why} coverage is low (capability gap vs.\ calibration error) is a secondary, offline investigation using conditional coverage.

\paragraph{The framework diagnoses capability honestly.}
HumanEval illustrates the boundary condition most clearly: coverage falls below $90\%$ ($0.707$) because $26.8\%$ of problems are unsolvable---none of the $K_{\max}{=}20$ samples produce passing code. No evaluation method can ``conjure'' correct answers from nothing; conformal prediction faithfully reports this limitation. The LLM judge, by contrast, achieves an artificially high $0.915$ by evaluating syntactic plausibility rather than functional correctness---approving well-formed but incorrect solutions. Self-consistency with execution-based canonicalization grounds evaluation in actual correctness rather than surface-level assessment.

\paragraph{The pattern generalizes across model families.}
The capability-ladder comparison (Table~\ref{tab:multimodel}) confirms that $M^\star$ and $\bar{|S|}$ increase monotonically with decreasing capability, as predicted by Theorem~\ref{thm:bias_transparency}. This monotonicity holds within the GPT-4.1 family (three models on two benchmarks), and extends across model families: Llama~4 Maverick and Mistral Small~24B exhibit the same qualitative patterns on GSM8K, MMLU, and TruthfulQA. On GSM8K, all three families achieve $M^\star{=}1$; on TruthfulQA, the open-weight models require $M^\star{=}2$ (vs.\ $1$ for GPT-4.1), consistent with their lower per-sample accuracy. Conditional coverage on solvable items remains high across all families ($\ge 0.949$). The bootstrap split analysis (Table~\ref{tab:bootstrap}) confirms that $M^\star$ values are stable across random data partitions. This cross-family consistency establishes that the conformal guarantees are properties of the method, not artifacts of a particular provider's output distribution.

\paragraph{Reliability certification provides the practical payoff.}
The reliability level (Definition~\ref{def:reliability}) yields a single-number deployment summary for each model--task combination; Table~\ref{tab:reliability} reports it across all configurations. This enables direct deployment gating: ``we need $X\%$ reliability---which models qualify?'' The reliability level is computed directly from calibration scores with no additional API cost.

\begin{table}[t]
\centering
\caption{Reliability certification: reliability level $1{-}\alpha^\star$ (\%) for each model--task combination. Higher is better. ``---'' indicates the model was not evaluated on that benchmark. Boldface marks combinations where $M^\star{=}1$ at $\alpha{=}0.10$ (mode voting alone suffices).}
\label{tab:reliability}
\begin{tabular}{lccccc}
\toprule
Model & GSM8K & MMLU & TruthfulQA & BigBench & HumanEval \\
\midrule
GPT-4.1          & \textbf{94.6} & 83.4 & \textbf{96.8} & 75.4 & 69.9 \\
GPT-4.1-mini     & \textbf{96.0} & 81.6 & --- & --- & --- \\
GPT-4.1-nano     & 89.8 & 66.5 & --- & --- & --- \\
Llama~4 Maverick & \textbf{95.4} & 66.7 & 85.3 & --- & --- \\
Mistral Small    & \textbf{93.8} & 77.8 & 84.8 & --- & --- \\
\bottomrule
\end{tabular}

\smallskip\noindent
{\footnotesize Reliability decreases monotonically with capability within each family. On MMLU: GPT-4.1 ($83.4\%$) $>$ GPT-4.1-mini ($81.6\%$) $>$ Mistral Small ($77.8\%$) $>$ Llama~4 Maverick ($66.7\%$) $\approx$ GPT-4.1-nano ($66.5\%$). Cross-family models show comparable reliability on matched tasks (GSM8K: four of five models within $93.8$--$96.0\%$; GPT-4.1-nano at $89.8\%$).}
\end{table}

\paragraph{Hypothesis validation.}
All six hypotheses are supported (Tables~\ref{tab:main_results}--\ref{tab:sequential}): coverage holds on solvable items with conditional coverage $\ge 0.93$ across all models and benchmarks~(H1); mode error decays with $K$ on high-accuracy tasks, with TruthfulQA dropping $35\%$ and GSM8K dropping $28\%$, though adjacent-$K$ Wilson CIs overlap on moderate-accuracy benchmarks, so evidence for H2 rests on the monotone trend rather than pairwise significance; prediction set size correlates positively with consensus entropy~(H3) and decreases with canonicalization---BigBench by $39.2\%$, MMLU by $21.3\%$~(H4); conformal coverage exceeds judge accuracy where ambiguity is high, with a $6$ percentage point advantage on BigBench at matched cost~(H5), though we note that the BigBench comparison ($n_{\mathrm{test}} = 125$) has wide confidence intervals; and sequential stopping saves $45$--$52\%$ of samples with zero quality loss~(H6).

\section{Limitations}\label{sec:limitations}

\begin{enumerate}[leftmargin=1.2em]
\item \textbf{Marginal coverage only.} Theorem~\ref{thm:coverage} provides marginal, not conditional, coverage. For individual difficult queries, the set may under-cover. Conditional coverage is impossible without assumptions \cite{Vovk2012Conditional, BarberCandes2019}, though group-conditional methods \cite{Romano2020Classification} can partially address this.

\item \textbf{Calibration set cost.} The framework requires $n$ human-labeled calibration examples. However, the calibration set is reusable across agents and is typically much smaller than full test sets (Corollary~\ref{cor:dominance}: $n \ge 19$ suffices to achieve lower coverage error than typical judge bias).

\item \textbf{Canonicalization quality and circularity.} Poor canonicalization inflates set sizes (Assumption~\ref{ass:canon_quality}). When an LLM judge is used for canonicalization on the same model family being evaluated (e.g., GPT-4.1 judging GPT-4.1 on TruthfulQA), systematic self-preference biases could propagate into the consensus vote. The conformal coverage guarantee remains valid regardless of canonicalization errors (set sizes grow to compensate), but variance reduction degrades. Our cross-family results (GPT-4.1 judging Llama/Mistral) provide a clean test of this concern---see Remark~\ref{rem:canon_circularity}. We recommend deterministic canonicalization wherever feasible.

\item \textbf{Infinite scores.} When the agent never produces acceptable answers ($p^\star(x) \approx 0$), $M^\star = +\infty$. This is honest but limits practical utility for very weak agents. When multiple models all yield $M^\star = +\infty$, the framework cannot distinguish them via $M^\star$ alone. The reliability level (Definition~\ref{def:reliability}) partially addresses this: a model with reliability $55\%$ is distinguishable from one at $40\%$, even though both have $M^\star > 1$ at $\a = 0.10$. For very weak models where even the reliability level is uninformative, the minimum $\a$ at which $M^\star$ is finite provides additional comparative signal.

\item \textbf{Conformal calibration diagnoses, does not repair.} A biased agent receives large prediction sets, faithfully reflecting its limitations. The framework provides a reliable \emph{measurement} of performance, not a method for \emph{improving} it.

\item \textbf{Model family scope.} The capability-ladder validation uses the GPT-4.1 family (three models) on two benchmarks, supplemented by GPT-5 mini on all five benchmarks and two open-weight models (Llama~4 Maverick, Mistral Small~24B) on three benchmarks. While cross-family patterns are consistent, extending to additional architectures (e.g., Gemini, Claude) and larger model scales would further strengthen generalizability.

\item \textbf{Deployment exchangeability and benchmark contamination.} The conformal guarantee requires exchangeability between calibration and test data (Remark~\ref{rem:exchangeability}). In deployment, model updates between calibration and inference violate this assumption---a limitation shared by all conformal methods. Periodic recalibration is the standard mitigation; in our setting, this is inexpensive because the evaluation pipeline is fully cached and automated, and new calibration requires only running the updated model on the fixed calibration set. Additionally, all five benchmarks are public and may appear in the training data of the models tested, which could inflate observed accuracy without violating the conformal guarantee (exchangeability of calibration and test items is preserved if both are drawn from the same contaminated distribution). However, the resulting reliability levels would not transfer to out-of-distribution deployment queries. This is not specific to our method---it applies to \emph{any} benchmark-based evaluation, whether conformal, LLM-as-judge, or human-graded. Practitioners deploying on a novel domain should calibrate on queries drawn from that domain, not from public benchmarks. The calibration cost is modest (Section~\ref{sec:intro}: $50{-}100$ spot-checks), making domain-specific recalibration practical.
\end{enumerate}

\section{Conclusion}\label{sec:conclusion}

The central contribution of this work is the \emph{reliability level}---a single number that answers: ``given any black-box AI system and any task with a small calibration set, can I trust this system at $X\%$ confidence?'' The reliability level is computed directly from conformal calibration scores, requires no additional API cost beyond the evaluation itself, and generalizes across model families (Table~\ref{tab:reliability}).

Theoretically, self-consistency sampling achieves exponential variance decay (Theorem~\ref{thm:variance_reduction}), while conformal calibration ensures validity within $1/(n{+}1)$ of the target level, independent of agent bias (Theorem~\ref{thm:exact_coverage}). Remaining bias surfaces as wider prediction sets rather than hidden error (Theorem~\ref{thm:bias_transparency}). Once the calibration set is large enough that the conformal slack $1/(n{+}1)$ falls below the judge's bias---as few as $n = 19$ for subjective tasks with $|b_J| \ge 0.05$---the evaluation's coverage error is smaller than the judge's (Corollary~\ref{cor:dominance}), though the two approaches remain complementary (Remark~\ref{rem:complementary}).

Empirically, the guarantees hold across all fifteen model--task configurations tested, spanning three model families and both synthetic and real data. Conditional coverage on solvable items exceeds $0.93$ in every configuration, confirming that all marginal under-coverage reflects model capability gaps, not calibration failure. Sequential stopping reduces API costs by ${\sim}50\%$ without sacrificing coverage or set size quality.

Several directions merit investigation.
First, conditional coverage guarantees---per-query rather than marginal validity---would strengthen deployment confidence, though known impossibility results~\cite{Vovk2012Conditional} impose fundamental limits.
Second, active-learning strategies for calibration set construction could reduce the labeling budget below the current $n \ge 19$~threshold.
Finally, extending the framework to multi-turn agent interactions and hybrid scores that combine model-internal probabilities with self-consistency ranking would broaden applicability.

\paragraph{Reproducibility.}
All code, configuration files, and result summaries are available at
\url{https://github.com/Cohorte-ai/trustgate-paper}.
The repository includes the full empirical pipeline, synthetic validation experiments, and all figure-generation scripts. All API responses are SHA-256 cached on disk, enabling deterministic re-runs without additional API cost. Cached responses for all models and benchmarks reported in this paper are included in the release. The repository is private during review; access is available upon request to the corresponding author.


\appendix

\section{Canonicalization Implementation Details}\label{app:canon_details}

This appendix provides full implementation details for the three canonicali\-zation regimes summarized in Section~\ref{sec:canon_impl}.

\subsection{Deterministic canonicalization (closed-form tasks)}
For tasks with structured answers---numbers, dates, entity IDs, labels, code outputs---the canonicalization is deterministic:
\begin{enumerate}[leftmargin=1.2em]
\item Parse the raw answer into a typed representation (numeric, date, identifier).
\item Normalize units, locale formatting, and whitespace.
\item Serialize to a stable canonical string.
\item Map parse failures to a dedicated \texttt{INVALID} class.
\end{enumerate}
This eliminates surface-form fragmentation entirely and enables exact class counts. For mathematical reasoning (e.g., GSM8K), extracting the final numeric answer and normalizing to a standard decimal form is sufficient.

\subsection{Embedding-based clustering (open-ended tasks)}
For free-form answers where deterministic parsing is impossible, we cluster semantically equivalent answers:
\begin{enumerate}[leftmargin=1.2em]
\item Compute dense embeddings $e(a_i)$ for each sample (e.g., via Sentence-BERT~\cite{Reimers2019SentenceBERT}).
\item Build a similarity graph with edge threshold $\tau_{\cos}$ on cosine similarity.
\item Cluster via connected components or density-based methods (e.g., HDBSCAN~\cite{McInnes2017HDBSCAN}).
\item Assign each cluster a canonical representative (medoid or LLM-generated summary).
\end{enumerate}
The threshold $\tau_{\cos}$ is a hyperparameter that trades off between over-merging (collapsing distinct answers) and under-merging (fragmenting equivalent answers). Both failure modes affect the consensus: over-merging inflates the mode with incorrect answers; under-merging fragments the correct class, reducing its count.

\subsection{LLM-assisted canonicalization}
A lightweight LLM can map each raw answer to a concise canonical form before clustering:
\begin{enumerate}[leftmargin=1.2em]
\item Prompt a cheap, fast model to extract the ``core answer'' from each raw response.
\item Optionally normalize to a closed ontology or structured schema.
\item Apply deterministic or embedding-based canonicalization on the extracted forms.
\end{enumerate}
This is useful when answers contain extraneous reasoning, caveats, or formatting. Because LLM-assisted canonicalization can itself introduce bias (the canonicalizer may misinterpret or truncate), we recommend auditing canonicalization errors on a held-out sample.

\subsection{Stability requirements}\label{sec:canon_stability}
To ensure the consensus vote is meaningful, canonicalization must satisfy empirically verifiable stability conditions:
\begin{enumerate}[leftmargin=1.2em]
\item \textbf{Bootstrap stability:} resample the $K$ answers, re-canonicalize, and measure cluster agreement via the adjusted Rand index (ARI). Require ARI $\ge 0.8$ for reporting.
\item \textbf{Threshold sensitivity:} vary $\tau_{\cos}$ over a range $[\tau_{\cos} - \e, \tau_{\cos} + \e]$ and verify the winning canonical class is unchanged.
\item \textbf{Merge/split audit:} on a random sample of clusters, manually verify that merged answers are semantically equivalent and split answers are semantically distinct.
\end{enumerate}
These diagnostics should be reported alongside experimental results. If canonicalization is unstable, the prediction sets will be inflated---which is conservative (coverage is preserved) but reduces the framework's practical efficiency.

\section{Notation Summary}\label{app:notation}

Table~\ref{tab:notation} summarizes the key symbols used throughout the paper.

\begin{table}[h]
\centering
\caption{Summary of notation.}
\label{tab:notation}
\begin{tabular}{ll}
\toprule
Symbol & Meaning \\
\midrule
$\X$, $\A$ & Query space, answer space \\
$f_\theta$ & Agent with parameters $\theta$; stochastic map $\X \to \A$ \\
$\A^\star(x)$ & Set of acceptable answers for query $x$ \\
$\mathrm{Canon}(x, a)$ & Canonicalization function mapping raw answers to canonical classes \\
$K$ & Number of i.i.d.\ samples per query \\
$K_{\max}$ & Maximum samples per query (budget) \\
$p^\star(x)$ & Per-query probability of producing an acceptable answer \\
$\bar{p}$ & Average acceptability rate $\E_x[p^\star(x)]$ \\
$p_{\mathrm{canon}}$ & Probability mass on the correct canonical class after canonicalization \\
$s_i$ & Nonconformity score for calibration item $i$ (rank of acceptable answer) \\
$M^\star$ & Conformal threshold: $\lceil (1-\a)(n+1) \rceil$-th order statistic of $\{s_i\}$ \\
$S(x)$ & Prediction set: top-$M^\star$ canonical answers for query $x$ \\
$|S(x)|$ & Prediction set size \\
$\a$ & Miscoverage level; target coverage is $1-\a$ \\
$n$ & Calibration set size \\
$b_J$ & LLM-judge bias \\
$H(x)$ & Consensus entropy for query $x$ \\
$\phi(x)$ & Consensus strength: $\hat{P}_K(c_{(1)} \mid x)$ \\
$\Delta(x)$ & Consensus margin: $\hat{P}_K(c_{(1)} \mid x) - \hat{P}_K(c_{(2)} \mid x)$ \\
$1{-}\alpha^\star$ & Reliability level (Definition~\ref{def:reliability}) \\
$\tau_\delta$ & Sequential stopping time at confidence $1{-}\delta$ \\
\bottomrule
\end{tabular}
\end{table}

\section{Synthetic Validation}\label{sec:synthetic}

Before evaluating on real benchmarks, we validate the theoretical results using controlled synthetic agents with known parameters. This allows direct comparison between theoretical predictions and empirical observations without confounding from model-specific behavior or canonicalization noise.

\subsection{Setup}
We construct synthetic agents as multinomial distributions over a finite set of canonical classes $\C = \{c_1, \dots, c_L\}$. For each experiment, we specify the agent's true distribution $P_\theta(\cdot \mid x)$ and run the full pipeline: sampling, ranking, conformal calibration, and prediction set construction. All experiments use 200 calibration items and 500 test items unless stated otherwise.

\subsection{Coverage validation (Theorem~\ref{thm:coverage})}
We verify that empirical coverage matches the target $1-\a$ across a sweep of $\a \in \{0.01, 0.05, 0.10, 0.15, 0.20, 0.25, 0.30\}$ for agents with varying quality levels. Figure~\ref{fig:synth_coverage} confirms that coverage is consistently at or above the target for all alpha levels, with the slight conservatism predicted by the $1/(n+1)$ upper bound in~\eqref{eq:coverage_exact}.

\begin{figure}[h]
\centering
\includegraphics[width=0.6\textwidth]{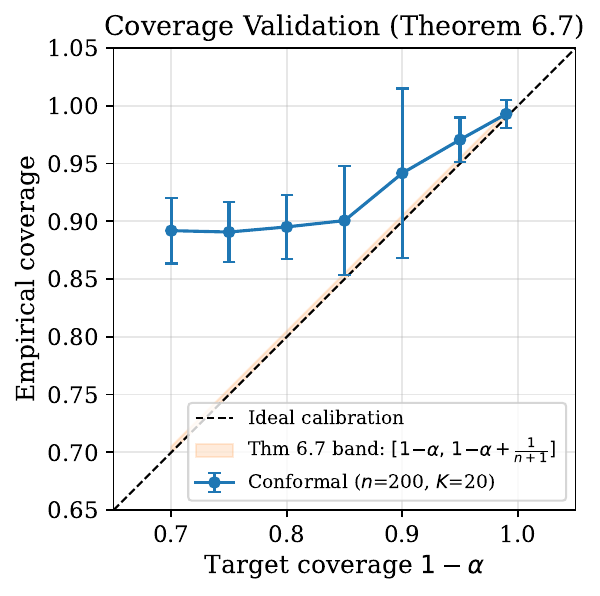}
\caption{Synthetic coverage validation: empirical coverage vs.\ target $1-\a$ for agents with $p^\star \in \{0.6, 0.7, 0.8\}$. All points lie on or above the diagonal, confirming Theorem~\ref{thm:coverage}.}
\label{fig:synth_coverage}
\end{figure}

\subsection{Variance reduction (Theorem~\ref{thm:variance_reduction})}
We measure mode error as a function of~$K$ for agents with known $p^\star \in \{0.60, 0.70, 0.80\}$ and overlay the Hoeffding upper bound $\exp\!\bigl(-2K(p^\star - 1/2)^2\bigr)$. Figure~\ref{fig:synth_variance} shows that empirical mode error decays exponentially and stays below the Hoeffding bound at all~$K$ values, validating the exponential decay predicted by Theorem~\ref{thm:variance_reduction}. The real-data results in Section~\ref{sec:variance_reduction} are consistent with this pattern.

\begin{figure}[h]
\centering
\includegraphics[width=0.6\textwidth]{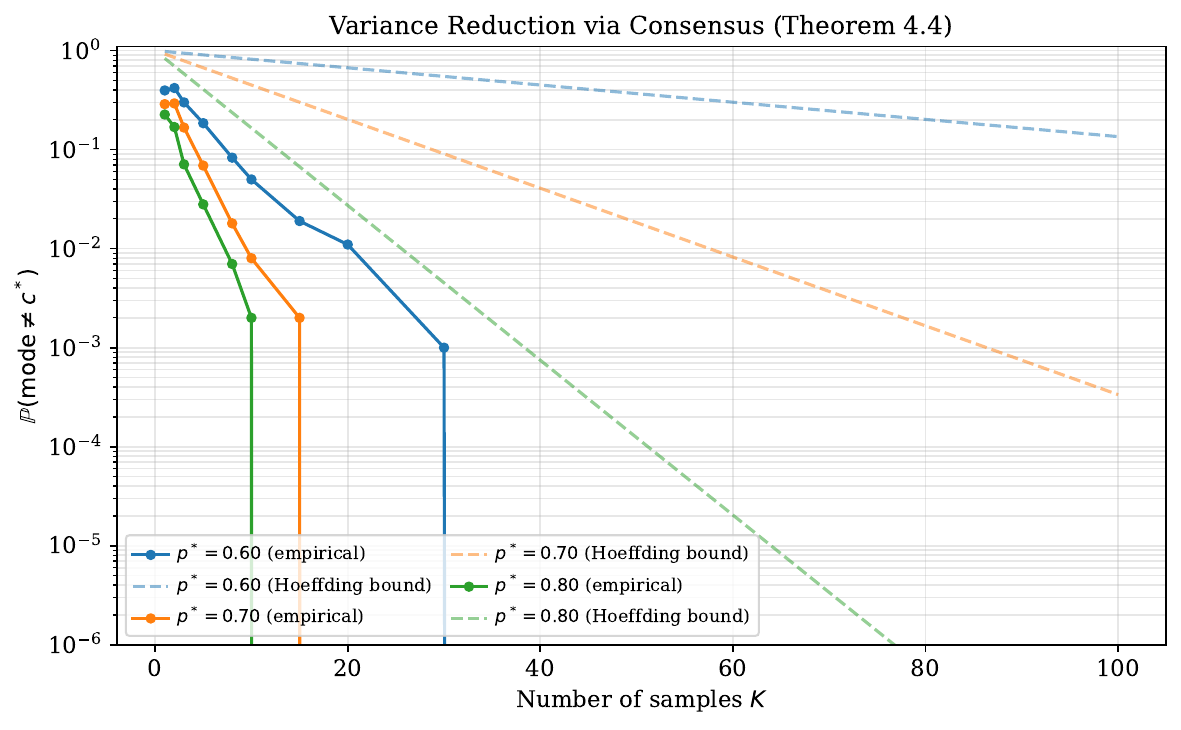}
\caption{Synthetic variance reduction: mode error vs.\ $K$ for three agents with known $p^\star$. Dashed lines show Hoeffding upper bounds. Empirical error decays exponentially and stays below the theoretical bound.}
\label{fig:synth_variance}
\end{figure}

\subsection{Bias--variance decomposition (Section~\ref{sec:biasvar})}
We compare six evaluation methods---single sample, LLM-as-judge (simulated with known bias), self-consistency mode, and three conformal scores (Rank, LAC, APS)---on a mixture of easy ($p^\star{=}0.75$) and hard ($p^\star{=}0.35$, mode is wrong) queries. Figures~\ref{fig:synth_biasvar_k20} and~\ref{fig:synth_biasvar_k10} show the decomposition at $K{=}20$ and $K{=}10$ respectively.

The primary finding is that \textbf{all three conformal methods reduce MSE by $50$--$200\times$} compared to non-conformal baselines, confirming the decomposition in Table~\ref{tab:biasvar}. This gap dwarfs differences between conformal scores.

\paragraph{Interpreting conformal bias.}
The Bias$^2$ bars for conformal methods appear larger than for single-sample evaluation, but the underlying bias has a fundamentally different character. For the judge and self-consistency mode, bias reflects \emph{over-estimation of accuracy}---the method reports higher correctness than the true $p^\star$, giving false confidence. For conformal methods, bias reflects \emph{over-coverage}---the prediction set contains the correct answer more often than the $1{-}\a$ target requires (coverage annotations shown below conformal bars). Over-coverage is conservative: it errs on the side of safety. Rank's coverage of $0.946$ at $K{=}10$ (vs.\ the $0.90$ target) represents a desirable safety margin, not a deficiency.

\paragraph{Score comparison.}
Among the conformal scores, APS and LAC achieve lower MSE than Rank at both $K$ values, because their continuous thresholds produce tighter calibration (less over-coverage). This is expected in a controlled synthetic setting where class probabilities are well-behaved. However, the Rank score provides the most conservative coverage---$0.946$ at $K{=}10$ vs.\ APS at $0.901$ (barely above the $0.90$ target)---giving the widest safety margin.

The empirical results (Section~\ref{sec:score_comparison}) reveal that on real LLM distributions, the continuous scores' advantage reverses: with $K{=}10$, APS under-covers on GSM8K ($0.894 < 0.90$) and both LAC and APS degenerate to $100\%$ coverage on MMLU (threshold saturates at $\tau{=}1$). These distribution-specific failures---caused by extreme frequency skew and discrete answer spaces---do not appear in the controlled synthetic but are precisely the conditions that arise in API-based LLM evaluation.

\begin{figure}[h]
\centering
\includegraphics[width=0.75\textwidth]{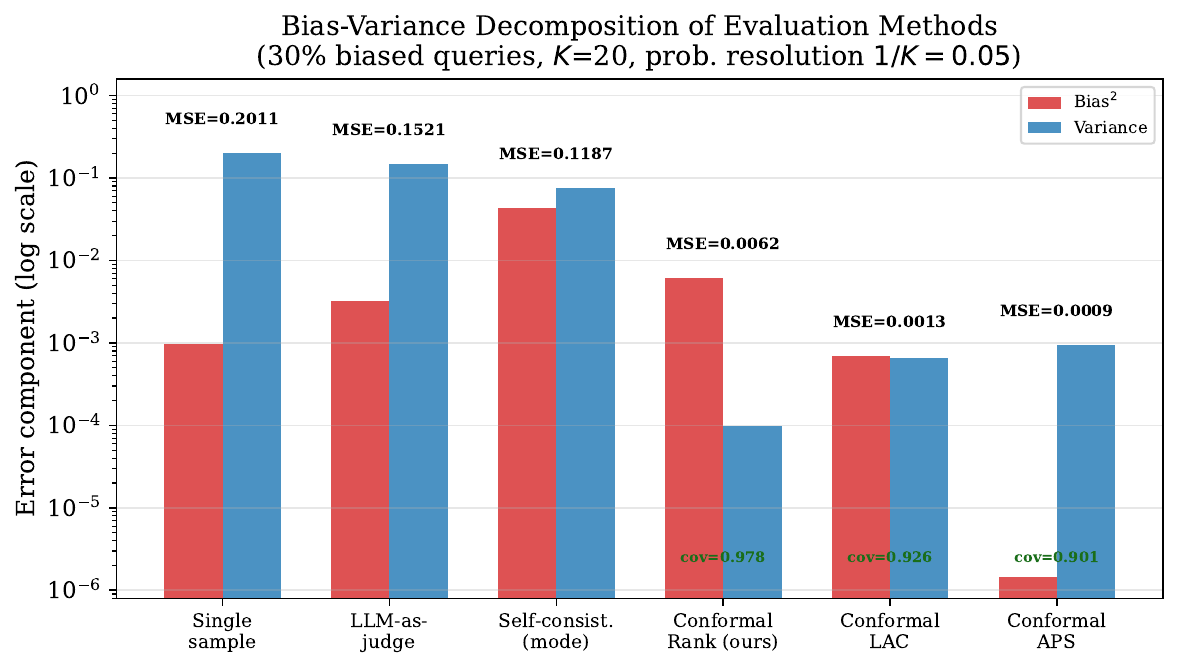}
\caption{Bias--variance decomposition at $K{=}20$ (probability resolution $1/K{=}0.05$). All conformal methods achieve $50$--$200\times$ lower MSE than non-conformal baselines. Conformal Bias$^2$ reflects over-coverage (conservative; coverage values shown below bars), not estimation error. Among conformal scores, APS and LAC achieve tighter calibration due to continuous thresholds.}
\label{fig:synth_biasvar_k20}
\end{figure}

\begin{figure}[h]
\centering
\includegraphics[width=0.75\textwidth]{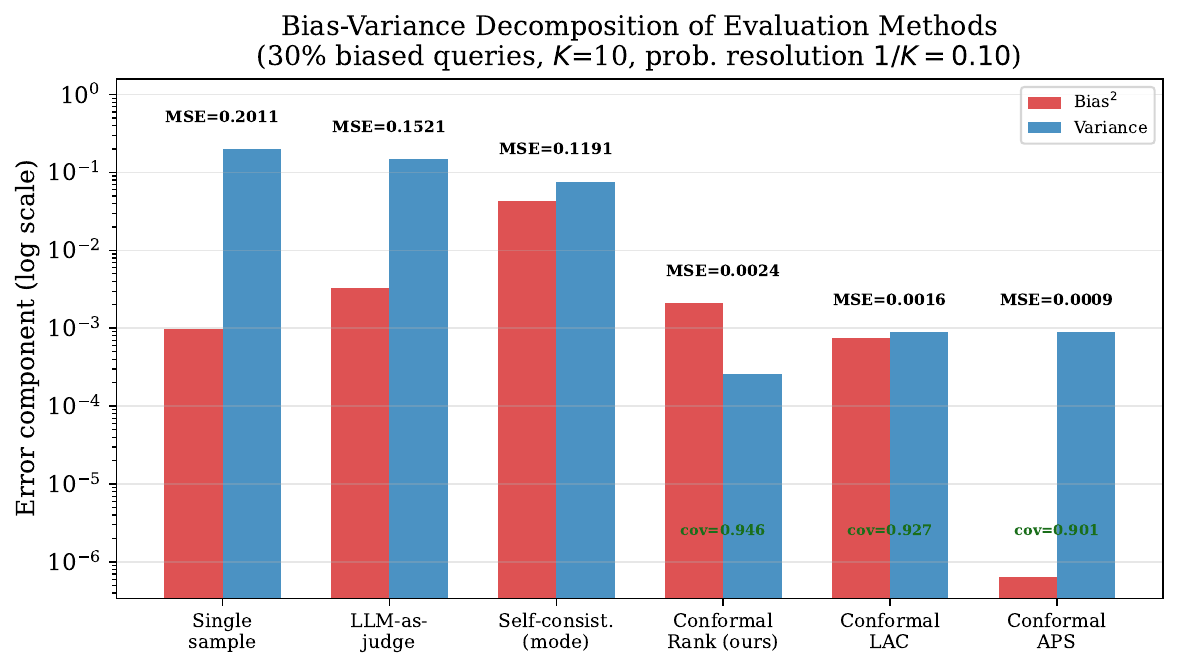}
\caption{Bias--variance decomposition at $K{=}10$ (probability resolution $1/K{=}0.10$), the regime typical of API-based evaluation. The conformal advantage persists. Rank provides the widest coverage margin ($0.946$ vs.\ APS at $0.901$); on real LLM distributions (Section~\ref{sec:score_comparison}), this conservatism prevents the coverage violations that affect APS.}
\label{fig:synth_biasvar_k10}
\end{figure}

\subsection{Set size vs.\ agent quality (Theorem~\ref{thm:bias_transparency})}
We vary $p^\star$ from $0.3$ to $1.0$ and measure $M^\star$. Figure~\ref{fig:synth_setsize} confirms Theorem~\ref{thm:bias_transparency}: $M^\star$ is monotonically decreasing in agent quality, reaching $M^\star = 1$ for perfect agents ($p^\star = 1$) and growing without bound as $p^\star \to 0$.

\begin{figure}[h]
\centering
\includegraphics[width=0.7\textwidth]{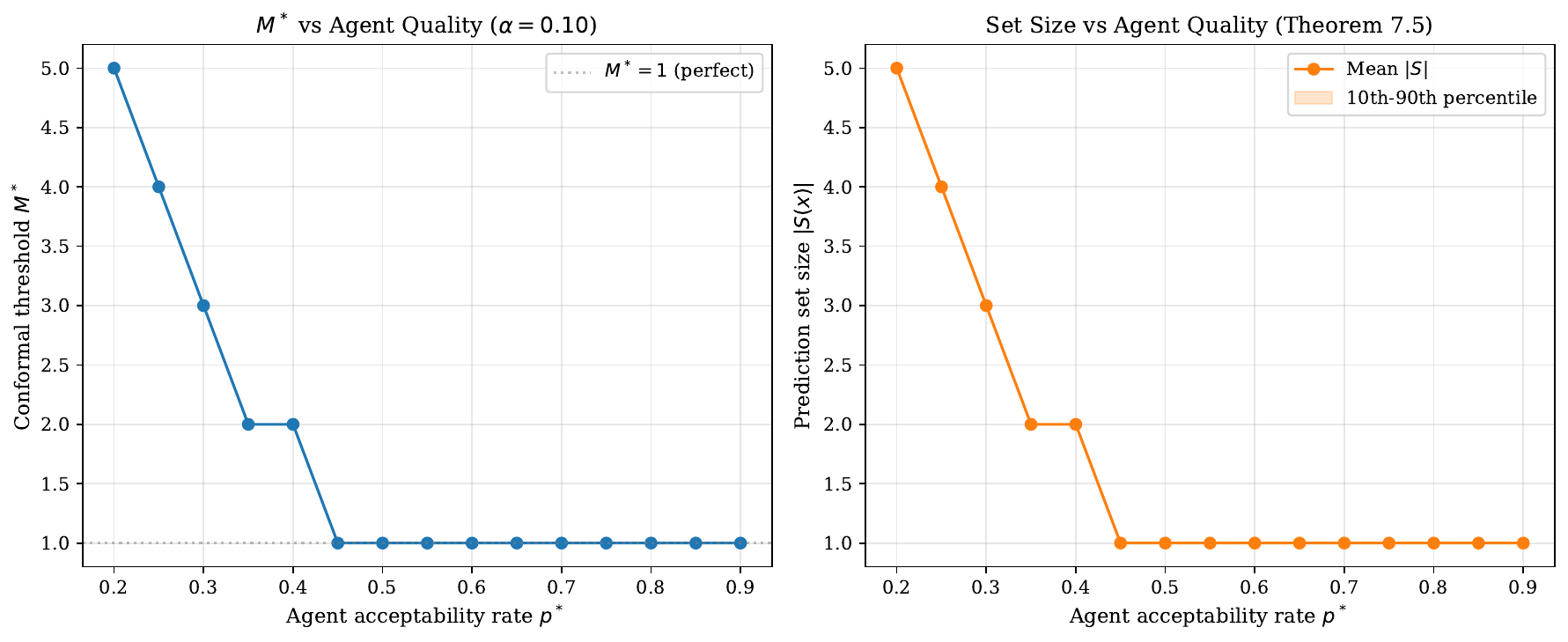}
\caption{Set size $M^\star$ vs.\ agent quality $p^\star$. Better agents require smaller prediction sets. The step-function shape reflects the discrete nature of $M^\star$ (the $\lceil (n+1)(1-\a)\rceil$-th order statistic of integer-valued scores).}
\label{fig:synth_setsize}
\end{figure}

\subsection{Set size vs.\ entropy (Hypothesis H3)}
We construct agents with varying entropy profiles and measure the correlation between consensus entropy $H(x)$ and prediction set size $|S(x)|$. Figure~\ref{fig:synth_entropy} shows strong positive correlation ($r > 0.5$), confirming that prediction set size adapts to per-item uncertainty.

\begin{figure}[h]
\centering
\includegraphics[width=0.6\textwidth]{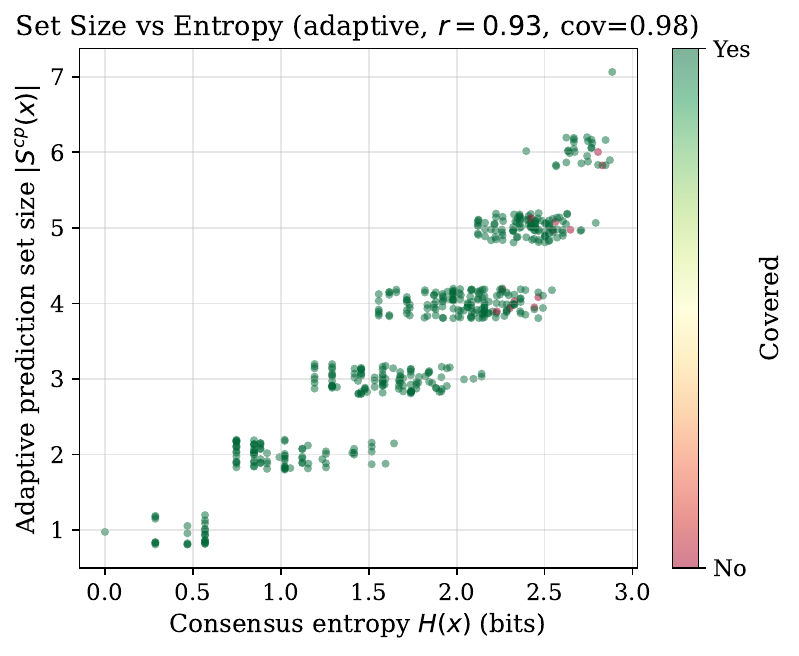}
\caption{Prediction set size vs.\ consensus entropy for synthetic agents. The strong positive correlation confirms that set size adapts to per-item uncertainty.}
\label{fig:synth_entropy}
\end{figure}

\subsection{Canonicalization effect (Proposition~\ref{prop:canon_variance})}
We simulate fragmented distributions where 6 raw answer variants (total mass $p = 0.60$) map to a single canonical class. Figure~\ref{fig:synth_canon} demonstrates that canonicalization consolidates the fragmented mass, dramatically reducing mode error from the raw case. This validates the amplification result of Proposition~\ref{prop:canon_variance}.

\begin{figure}[h]
\centering
\includegraphics[width=0.6\textwidth]{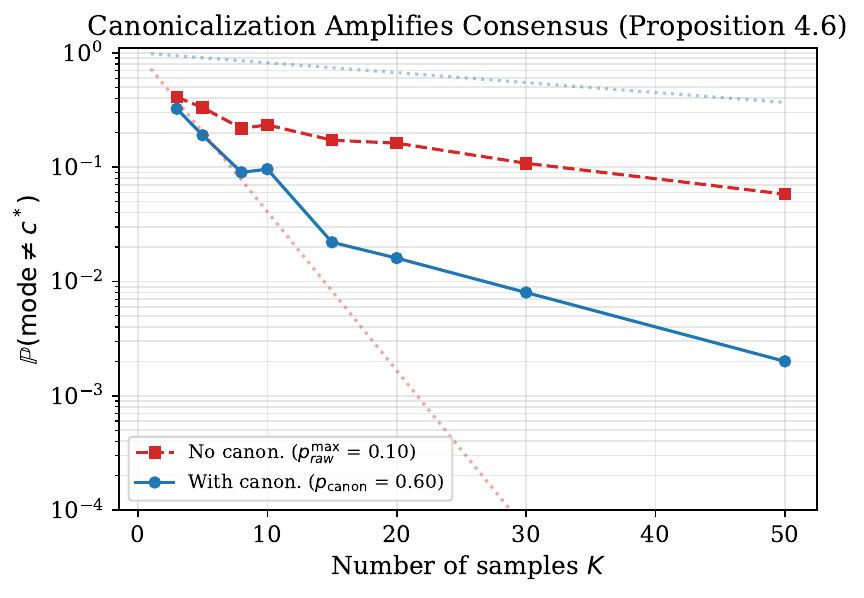}
\caption{Canonicalization effect: mode error with and without canonicalization for fragmented distributions. Canonicalization consolidates probability mass, reducing mode error exponentially as predicted by Proposition~\ref{prop:canon_variance}.}
\label{fig:synth_canon}
\end{figure}

\end{document}